\def\E{\mathbb{E}}
\def\R{\mathbb{R}}
\def\Sig{\varSigma}
\def\wh{\widehat}
\def\tl{\tilde}
\def\Forest{\mathcal{F}}
\def\Tree{\mathbb{T}}
\def\Vars{\mathcal{V}_{\Tree}}
\def\Vobs{\mathcal{V}_{\operatorname{obs}}}
\def\Vhid{\mathcal{V}_{\operatorname{hid}}}
\def\Z{\mathcal{Z}}
\def\Vb{\bar{\mathcal{V}}}
\def\Sb{\bar{\mathcal{S}}}
\def\Subtree{\mathcal{T}}
\def\Supertree{\mathcal{ST}}
\def\Comps{\mathcal{C}}
\def\Roots{\mathcal{R}}
\def\Leaves{\mathcal{L}}
\def\SQT{\mathsf{SpectralQuartetTest}}
\def\Mergeable{\mathsf{Mergeable}}
\def\Relationship{\mathsf{Relationship}}
\def\gammamin{\gamma_{\min}}
\def\gammamax{\gamma_{\max}}
\def\rhomax{\rho_{\max}}
\def\epsmin{\epsilon_{\min}}
\def\Des{{\operatorname{Descendants}_\Tree}}
\def\Children{{\operatorname{Children}_\Tree}}
\DeclareMathOperator{\range}{range}
\DeclareMathOperator{\rank}{rank}
\DeclareMathOperator{\tr}{tr}
\newcommand\A[1]{\ensuremath{A_{(#1)}}}
\newcommand\C[1]{\ensuremath{C_{(#1)}}}
\def\veps{\varepsilon}
\def\vsig{\varsigma}
\def\Lb{\bar{\mathcal{L}}}
\def\Det{{\textstyle\operatorname{det}}}
\newcommand\thirdlevel[1]{\medskip\noindent {\bf #1}}
\newtheorem{theorem}{Theorem}
\newtheorem{lemma}{Lemma}
\newtheorem{claim}{Claim}
\newtheorem{proposition}{Proposition}
\theoremstyle{remark}
\newtheorem{remark}{Remark}
\theoremstyle{definition}
\newtheorem{condition}{Condition}
\title{Spectral Methods for Learning Multivariate Latent Tree Structure}
\author[1]{Animashree Anandkumar}
\author[2]{Kamalika Chaudhuri}
\author[3]{Daniel Hsu}
\author[3,4]{Sham M.~Kakade}
\author[5]{\mbox{Le Song}}
\author[6]{Tong Zhang}
\affil[1]{Department of Electrical Engineering and Computer Science, UC Irvine}
\affil[2]{Department of Computer Science and Engineering, UC San Diego}
\affil[3]{Microsoft Research New England}
\affil[4]{Department of Statistics, Wharton School, University of Pennsylvania}
\affil[5]{Machine Learning Department, Carnegie Mellon University}
\affil[6]{Department of Statistics, Rutgers University}
\begin{document}

\maketitle

{\def\thefootnote{}%
\newcommand\email[1]{{\small\texttt{#1}}}%
\footnotetext{E-mail:
\email{a.anandkumar@uci.edu},
\email{kamalika@cs.ucsd.edu},
\email{dahsu@microsoft.com},
\email{skakade@microsoft.com},
\email{lesong@cs.cmu.edu},
\email{tzhang@stat.rutgers.edu}}}

\begin{abstract}
This work considers the problem of learning the structure of multivariate
linear tree models, which include a variety of directed tree graphical
models with continuous, discrete, and mixed latent variables such as
linear-Gaussian models, hidden Markov models, Gaussian mixture models, and
Markov evolutionary trees.
The setting is one where we only have samples from certain observed
variables in the tree, and our goal is to estimate the tree structure
(\emph{i.e.}, the graph of how the underlying hidden variables are
connected to each other and to the observed variables).
We propose the Spectral Recursive Grouping algorithm, an efficient and
simple bottom-up procedure for recovering the tree structure from
independent samples of the observed variables.
Our finite sample size bounds for exact recovery of the tree structure
reveal certain natural dependencies on underlying statistical and
structural properties of the underlying joint distribution.
Furthermore, our sample complexity guarantees have no explicit dependence
on the dimensionality of the observed variables, making the algorithm
applicable to many high-dimensional settings.
At the heart of our algorithm is a spectral quartet test for determining
the relative topology of a quartet of variables from second-order
statistics.
\end{abstract}

% Introduction
\section{Introduction}

Graphical models are a central tool in modern machine learning
applications, as they provide a natural methodology for succinctly
representing high-dimensional distributions.
As such, they have enjoyed much success in various AI and machine learning
applications such as natural language processing, speech recognition,
robotics, computer vision, and bioinformatics. 

The main statistical challenges associated with graphical models include
estimation and inference.
While the body of techniques for probabilistic inference in graphical
models is rather rich~\cite{Wainwright&Jordan:08NOW}, current methods for
tackling the more challenging problems of parameter and structure
estimation are less developed and understood, especially in the presence of
latent (hidden) variables.
The problem of parameter estimation involves determining the model
parameters from samples of certain observed variables.
Here, the predominant approach is the expectation maximization (EM)
algorithm, and only rather recently is the understanding of this algorithm
improving~\cite{DS07,kmeans}.
The problem of structure learning is to estimate the underlying graph of
the graphical model.  
In general, structure learning is NP-hard and becomes even more challenging
when some variables are unobserved~\cite{CHM04}.
The main approaches for structure estimation are either greedy or local
search approaches~\cite{Chow&Liu:68IT,FNP99} or, more recently, based on
convex relaxation~\cite{Ravikumar&etal:08Stat}.

This work focuses on learning the structure of multivariate latent tree
graphical models.
Here, the underlying graph is a directed tree (\emph{e.g.}, hidden Markov
model, binary evolutionary tree), and only samples from a set of
(multivariate) observed variables (the leaves of the tree) are available
for learning the structure.
Latent tree graphical models are relevant in many applications, ranging
from computer vision, where one may learn object/scene structure from the
co-occurrences of objects to aid image understanding~\cite{choi_cvpr10}; to
phylogenetics, where the central task is to reconstruct the tree of life
from the genetic material of surviving species~\cite{Durbin:book}.

Generally speaking, methods for learning latent tree structure exploit
structural properties afforded by the tree that are revealed through
certain statistical tests over every choice of four variables in the tree.
These \emph{quartet tests}, which have origins in structural equation
modeling~\cite{Wishart28,Bollen89}, are hypothesis tests of the relative
configuration of four (possibly non-adjacent) nodes/variables in the tree
(see Figure~\ref{fig:topologies}); they are also related to the \emph{four
point condition} associated with a corresponding additive tree metric
induced by the distribution~\cite{Buneman71}.
Some early methods for learning tree structure are based on the use of
\emph{exact} correlation statistics or distance measurements~(\emph{e.g.},
\cite{PT86,SN87}).
Unfortunately, these methods ignore the crucial aspect of estimation error,
which ultimately governs their sample complexity.
Indeed, this (lack of) robustness to estimation error has been quantified
for various algorithms (notably, for the popular Neighbor Joining
algorithm~\cite{ESSW99b,LC06}), and therefore serves as a basis for
comparing different methods.
Subsequent work in the area of mathematical phylogenetics has focused on
the sample complexity of evolutionary tree
reconstruction~\cite{ESSW99a,ESSW99b,Mos04,DMR11}.
The basic model there corresponds to a directed tree over discrete random
variables, and much of the recent effort deals exclusively in the regime
for a certain model parameter (the Kesten-Stigum regime~\cite{KS66}) that
allows for a sample complexity that is polylogarithmic in the number of
leaves, as opposed to polynomial~\cite{Mos04,DMR11}.
Finally, recent work in machine learning has developed structure learning
methods for latent tree graphical models that extend beyond the discrete
distributions of evolutionary trees~\cite{Choi&etal:10JMLR}, thereby
widening their applicability to other problem domains.

\begin{figure}
\begin{center}
\begin{tabular}{cccc}
\begin{tikzpicture}
  [
    scale=0.8,
    observed/.style={circle,inner sep=0.3mm,draw=black,fill=black!20},
    hidden/.style={circle,inner sep=0.3mm,draw=black}
  ]
  \node [observed,name=z1] at (-1,0.5) {$z_1$};
  \node [observed,name=z2] at (-1,-0.5) {$z_2$};
  \node [observed,name=z3] at (1,0.5) {$z_3$};
  \node [observed,name=z4] at (1,-0.5) {$z_4$};
  \node [hidden,name=h] at ($(-1/3,0)$) {$h$};
  \node [hidden,name=g] at ($(1/3,0)$) {$g$};
  \draw [-] (z1) to (h);
  \draw [-] (z2) to (h);
  \draw [-] (z3) to (g);
  \draw [-] (z4) to (g);
  \draw [-] (h) to (g);
\end{tikzpicture}
&
\begin{tikzpicture}
  [
    scale=0.8,
    observed/.style={circle,inner sep=0.3mm,draw=black,fill=black!20},
    hidden/.style={circle,inner sep=0.3mm,draw=black}
  ]
  \node [observed,name=z1] at (-1,0.5) {$z_1$};
  \node [observed,name=z3] at (-1,-0.5) {$z_3$};
  \node [observed,name=z2] at (1,0.5) {$z_2$};
  \node [observed,name=z4] at (1,-0.5) {$z_4$};
  \node [hidden,name=h] at ($(-1/3,0)$) {$h$};
  \node [hidden,name=g] at ($(1/3,0)$) {$g$};
  \draw [-] (z1) to (h);
  \draw [-] (z3) to (h);
  \draw [-] (z2) to (g);
  \draw [-] (z4) to (g);
  \draw [-] (h) to (g);
\end{tikzpicture}
&
\begin{tikzpicture}
  [
    scale=0.8,
    observed/.style={circle,inner sep=0.3mm,draw=black,fill=black!20},
    hidden/.style={circle,inner sep=0.3mm,draw=black}
  ]
  \node [observed,name=z1] at (-1,0.5) {$z_1$};
  \node [observed,name=z4] at (-1,-0.5) {$z_4$};
  \node [observed,name=z2] at (1,0.5) {$z_2$};
  \node [observed,name=z3] at (1,-0.5) {$z_3$};
  \node [hidden,name=h] at ($(-1/3,0)$) {$h$};
  \node [hidden,name=g] at ($(1/3,0)$) {$g$};
  \draw [-] (z1) to (h);
  \draw [-] (z4) to (h);
  \draw [-] (z2) to (g);
  \draw [-] (z3) to (g);
  \draw [-] (h) to (g);
\end{tikzpicture}
&
\begin{tikzpicture}
  [
    scale=0.8,
    observed/.style={circle,inner sep=0.3mm,draw=black,fill=black!20},
    hidden/.style={circle,inner sep=0.3mm,draw=black}
  ]
  \node [observed,name=z1] at (-1,0.5) {$z_1$};
  \node [observed,name=z2] at (-1,-0.5) {$z_4$};
  \node [observed,name=z3] at (1,0.5) {$z_2$};
  \node [observed,name=z4] at (1,-0.5) {$z_3$};
  \node [hidden,name=h] at (0,0) {$h$};
  \draw [-] (z1) to (h);
  \draw [-] (z2) to (h);
  \draw [-] (z3) to (h);
  \draw [-] (z4) to (h);
\end{tikzpicture}
\\
$\{\{z_1,z_2\},\{z_3,z_4\}\}$
& $\{\{z_1,z_3\},\{z_2,z_4\}\}$
& $\{\{z_1,z_4\},\{z_2,z_3\}\}$
& $\{\{z_1,z_2,z_3,z_4\}\}$
\vspace{1mm}
\\
(a) & (b) & (c) & (d)
\end{tabular}
\vspace{-3mm}
\end{center}
\caption{The four possible (undirected) tree topologies over leaves
$\{z_1,z_2,z_3,z_4\}$.}
\label{fig:topologies}
%\vspace{-4mm}
\end{figure}

This work extends beyond previous studies, which have focused on latent
tree models with either discrete or scalar Gaussian variables, by directly
addressing the multivariate setting where hidden and observed nodes may be
random vectors rather than scalars.
The generality of our techniques allows us to handle a much wider class of
distributions than before, both in terms of the conditional independence
properties imposed by the models (\emph{i.e.}, the random vector associated
with a node need not follow a distribution that corresponds to a tree
model), as well as other characteristics of the node distributions
(\emph{e.g.}, some nodes in the tree could have discrete state spaces and
others continuous, as in a Gaussian mixture model).

We propose the \emph{Spectral Recursive Grouping} algorithm for learning
multivariate latent tree structure.
The algorithm has at its core a multivariate \emph{spectral quartet test},
which extends the classical quartet tests for scalar variables by applying
spectral techniques from multivariate statistics (specifically canonical
correlation analysis~\cite{Bartlett38,MW80}).
Spectral methods have enjoyed recent success in the context of parameter
estimation~\cite{MR06,HKZ09,GordonHMM,HMM_kernel}; our work shows that they
are also useful for structure learning.
We use the spectral quartet test in a simple modification of the recursive
grouping algorithm of~\cite{Choi&etal:10JMLR} to perform the tree
reconstruction.
The algorithm is essentially a robust method for reasoning about the
results of quartet tests (viewed simply as hypothesis tests); the tests
either confirm or reject hypotheses about the relative topology over
quartets of variables.
By carefully choosing which tests to consider and properly interpreting
their results, the algorithm is able to recover the correct latent tree
structure (with high probability) in a provably efficient manner, in terms
of both computational and sample complexity.
The recursive grouping procedure is similar to the \emph{short quartet
method} from phylogenetics~\cite{ESSW99b}, which also guarantees efficient
reconstruction in the context of evolutionary trees.
However, our method and analysis applies to considerably more general
high-dimensional settings; for instance, our sample complexity bound is
given in terms of natural correlation conditions that generalize the more
restrictive \emph{effective depth} conditions of previous
works~\cite{ESSW99b,Choi&etal:10JMLR}.
Finally, we note that while we do not directly address the question of
parameter estimation, provable parameter estimation methods may derived
using the spectral techniques from~\cite{MR06,HKZ09}.

% Preliminaries
\section{Preliminaries} \label{section:prelim}

% \subsection{Notation}
% %
% %%\kc{A place where we can put some random notation etc that we use in this paper}
% %%
% For any $a \in \R$, we let $[a]_+$ denote $\max\{0,a\}$.

% \djh{unless there is a lot more notation, we should just define this
% where it's used}

\subsection{Latent variable tree models}

Let $\Tree$ be a connected, directed tree graphical model with leaves
$\Vobs := \{ x_1, x_2, \dotsc, x_n \}$ and internal nodes $\Vhid := \{ h_1,
h_2, \dotsc, h_m \}$ such that every node has at most one parent.
The leaves are termed the \emph{observed variables} and the internal nodes
\emph{hidden variables}.
Note that all nodes in this work generally correspond to multivariate
random vectors; we will abuse terminology and still refer to these random
vectors as random variables.
For any $h \in \Vhid$, let $\Children(h) \subseteq \Vars$ denote the
children of $h$ in $\Tree$.

Each observed variable $x \in \Vobs$ is modeled as random vector in $\R^d$,
and each hidden variable $h \in \Vhid$ as a random vector in $\R^k$.
The joint distribution over all the variables $\Vars := \Vobs \cup \Vhid$
is assumed satisfy conditional independence properties specified by the
tree structure over the variables.
Specifically, for any disjoint subsets $V_1, V_2, V_3 \subseteq \Vars$ such
that $V_3$ separates $V_1$ from $V_2$ in $\Tree$, the variables in $V_1$
are conditionally independent of those in $V_2$ given $V_3$.
%Specifically, if $\Neigh(v)$ are the neighbors of $v$ in $\Tree$, then $v$
%is conditionally independent of $\Vars \setminus (\Neigh(v) \cup \{v\})$
%given $\Neigh(v)$.

\subsection{Structural and distributional assumptions}

The class of models considered are specified by the following structural and
distributional assumptions.
\begin{condition}[Linear conditional means] \label{cond:linear}
Fix any hidden variable $h \in \Vhid$.
For each hidden child $g \in \Children(h) \cap \Vhid$, there
exists a matrix $\A{g|h} \in \R^{k \times k}$ such that
\[ \E[g|h] = \A{g|h} h ; \]
and for each observed child $x \in \Children(h) \cap \Vobs$,
there exists a matrix $\C{x|h} \in \R^{d \times k}$ such that
\[ \E[x|h] = \C{x|h} h . \]
\end{condition}
We refer to the class of tree graphical models satisfying
Condition~\ref{cond:linear} as \emph{linear tree models}.
Such models include a variety of continuous and discrete tree distributions
(as well as hybrid combinations of the two, such as Gaussian mixture
models) which are widely used in practice.
Continuous linear tree models include linear-Gaussian models and Kalman
filters.
In the discrete case, suppose that the observed variables take on $ d$
values, and hidden variables take $k$ values.
Then, each variable is represented by a binary vector in $\{ 0, 1 \}^s$,
where $s=d$ for the observed variables and $s=k$ for the hidden variables
(in particular, if the variable takes value $i$, then the corresponding
vector is the $i$-th coordinate vector), and any conditional distribution
between the variables is represented by a linear relationship.
Thus, discrete linear tree models include discrete hidden Markov
models~\cite{HKZ09} and Markovian evolutionary trees~\cite{MR06}.

In addition to the linearity, the following conditions are assumed in
order to recover the hidden tree structure.
For any matrix $M$, let $\sigma_t(M)$ denote its $t$-th largest singular
value.
\begin{condition}[Rank condition] \label{cond:full-rank}
The variables in $\Vars = \Vhid \cup \Vobs$ obey the following rank
conditions.
\begin{enumerate}
\item For all $h \in \Vhid$, $\E[hh^\top]$ has rank $k$ (\emph{i.e.},
$\sigma_k(\E[hh^\top]) > 0$).

\item For all $h \in \Vhid$ and hidden child $g \in \Children(h) \cap
\Vhid$, $\A{g|h}$ has rank $k$.

\item For all $h \in \Vhid$ and observed child $x \in \Children(h) \cap
\Vobs$, $\C{x|h}$ has rank $k$.

\end{enumerate}
\end{condition}
The rank condition is a generalization of parameter identifiability
conditions in latent variable models~\cite{Allman:09Stat,MR06,HKZ09} which
rules out various (provably) hard instances in discrete variable
settings~\cite{MR06}. 
\iffalse
%SK: I found this confusing.
Here, we are not concerned with parameter estimation, and it is possible to
avoid this condition given a consistent estimator of the rank of covariance
matrices.
However, this introduces additional complexity in the multivariate 
quartet test; hence for simplicity and clarity, we shall limit our attention
to the full-rank case.
\fi

%\kc{Discuss the rank condition, why it is needed, can it be relaxed and so on. Do we say relaxation is dealt with in the final version of the paper?}
%
%{\textcolor{blue}{TZ: I think non-full-rank case can be handled. However, one problem I think is that this definition 2 doesn't match the algorithm where in a quartet test for supernodes, the hidden node may not be a neighbor of the observed node. This isn't an error, just some mismatch in the definition and how lemmas are used.}}

\begin{condition}[Non-redundancy condition] \label{cond:non-redundancy}
Each hidden variable has at least three neighbors.
Furthermore, there exists $\rhomax^2 > 0$ such that for each pair of
distinct hidden variables $h,g \in \Vhid$,
\[
\frac{\det(\E[hg^{\top}])^2}{\det(\E[hh^\top]) \det(\E[gg^\top])}
\leq \rhomax^2 < 1 .
\]
\end{condition}
The requirement for each hidden node to have three neighbors is natural;
otherwise, the hidden node can be eliminated.
The quantity $\rhomax$ is a natural multivariate generalization of
correlation.
First, note that $\rhomax \leq 1$, and that if $\rhomax=1$ is achieved with
some $h$ and $g$, then $h$ and $g$ are completely correlated, implying the
existence of a deterministic map between hidden nodes $h$ and $g$; hence
simply merging the two nodes into a single node $h$ (or $g$) resolves this
issue.
Therefore the non-redundancy condition simply means that any two hidden
nodes $h$ and $g$ cannot be further reduced to a single node. 
Clearly, this condition is necessary for the goal of identifying the
correct tree structure, and it is satisfied as soon as $h$ and $g$ have
limited correlation in just a single direction.
Previous works~\cite{PT86,Pearl:book} show that an analogous condition
ensures identifiability for \emph{general} latent tree models (and in fact,
the conditions are identical in the Gaussian case).
Condition~\ref{cond:non-redundancy} is therefore a generalization of this
condition suitable for the multivariate setting.

Our learning guarantees also require a correlation condition that
generalize the explicit depth conditions considered in the phylogenetics
literature~\cite{ESSW99b,MR06}.
To state this condition, first define $\Forest_h$ to be the set of subtrees
of that remain after a hidden variable $h \in \Vhid$ is removed from
$\Tree$ (see Figure~\ref{fig:forest}).
Also, for any subtree $\Subtree'$ of $\Tree$, let $\Vobs[\Subtree']
\subseteq \Vobs$ be the observed variables in $\Subtree'$.

%\begin{wrapfigure}{r}{0.35\textwidth}
\begin{figure}%[b]
%\vspace{-7mm}
\begin{center}
\includegraphics[width=0.35\textwidth]{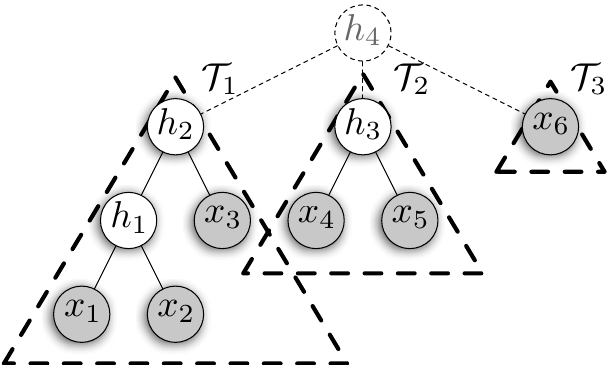}
\end{center}
%\vspace{-3mm}
\caption{Set of trees $\Forest_{h_4} = \{ \Subtree_1, \Subtree_2, \Subtree_3 \}$
obtained if $h_4$ is removed.
%We have
%$\Vobs[\Tree_1] = \{ x_1, x_2, x_3 \}$,
%$\Vobs[\Tree_2] = \{ x_4, x_5 \}$, and
%$\Vobs[\Tree_3] = \{ x_6 \}$.%
}
\label{fig:forest}
%\vspace{-3mm}
\end{figure}
%\end{wrapfigure} 

\begin{condition}[Correlation condition] \label{cond:correlation}
There exists $\gammamin > 0$ such that for all hidden variables $h \in
\Vhid$ and all triples of subtrees $\{\Subtree_1,\Subtree_2,\Subtree_3\}
\subseteq \Forest_h$ in the forest obtained if $h$ is removed from $\Tree$,
\begin{equation*}
\max_{x_1 \in \Vobs[\Subtree_1], x_2 \in \Vobs[\Subtree_2], x_3 \in
\Vobs[\Subtree_3]}
\min_{\{i,j\} \subset \{1,2,3\}}
\sigma_k(\E[x_ix_j^\top]) \geq \gammamin
.
\end{equation*}
\end{condition}
The quantity $\gammamin$ is related to the \emph{effective depth} of
$\Tree$, which is the maximum graph distance between a hidden variable and
its closest observed variable~\cite{ESSW99b,Choi&etal:10JMLR}.
The effective depth is at most logarithmic in the number of variables (as
achieved by a complete binary tree), though it can also be a constant if every
hidden variable is close to an observed variable (\emph{e.g.}, in a hidden
Markov model, the effective depth is $1$, even though the true depth, or
diameter, is $m+1$).
If the matrices giving the (conditionally) linear relationship between
neighboring variables in $\Tree$ are all well-conditioned, then $\gammamin$
is at worst exponentially small in the effective depth, and therefore at
worst polynomially small in the number of variables.
%\begin{proposition}
%Suppose that Condition~\ref{cond:linear} holds and that $\E[hh^\top] = I$
%for all $h \in \Vhid$.
%Furthermore, suppose that there exists $\kappa > 0$ such that for all $h
%\in \Vhid$  and all subtrees $\Subtree \in \Forest_h$, there exists $x \in
%\Vobs[\Subtree]$ such that $\sigma_k(\E[xh^\top]) \geq \kappa$.
%Then Condition~\ref{cond:correlation} holds with $\gammamin = \kappa^2$.
%\end{proposition}

Finally, also define
\[ \gammamax
:= \max_{\{x_1,x_2\} \subseteq \Vobs} \{ \sigma_1(\E[x_1x_2^\top]) \}
%= \max_{\{x_1,x_2\} \subseteq \Vobs} \{ \|\E[x_1x_2^\top]\| \}
\]
to be the largest spectral norm of any second-moment matrix between
observed variables. Note $\gammamax\leq 1$ in the discrete case, and,
in the continuous case, $\gammamax\leq 1$ if each observed random
vector is in isotropic position.

In this work, the Euclidean norm of a vector $x$ is denoted by $\|x\|$, and
the (induced) spectral norm of a matrix $A$ is denoted by $\|A\|$,
\emph{i.e.}, $\|A\| := \sigma_1(A) = \sup \{ \|Ax\| \colon \|x\|=1\}$.

% Preliminaries
%\input{quartet-main_results}

% Spectral quartet tests
\section{Spectral quartet tests} \label{sec:quartettest}

This section describes
the core of our learning algorithm, a spectral quartet test that determines
topology of the subtree induced by four observed
variables $\{ z_1, z_2, z_3, z_4 \}$.
There are four possibilities for the induced subtree, as shown in
Figure~\ref{fig:topologies}.
Our quartet test either returns the correct induced subtree among
possibilities in Figure~\ref{fig:topologies}(a)--(c); or it outputs $\bot$ to indicate abstinence.
If the test returns $\bot$, then no guarantees are provided on the
induced subtree topology. If it does return a subtree, then the output
is guaranteed to be the correct induced subtree (with
high probability). 
%However, when it returns a non-$\bot$ result, then we can guarantee (with
%high probability) that it has determined the correct induced subtree.
%This is akin to limiting the false positive rate in classical hypothesis
%testing.

%We first describe the basic quartet test used by our learning algorithm.
%Let $\Z := \{z_1,z_2,z_3,z_4\}$ be four random vectors, and suppose that
%there is a linear tree model with $\Z$ as leaves that describes
%conditional independence among the $z_i$s.
%Given independent samples from the joint distribution over $\Z$, the goal of the quartet
%test is to either (i) infer the correct tree topology over nodes in $\Z$
%among the three possibilities (shown in Figure~\ref{fig:topologies}(a)-(c)), or
%(ii) abstain (by outputting $\bot$).

%\djh{point out difference from standard quartet test -- i.e. one-sided
%failure; similar to ``short quartet test''}

%\subsection{The spectral quartet test} \label{sec:quartetalgo}

The quartet test proposed is described in Algorithm~\ref{alg:quartettest}
($\SQT$).
The notation $[a]_{+}$ denotes $\max\{0, a\}$ and $[t]$ (for an
integer $t$) denotes the set $\{1,2,\dotsc,t\}$.

The quartet test is defined with respect to four observed variables $\Z :=
\{z_1,z_2,z_3,z_4\}$.
For each pair of variables $z_i$ and $z_j$, it takes as input an empirical
estimate $\hat \Sig_{i,j}$ of the second-moment matrix $\E[z_i
z_j^{\top}]$, and confidence bound parameters $\Delta_{i,j}$ 
%for each $\{i,j\}$; 
which are functions of $N$, the number of
samples used to compute the $\hat \Sig_{i,j}$'s, a confidence parameter $\delta$, and of properties of the distributions of $z_i$ and
$z_j$. In practice, one uses a single threshold $\Delta$ for all
pairs, which is tuned by the algorithm. Our theoretical analysis also
applies to this case. 
%\kc{I have grave doubts about the two sentences above.}
%\kc{maybe some more discussion here?}
The output of the test is either $\bot$ or a \emph{pairing} of the
variables $\{\{z_i,z_j\},\{z_{i'},z_{j'}\}\}$.
For example, if the output is the pairing is $\{\{z_1, z_2\}, \{z_3, z_4\}\}$, then 
Figure~\ref{fig:topologies}(a) is the output topology.

\begin{algorithm}[t]
\caption{$\SQT$ on observed variables $\{z_1,z_2,z_3,z_4\}$.}
\label{alg:quartettest}
\begin{algorithmic}[1]
%\STATE {\bf Input}: Four random vectors $\Z = \{ z_1, z_2, z_3, z_4
%\}$; for each pair of variables $z_i$ and $z_j$, where $i, j \in \{1, \ldots, 4\}$, $\hat
%\Sig_{ij}$, empirical estimates of the second-moment matrices $\Sig_{ij} = \E[z_i z_j^{\top}]$ computed over $N$ independent samples from $\Z$; 
%confidence bound thresholds $\thresh{N,\delta}_{ij} > 0$ for each pair $\{i,j\}$.

%\STATE {\bf Input}: 

\REQUIRE
For each pair $\{i,j\} \subset \{1,2,3,4\}$,
an empirical estimate $\hat \Sig_{i,j}$ of the second-moment matrix $\E[z_i
z_j^{\top}]$ and a corresponding confidence parameter $\Delta_{i,j} > 0$.

%for each pair of variables $z_i$ and $z_j$, where $i, j \in \{1, \ldots, 4\}$, $\hat
%\Sig_{ij}$, empirical estimates of the second-moment matrices $\Sig_{ij} = \E[z_i z_j^{\top}]$ computed over $N$ independent samples from $\Z$; 
%confidence bound thresholds $\thresh{N,\delta}_{ij} > 0$ for each pair $\{i,j\}$.

%\STATE {\bf Output}:
\ENSURE Either a pairing $\{\{z_i,z_j\},\{z_{i'},z_{j'}\}\}$ or $\bot$.

%\STATE For each $\{i, j\} \subset \{1,2,3,4\}$, compute
%$\{\sigma_s(\hat\Sig_{i,j}) \colon s \in [k]\}$,
%the $k$ largest singular values of $\hat\Sig_{i,j}$.
 
%\STATE Let $\hat\Sig_{ij} := N^{-1} \sum_{t=1}^N z_i^{(t)} z_j^{(t)\top}$
%(where $(z_1^{(t)},z_2^{(t)},z_3^{(t)},z_4^{(t)})$ is the $t$-th sample
%from the distribution) be the empirical estimate of $\Sig_{ij} := \E[z_i
%z_j^\top]$.

%      \item Let $\hat{U}_{ij} \in \R^{d \times k}$ and $\hat{V}_{ij} \in
%      \R^{d \times k}$ be matrices of orthonormal left and right singular
%      vectors corresponding to the largest $k$ singular values of
%      $\hat\Sig_{ij}$.
%
%      \item Let $\hat{D}_{ij} := |\det(\hat{U}_{ij}^\top \hat\Sig_{ij}
%      \hat{V}_{ij})|$.
%

\IFPARTS{there exists a partition of $\{z_1,z_2,z_3,z_4\} = \{z_i,z_j\} \cup
    \{z_{i'},z_{j'}\}$ such that
    %\begin{multline*}
    %  \prod_{s=1}^k [\sigma_s(\hat\Sig_{ij}) - \thresh{N,\delta}_{ij}]_+
    %  [\sigma_s(\hat\Sig_{i'j'}) - \thresh{N,\delta}_{i'j'}]_+
    %  \\
    %   >
    %  \prod_{s=1}^k (\sigma_s(\hat\Sig_{i'j}) + \thresh{N,\delta}_{i'j})
    %  (\sigma_s(\hat\Sig_{ij'}) + \thresh{N,\delta}_{ij'})
    %\end{multline*}
    %\begin{multline*}
    \[
     \prod_{s=1}^k
     [\sigma_s(\hat\Sig_{i,j}) - \Delta_{i,j}]_+
     [\sigma_s(\hat\Sig_{i',j'}) - \Delta_{i',j'}]_+
     >
     \prod_{s=1}^k
     (\sigma_s(\hat\Sig_{i',j}) + \Delta_{i',j})
     (\sigma_s(\hat\Sig_{i,j'}) + \Delta_{i,j'})\]
     }
    %\end{multline*}
    {return the pairing $\{\{z_i,z_j\}, \{z_{i'},z_{j'}\}\}$.}

\ELSEPART{return $\bot$.}

\end{algorithmic}
\end{algorithm}
Even though the configuration in Figure~\ref{fig:topologies}(d) is a
possibility, the spectral quartet test never returns
$\{\{z_1,z_2,z_3,z_4\}\}$, as there is no correct
pairing of $\Z$.
The topology $\{\{z_1,z_2,z_3,z_4\}\}$ can be viewed as a degenerate case
of $\{\{z_1,z_2\},\{z_3,z_4\}\}$ (say) where the hidden variables $h$ and
$g$ are deterministically identical, and
Condition~\ref{cond:non-redundancy} fails to hold with respect to $h$ and
$g$.

%Even though the configuration in Figure~\ref{fig:topologies}(d) is a
%possibility, the spectral quartet test never returns
%$\{\{1,2,3,4\}\}$, as this topology is a case where there is no correct
%pairing of the random vectors in $\Z$.
%The topology $\{\{1,2,3,4\}\}$ can be viewed as a degenerate
%case of, say, $\{\{1,2\},\{3,4\}\}$ where the hidden variables $h$ and $g$
%are deterministically identical.
%In this case, Condition~\ref{cond:non-redundancy} would fail to hold with
%respect to $h$ and $g$.
%
%\kc{mention something about just one inequality is sufficient}

\subsection{Properties of the spectral quartet test}

\thirdlevel{With exact second moments:}
The spectral quartet test is motivated by the following lemma, which shows
the relationship between the singular values of second-moment matrices of
the $z_i$'s and the induced topology among them in the latent tree.
Let $\Det_k(M) := \prod_{s=1}^k \sigma_s(M)$ denote the product of the $k$
largest singular values of a matrix $M$.

\begin{lemma}[Perfect quartet test]\label{lemma:spectral-properties}
Suppose that the observed variables $\Z = \{ z_1, z_2, z_3, z_4 \}$ have the
true induced tree topology shown in Figure~\ref{fig:topologies}(a), and the
tree model satisfies Condition~\ref{cond:linear} and
Condition~\ref{cond:full-rank}.
Then
\begin{gather}
\frac{\Det_k(\E[z_1z_3^\top]) \Det_k(\E[z_2z_4^\top])}
{\Det_k(\E[z_1z_2^\top]) \Det_k(\E[z_3z_4^\top])}
=
\frac{\Det_k(\E[z_1z_4^\top]) \Det_k(\E[z_2z_3^\top])}
{\Det_k(\E[z_1z_2^\top]) \Det_k(\E[z_3z_4^\top])}
=
\frac{\det(\E[hg^\top])^2}{\det(\E[hh^\top]) \det(\E[gg^\top])}
\leq 1
\label{eqn:spectral-properties1} \\
\text{and} \quad
{\Det_k(\E[z_1z_3^\top]) \Det_k(\E[z_2z_4^\top])}
= {\Det_k(\E[z_1z_4^\top]) \Det_k(\E[z_2z_3^\top])}
.
\nonumber
\end{gather}
%Then
%\begin{align}
%\frac{\prod_{s=1}^k \sigma_s(\E[z_1z_3^\top]) \sigma_s(\E[z_2z_4^\top])}{\prod_{s=1}^k \sigma_s(\E[z_1z_2^\top]) \sigma_s(\E[z_3z_4^\top])}
%& =
%\frac{\prod_{s=1}^k \sigma_s(\E[z_1z_4^\top]) \sigma_s(\E[z_2z_3^\top])}{\prod_{s=1}^k \sigma_s(\E[z_1z_2^\top]) \sigma_s(\E[z_3z_4^\top])}
%=
%\frac{\det(\E[hg^\top])^2}{\det(\E[hh^\top]) \det(\E[gg^\top])}
%\leq 1
%\label{eqn:spectral-properties1} \\
%\text{and} \quad
%{\prod_{s=1}^k \sigma_s(\E[z_1z_3^\top]) \sigma_s(\E[z_2z_4^\top])} & = {\prod_{s=1}^k \sigma_s(\E[z_1z_4^\top]) \sigma_s(\E[z_2z_3^\top])} .
%%\label{eqn:spectral-properties2}
%\nonumber
%\end{align}
\end{lemma}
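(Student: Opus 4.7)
The plan is to express all six second-moment matrices in a common factored form using conditional independence and Condition~\ref{cond:linear}, apply a multiplicative identity for $\Det_k$ to simplify the ratios, and finally bound the resulting expression by Cauchy--Schwarz.

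First, the Markov structure of topology (a) gives $z_1 \perp z_2 \mid h$, $z_3 \perp z_4 \mid g$, and $z_i \perp z_j \mid h$ (equivalently, $\mid g$) for $(i,j)\in\{1,2\}\times\{3,4\}$. Combining these with Condition~\ref{cond:linear} and iterated expectations (noting in particular that $\E[hg^\top] = \E[h\,\E[g\mid h]^\top] = \E[hh^\top]\A{g|h}^\top$ when $h$ is the parent of $g$, and symmetrically otherwise) yields
\begin{align*}
\E[z_1z_2^\top] &= \C{z_1|h}\,\E[hh^\top]\,\C{z_2|h}^\top, \\
\E[z_3z_4^\top] &= \C{z_3|g}\,\E[gg^\top]\,\C{z_4|g}^\top, \\
\E[z_iz_j^\top] &= \C{z_i|h}\,\E[hg^\top]\,\C{z_j|g}^\top, \quad (i,j)\in\{1,2\}\times\{3,4\}.
\end{align*}

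Next I would establish the auxiliary identity that for any $M = PBQ^\top$ with $P,Q \in \R^{d\times k}$ of rank $k$ and $B \in \R^{k\times k}$,
\[ \Det_k(M) = \Det_k(P)\cdot|\det B|\cdot\Det_k(Q). \]
The nonzero eigenvalues of $MM^\top = PBQ^\top QB^\top P^\top$ coincide with those of the $k\times k$ matrix $BQ^\top QB^\top P^\top P$ (by the standard fact that $AX$ and $XA$ share nonzero spectrum); taking determinants, using $\det(P^\top P)=\Det_k(P)^2$ and $\det(Q^\top Q)=\Det_k(Q)^2$, and extracting a square root gives the identity, with both sides vanishing in the degenerate case $\det B = 0$. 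Applying this to each of the six factorizations above and substituting into the ratio cancels every $\Det_k(\C{z_i|h})$ and $\Det_k(\C{z_j|g})$ (all nonzero by Condition~\ref{cond:full-rank}), leaving
\[ \frac{\Det_k(\E[z_1z_3^\top])\Det_k(\E[z_2z_4^\top])}{\Det_k(\E[z_1z_2^\top])\Det_k(\E[z_3z_4^\top])} = \frac{\det(\E[hg^\top])^2}{\det(\E[hh^\top])\det(\E[gg^\top])}, \]
and an identical computation gives the same value for the $\{1,4\},\{2,3\}$ pairing. This simultaneously yields the chain of equalities in~(\ref{eqn:spectral-properties1}) and the equality in the second display.

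Finally, for the bound by $1$, I would whiten: define $\tilde h := \E[hh^\top]^{-1/2}h$ and $\tilde g := \E[gg^\top]^{-1/2}g$, each with identity second-moment matrix. Cauchy--Schwarz gives $|u^\top\E[\tilde h\tilde g^\top]v| = |\E[(u^\top\tilde h)(v^\top\tilde g)]| \leq 1$ for all unit $u,v$, so every singular value of $\E[\tilde h\tilde g^\top] = \E[hh^\top]^{-1/2}\E[hg^\top]\E[gg^\top]^{-1/2}$ is at most $1$; squaring its determinant gives $\det(\E[hg^\top])^2 \leq \det(\E[hh^\top])\det(\E[gg^\top])$. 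The only mild subtlety is verifying the multiplicative $\Det_k$ identity uniformly in the rank of the middle factor, so that the derivation remains valid even when $\E[hg^\top]$ itself is singular.
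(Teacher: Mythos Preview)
Your multiplicative $\Det_k$ identity and the Cauchy--Schwarz whitening argument for the final inequality are both correct and match the paper exactly.  The gap is in the six factorizations.  You write, for instance, $\E[z_1z_2^\top] = \C{z_1|h}\,\E[hh^\top]\,\C{z_2|h}^\top$, but Condition~\ref{cond:linear} only furnishes the matrix $\C{z_i|h}$ (and the relation $\E[z_i\mid h]=\C{z_i|h}h$) when $z_i$ is a \emph{descendant} of $h$ in the directed tree $\Tree$.  The topology in Figure~\ref{fig:topologies}(a) is the induced \emph{undirected} topology; the root of $\Tree$ may lie on, say, the $z_1$--$h$ segment, in which case $z_1$ is not a descendant of $h$, $\E[z_1\mid h]$ need not be linear in $h$, and $\C{z_1|h}$ is simply undefined.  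Your parenthetical acknowledges the orientation issue for the $h$--$g$ edge but not for the four $z_i$--$h$ and $z_j$--$g$ segments, and Condition~\ref{cond:linear} does not in general survive re-rooting.

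The paper avoids this by first establishing (Proposition~\ref{proposition:metric}) that the function $\mu(u,v)=\log\Det_k(\E[uu^\top]^{-1/2}\E[uv^\top]\E[vv^\top]^{-1/2})$ (omitting the normalization on observed endpoints) is additive along tree paths, proving the additivity one edge orientation at a time; the lemma then drops out by expanding each $\mu(z_i,z_j)$ through $h$ and $g$.  Your direct-factorization route can be repaired in two ways: either do an explicit case analysis on the location of the least common ancestor of $\{z_1,z_2,z_3,z_4\}$ (two cases up to symmetry), or prove the orientation-free identity $\E[z_1z_2^\top]=\E[z_1h^\top]\E[hh^\top]^{-1}\E[hz_2^\top]$ whenever $h$ separates $z_1$ from $z_2$, and then take $\C{z_i|h}:=\E[z_ih^\top]\E[hh^\top]^{-1}$ uniformly.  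The latter identity is true under Conditions~\ref{cond:linear} and~\ref{cond:full-rank}, but it is precisely the step your write-up asserts rather than justifies.
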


This lemma shows that given the true second-moment matrices and assuming
Condition~\ref{cond:non-redundancy}, the inequality
in~\eqref{eqn:spectral-properties1} becomes strict and thus can be used to
deduce the correct topology:
the correct pairing is $\{\{z_i,z_j\},\{z_{i'},z_{j'}\}\}$ if and only if
\[
\Det_k(\E[z_iz_j^\top]) \Det_k(\E[z_{i'}z_{j'}^\top])
> \Det_k(\E[z_{i'}z_j^\top]) \Det_k(\E[z_iz_{j'}^\top])
.
\]

\thirdlevel{Reliability:}
%Therefore, if the singular values of $\Sig_{ij}$ were known exactly, then
%the correct pairing could be determined by checking for this strict
%inequality, which is what Algorithm~\ref{alg:quartettest} aims to do.
The next lemma shows that even if the singular values of $\E[z_iz_j^\top]$
are not known exactly, then with valid confidence intervals (that contain
these singular values) a robust test can be constructed which is reliable
in the following sense: if it does not output $\bot$, then the output
topology is indeed the correct topology.
\begin{lemma}[Reliability]\label{lemma:reliability}
Consider the setup of Lemma~\ref{lemma:spectral-properties}, and suppose
that Figure~\ref{fig:topologies}(a) is the correct topology.
If for all pairs $\{z_i, z_j\} \subset \Z$ and all $s \in [k]$,
$\sigma_s(\hat\Sig_{i,j}) - \Delta_{i,j}
\leq
\sigma_s(\E[z_iz_j^\top])
\leq
\sigma_s(\hat\Sig_{i,j}) + \Delta_{i,j}$,
and if $\SQT$ returns a pairing
$\{\{z_i,z_j\},\{z_{i'},z_{j'}\}\}$, then $\{\{z_i,z_j\},\{z_{i'},z_{j'}\}\} =
\{\{z_1,z_2\},\{z_3,z_4\}\}$.
\end{lemma}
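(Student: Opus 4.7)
The plan is a direct contradiction argument based on Lemma~\ref{lemma:spectral-properties}. Assume the true topology is the one in Figure~\ref{fig:topologies}(a) but $\SQT$ returns one of the two incorrect pairings, namely $\{\{z_1,z_3\},\{z_2,z_4\}\}$ or $\{\{z_1,z_4\},\{z_2,z_3\}\}$; in each case I will derive a strict inequality between two quantities that Lemma~\ref{lemma:spectral-properties} forces to be equal.

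First I would unpack the confidence-interval hypothesis into the two one-sided bounds
\[
[\sigma_s(\hat\Sig_{i,j}) - \Delta_{i,j}]_+ \ \leq\ \sigma_s(\E[z_iz_j^\top]) \ \leq\ \sigma_s(\hat\Sig_{i,j}) + \Delta_{i,j}
\]
for every pair $\{i,j\}\subset\{1,2,3,4\}$ and every $s\in[k]$ (the thresholding on the left is harmless since true singular values are nonnegative). Multiplying these bounds over $s=1,\dotsc,k$ and then over the two pairs in a proposed pairing, the left-hand side of the test inequality becomes a lower bound on $\Det_k(\E[z_iz_j^\top])\Det_k(\E[z_{i'}z_{j'}^\top])$, while the right-hand side becomes an upper bound on $\Det_k(\E[z_{i'}z_{j}^\top])\Det_k(\E[z_iz_{j'}^\top])$.

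Now the case analysis. Suppose $\SQT$ outputs the pairing $\{\{z_1,z_3\},\{z_2,z_4\}\}$. Chaining the test inequality with the bounds just derived,
\[
\Det_k(\E[z_1z_3^\top])\,\Det_k(\E[z_2z_4^\top]) \ \geq\ \text{LHS} \ >\ \text{RHS} \ \geq\ \Det_k(\E[z_1z_4^\top])\,\Det_k(\E[z_2z_3^\top]).
\]
But Lemma~\ref{lemma:spectral-properties}, applied to the true topology~(a), asserts exactly that these two products are equal, giving a contradiction. A symmetric argument rules out the pairing $\{\{z_1,z_4\},\{z_2,z_3\}\}$: it would yield the reverse strict inequality between the same two equal quantities. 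Since $\SQT$ can only output one of the three pairings or $\bot$, the only pairing it can return is $\{\{z_1,z_2\},\{z_3,z_4\}\}$.

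There is no genuine obstacle here; the proof is a short plug-and-chug. The only subtle points are (i) correctly matching each $[\sigma - \Delta]_+$ factor with an upper bound on the corresponding true singular value (and each $\sigma + \Delta$ factor with a lower bound), and (ii) observing that Lemma~\ref{lemma:spectral-properties} gives \emph{exact} equality of the two ``crossing'' determinant products, which is precisely what converts the strict inequality postulated by the test into a contradiction; Condition~\ref{cond:non-redundancy} plays no role in this reliability step, only in ensuring that the test is not vacuous.
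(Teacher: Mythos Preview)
Your approach is essentially the same as the paper's: chain the confidence-interval bounds through the strict inequality defining the $\SQT$ criterion to obtain a strict inequality between true $\Det_k$ products, then invoke Lemma~\ref{lemma:spectral-properties} to rule out the wrong pairings.

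There is one small gap in your case analysis. When $\SQT$ returns $\{\{z_1,z_3\},\{z_2,z_4\}\}$, the labeling $(i,j,i',j')$ that witnessed the test inequality is not unique: the right-hand side pair $\{\{z_{i'},z_j\},\{z_i,z_{j'}\}\}$ could be either $\{\{z_1,z_4\},\{z_2,z_3\}\}$ (the case you treat) \emph{or} the correct pairing $\{\{z_1,z_2\},\{z_3,z_4\}\}$. In the latter sub-case you would deduce $\Det_k(\E[z_1z_3^\top])\Det_k(\E[z_2z_4^\top]) > \Det_k(\E[z_1z_2^\top])\Det_k(\E[z_3z_4^\top])$, and the contradiction comes not from the \emph{equality} of the two crossing products but from the \emph{inequality} in~\eqref{eqn:spectral-properties1}. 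So you need both conclusions of Lemma~\ref{lemma:spectral-properties}, not just the equality; your closing remark that ``exact equality \ldots\ is precisely what converts the strict inequality \ldots\ into a contradiction'' understates what is used. The paper's proof sidesteps this by phrasing the final step uniformly (``the above inequality can only hold if \ldots''), which implicitly covers both sub-cases.
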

In other words, the spectral quartet test never returns an incorrect
pairing as long as the singular values of $\E[z_iz_j^\top]$ lie in an
interval of length $2 \Delta_{i,j}$ around the singular values of $\hat
\Sig_{i,j}$.
The lemma below shows how to set
the $\Delta_{i,j}$s as a function of $N$, $\delta$ and properties of the
distributions of $z_i$ and $z_j$ so that this required event holds with
probability at least $1 - \delta$.
We remark that any valid confidence intervals may be used; the one
described below is particularly suitable when the observed variables are
high-dimensional random vectors.
\begin{lemma}[Confidence intervals] \label{lem:confinterval}
Let $\Z = \{ z_1, z_2, z_3, z_4 \}$ be four random vectors.
Let $\|z_i\| \leq M_i$ almost surely, and let $\delta \in (0,1/6)$.
If each empirical second-moment matrix $\hat\Sig_{i,j}$ is computed using
$N$ iid copies of $z_i$ and $z_j$, and if
\begin{align*}
 \bar{d}_{i,j} & :=
\frac{\E[\|z_i\|^2\|z_j\|^2] - \tr(\E[z_iz_j^\top]\E[z_iz_j^\top]^\top)}
{\max\{ \|\E[\|z_j\|^2 z_iz_i^\top]\|, \|\E[\|z_i\|^2 z_jz_j^\top]\| \}} , &%\\
t_{i,j} & := 1.55 \ln(24\bar{d}_{i,j}/\delta) , \\
\lefteqn{
\Delta_{i,j} \geq
\sqrt{\frac{2\max\bigl\{
\bigl\|\E[\|z_j\|^2z_iz_i^\top]\bigr\|,
\bigl\|\E[\|z_i\|^2z_jz_j^\top]\bigr\| \bigr\} t_{i,j}}{N}} +
\frac{M_i M_j t_{i,j}}{3N}
,
}
\end{align*}
then with probability $1 - \delta$,
for all pairs $\{z_i, z_j\} \subset \Z$ and all $s \in [k]$,
\begin{equation} \label{eq:confidence-bounds-test}
\sigma_{s}(\hat \Sig_{i,j}) - \Delta_{i,j} \leq \sigma_s(\E[z_iz_j^\top])
\leq \sigma_s(\hat \Sig_{i,j}) + \Delta_{i,j} 
.
\end{equation}
\end{lemma}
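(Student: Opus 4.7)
The plan is to reduce the lemma to a matrix concentration bound on $\|\hat\Sigma_{i,j}-\E[z_iz_j^\top]\|$ and then apply an intrinsic-dimension matrix Bernstein inequality for rectangular matrices, followed by a union bound over the six pairs $\{i,j\}\subset\{1,2,3,4\}$.

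First, I would invoke Weyl's inequality for singular values: for any matrices $A,B$ of the same dimensions and any $s$, $|\sigma_s(A)-\sigma_s(B)|\leq \|A-B\|$. So to prove \eqref{eq:confidence-bounds-test} simultaneously for all $s\in[k]$ and a particular pair $\{i,j\}$, it suffices to prove the single event $\|\hat\Sigma_{i,j}-\E[z_iz_j^\top]\|\leq \Delta_{i,j}$. A union bound over the $\binom{4}{2}=6$ pairs then reduces the problem to establishing, for each fixed pair, the tail bound $\Pr[\|\hat\Sigma_{i,j}-\E[z_iz_j^\top]\| > \Delta_{i,j}] \leq \delta/6$; the $24=6\cdot 4$ in $t_{i,j}$ absorbs this factor of $6$ together with a constant from the intrinsic-dimension Bernstein bound.

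Next I would write $\hat\Sigma_{i,j}-\E[z_iz_j^\top]=\frac{1}{N}\sum_{n=1}^N X_n$ where $X_n := z_i^{(n)}(z_j^{(n)})^\top - \E[z_iz_j^\top]$ are iid, mean zero, with $\|X_n\|\leq 2M_iM_j$ almost surely. To use a symmetric matrix Bernstein inequality in the rectangular case, I would apply the standard Hermitian dilation $\tilde X_n := \bigl(\begin{smallmatrix}0 & X_n\\ X_n^\top & 0\end{smallmatrix}\bigr)$, whose spectral norm equals $\|X_n\|$. The key quantities needed are (i) the almost-sure bound, which gives the $M_iM_j t_{i,j}/(3N)$ term (after the routine absorption of the constant $2$), and (ii) the matrix variance $\|\sum_n\E[\tilde X_n^2]\|$. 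A short computation shows
\begin{equation*}
\E[X_n X_n^\top] \preceq \E[\|z_j\|^2\, z_iz_i^\top],\qquad
\E[X_n^\top X_n] \preceq \E[\|z_i\|^2\, z_jz_j^\top],
\end{equation*}
so the per-sample matrix variance is bounded by $\max\{\|\E[\|z_j\|^2 z_iz_i^\top]\|,\|\E[\|z_i\|^2 z_jz_j^\top]\|\}$, matching the first term of $\Delta_{i,j}$.

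For the logarithmic factor, I would apply the intrinsic-dimension (effective rank) refinement of the Bernstein inequality (as in Tropp, or Hsu–Kakade–Zhang), which replaces the ambient dimension with $\tr(V)/\|V\|$, where $V$ is the matrix variance. A direct computation gives $\tr(\E[X_nX_n^\top])+\tr(\E[X_n^\top X_n])=2\bigl(\E[\|z_i\|^2\|z_j\|^2]-\tr(\E[z_iz_j^\top]\E[z_iz_j^\top]^\top)\bigr)$, so the effective rank of the dilated variance is exactly $\bar d_{i,j}$ up to a harmless constant absorbed into $24$. The numerical constant $1.55$ in $t_{i,j}$ matches the version of intrinsic-dimension Bernstein that produces a $\sqrt{2\sigma^2 t/N}+R t/(3N)$ tail bound; plugging the variance and range estimates into that tail bound yields precisely the stated $\Delta_{i,j}$.

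The main obstacle is matching the constants: one must be careful that the dilation doubles the variance trace (contributing to the factor inside the log), that the rectangular bound is applied with the correct effective rank, and that $\bar d_{i,j}\geq 1$ so $t_{i,j}$ is well-defined (this follows from $\tr(\E[\|z_j\|^2 z_iz_i^\top])\geq \|\E[\|z_j\|^2 z_iz_i^\top]\|$ and similarly for the other term, combined with $\tr(\E[z_iz_j^\top]\E[z_iz_j^\top]^\top)\leq \E[\|z_i\|^2\|z_j\|^2]$). The rest is a routine bookkeeping of constants inside the Bernstein inequality and the union bound.
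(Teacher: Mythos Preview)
Your proposal is correct and follows essentially the same route as the paper: the paper first proves a spectral-norm concentration bound $\|\hat\Sig_{i,j}-\Sig_{i,j}\|\leq\Delta_{i,j}$ via the Hermitian dilation of $z_iz_j^\top$ and the intrinsic-dimension matrix Bernstein inequality of \cite{HKZ11} (this is their Lemma~\ref{lemma:spectral-bernstein}, with $\bar d_{i,j}$ arising exactly as the trace-over-norm ratio you compute), and then states that Lemma~\ref{lem:confinterval} is immediate from this plus Weyl's theorem and a union bound over the six pairs, with $24=4\cdot 6$ exactly as you identified. The only cosmetic difference is that the paper dilates the uncentered matrix $z_iz_j^\top$ rather than the centered $X_n$, which is why they obtain the range bound $M_iM_j$ directly without the factor of $2$ you flagged.
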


\thirdlevel{Conditions for returning a correct pairing:}
The conditions under which $\SQT$
returns an induced topology (as opposed to $\bot$) are now
provided.
%For simplicity, only the case where $\E[hg^\top]$ is rank $k$ is
%considered; the rank-deficient case is analyzed in the appendix.

%We show that the required width of the confidence intervals for the
%singular values (determined by the $\Delta_{i,j}$) is determined by
%(i) the $k$-th largest singular values of the $\Sig_{ij}$, and (ii) the

An important quantity in this analysis is the level of non-redundancy
between the hidden variables $h$ and $g$.
Let
\begin{equation} \label{eq:rho}
\rho^2 := \frac{\det(\E[hg^\top])^2}{\det(\E[hh^\top]) \det(\E[gg^\top])}
.
\end{equation}
If Figure~\ref{fig:topologies}(a) is the correct induced topology among
$\{z_1,z_2,z_3,z_4\}$, then the smaller $\rho$ is, the greater the
gap between $\Det_k(\E[z_1z_2^\top]) \Det_k(\E[z_3z_4^\top])$ and
either of $\Det_k(\E[z_1z_3^\top]) \Det_k(\E[z_2z_4^\top])$ and
$\Det_k(\E[z_1z_4^\top]) \Det_k(\E[z_2z_3^\top])$.
Therefore, $\rho$ also governs how small the $\Delta_{i,j}$ need to be for
the quartet test to return a correct pairing; this is quantified in
Lemma~\ref{lem:usefulrankk}.
Note that Condition~\ref{cond:non-redundancy} implies $\rho \leq \rhomax <
1$.
\begin{lemma}[Correct pairing] \label{lem:usefulrankk}
Suppose that
(i) the observed variables $\Z = \{ z_1, z_2, z_3, z_4 \}$ have the true
induced tree topology shown in Figure~\ref{fig:topologies}(a);
(ii) the tree model satisfies Condition~\ref{cond:linear},
Condition~\ref{cond:full-rank}, and $\rho < 1$ (where $\rho$ is defined
in~\eqref{eq:rho}),
and (iii) the confidence bounds in~\eqref{eq:confidence-bounds-test} hold
for all $\{i,j\}$ and all $s \in [k]$.
If
\[ \Delta_{i,j} < \frac{1}{8k} \cdot \min\Bigl\{1, \ \frac{1}{\rho}
- 1 \Bigr\} \cdot \min_{\{i,j\}} \{ \sigma_k(\E[z_iz_j^\top]) \} \]
for each pair $\{i,j\}$, then $\SQT$ returns the
correct pairing $\{ \{z_1, z_2\}, \{z_3, z_4\}\}$.
\end{lemma}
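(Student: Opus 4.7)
My plan is to use Lemma~\ref{lemma:spectral-properties} together with the given confidence bounds to verify directly that the strict inequality in Algorithm~\ref{alg:quartettest} holds for the partition $\{\{z_1,z_2\},\{z_3,z_4\}\}$. Combined with Lemma~\ref{lemma:reliability}, which rules out any wrong pairing being returned when the confidence intervals are valid, this means $\SQT$ cannot output $\bot$ and must return the correct pairing.

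Fix notation: let $L := \Det_k(\E[z_1z_2^\top])\Det_k(\E[z_3z_4^\top])$, $\sigma_{\min} := \min_{\{i,j\}} \sigma_k(\E[z_iz_j^\top])$, and $\eta := \frac{1}{8k}\min\{1, 1/\rho-1\}$, so the hypothesis reads $\Delta_{i,j} < \eta \sigma_{\min}$ for every pair. By Lemma~\ref{lemma:spectral-properties}, both $\Det_k(\E[z_1z_3^\top])\Det_k(\E[z_2z_4^\top])$ and $\Det_k(\E[z_1z_4^\top])\Det_k(\E[z_2z_3^\top])$ equal $\rho^2 L$. Since $\Delta_{i,j} < \eta\sigma_{\min} \leq \eta \sigma_s(\E[z_iz_j^\top])$ for every $s \leq k$ and $2\eta < 1$, the confidence bounds give
\[
(1-2\eta)\,\sigma_s(\E[z_iz_j^\top]) \;\leq\; \bigl[\sigma_s(\hat\Sig_{i,j}) - \Delta_{i,j}\bigr]_+ \;\leq\; \sigma_s(\hat\Sig_{i,j}) + \Delta_{i,j} \;\leq\; (1+2\eta)\,\sigma_s(\E[z_iz_j^\top]),
\]
with the $[\cdot]_+$ never clipping. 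Taking products over $s$, the left-hand side of the $\SQT$ inequality for the correct pairing is at least $(1-2\eta)^{2k} L$, while its right-hand side is at most $(1+2\eta)^{2k}\rho^2 L$ (the exact cross product equals $\rho^2 L$ regardless of which of the two wrong pairings the cross corresponds to). It therefore suffices to show $((1-2\eta)/(1+2\eta))^k > \rho$.

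For this final elementary step, I apply Bernoulli's inequality $(1-x)^k \geq 1-kx$ with $x = 4\eta/(1+2\eta)$, reducing the target to $\eta(4k - 2 + 2\rho) < 1 - \rho$. I then split into two regimes matching the $\min$ in the definition of $\eta$. When $\rho \leq 1/2$, the hypothesis yields $\eta < 1/(8k)$, and the reduced inequality rearranges to $\rho \leq (2k+1)/(4k+1)$, which holds for every $k \geq 1$ because $(2k+1)/(4k+1) > 1/2$. When $\rho > 1/2$, the hypothesis yields $\eta < (1-\rho)/(8k\rho)$, and the reduced inequality rearranges to $\rho \geq (2k-1)/(4k-1)$, which holds because $(2k-1)/(4k-1) < 1/2$ for every $k \geq 1$.

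The main technical obstacle is the regime where $\rho$ is close to $1$: there the $\rho^2 L$ gap from Lemma~\ref{lemma:spectral-properties} vanishes, and the confidence slack $\Delta_{i,j}$ must shrink proportionally, which is precisely what the factor $1/\rho - 1$ in the threshold encodes. A coarser estimate such as $(1+2\eta)^k \leq e^{2k\eta}$ would lose a factor of two and fail at the boundary of the two regimes, so the argument depends on the sharper Bernoulli bound $((1-2\eta)/(1+2\eta))^k \geq 1 - 4k\eta/(1+2\eta)$ to close both cases with the single constant $1/(8k)$.
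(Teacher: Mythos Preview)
Your proof is correct and follows the same approach as the paper's: both pass from the confidence bounds to multiplicative errors $(1\pm 2\eta)$ on each singular value (the paper writes $2\eta=\epsilon_0/(4k)$ with $\epsilon_0=\min\{1,1/\rho-1\}$), take the $2k$-fold product, and invoke Lemma~\ref{lemma:spectral-properties} to compare against $\rho^2 L$. The only difference is in the final elementary inequality, where the paper avoids your case split on $\rho$ by bounding the two products separately via $(1-2\eta)^{2k}\geq 1-4k\eta$ and $(1+2\eta)^{2k}\leq 1+8k\eta$ and then using $1+8k\eta\leq 1/\rho$ directly; incidentally, in your two cases you should have $\eta=1/(8k)$ and $\eta=(1-\rho)/(8k\rho)$ (equalities, not strict inequalities), though this does not affect your argument.
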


\section{The Spectral Recursive Grouping algorithm}
\label{sec:rg}

The Spectral Recursive Grouping algorithm, presented as
Algorithm~\ref{alg:rg}, uses the spectral quartet test discussed in the
previous section to estimate the structure of a multivariate latent tree
distribution from iid samples of the observed leaf variables.\footnote{To
simplify notation, we assume that the estimated second-moment matrices
$\wh\Sig_{x,y}$ and threshold parameters $\Delta_{x,y} \geq 0$ for all
pairs $\{x,y\} \subset \Vobs$ are globally defined.
In particular, we assume the spectral quartet tests use these quantities.}
The algorithm is a modification of the recursive grouping (RG) procedure
proposed in~\cite{Choi&etal:10JMLR}.
RG builds the tree in a bottom-up fashion, where the initial working set of
variables are the observed variables.
The variables in the working set always correspond to roots of disjoint
subtrees of $\Tree$ discovered by the algorithm.
(Note that because these subtrees are rooted, they naturally induce
parent/child relationships, but these may differ from those implied by the
edge directions in $\Tree$.)
In each iteration, the algorithm determines which variables in the working
set to combine.
If the variables are combined as siblings, then a new hidden variable is
introduced as their parent and is added to the working set, and its
children are removed.
If the variables are combined as neighbors (parent/child), then the child
is removed from the working set.
The process repeats until the entire tree is constructed.

Our modification of RG uses the spectral quartet tests from
Section~\ref{sec:quartettest} to decide which subtree roots in the current
working set to combine.
Note that because the test may return $\bot$ (a null result), our algorithm
uses the tests to \emph{rule out} possible siblings or neighbors among
variables in the working set---this is encapsulated in the subroutine
$\Mergeable$ (Algorithm~\ref{alg:mergeable}), which tests quartets of
observed variables (leaves) in the subtrees rooted at working set
variables.
For any pair $\{u,v\} \subseteq \Roots$ submitted to the subroutine (along
with the current working set $\Roots$ and leaf sets $\Leaves[\cdot]$):
\begin{itemize}
\item $\Mergeable$ returns false if there is evidence (provided by a
quartet test) that $u$ and $v$ should first be joined with different
variables ($u'$ and $v'$, respectively) before joining with each other; and

\item $\Mergeable$ returns true if no quartet test provides such evidence.

\end{itemize}
The subroutine is also used by the subroutine $\Relationship$
(Algorithm~\ref{alg:relationship}) which determines whether a candidate
pair of variables should be merged as neighbors (parent/child) or as
siblings:
essentially, to check if $u$ is a parent of $v$, it checks if $v$ is a
sibling of each child of $u$.
The use of unreliable estimates of long-range correlations is avoided by
only considering highly-correlated variables as candidate pairs to merge
(where correlation is measured using observed variables in their
corresponding subtrees as proxies).
This leads to a sample-efficient algorithm for recovering the hidden tree
structure.

\begin{algorithm}[t]
\caption{Spectral Recursive Grouping.}
\label{alg:rg}
\begin{algorithmic}[1]
  \REQUIRE
  %Observed variables $\Vobs = \{x_1,x_2,\dotsc,x_n\}$;
  Empirical second-moment matrices $\wh\Sig_{x,y}$ for all pairs $\{x,y\}
  \subset \Vobs$ computed from $N$ iid samples from the distribution over
  $\Vobs$;
  %second-moment matrices $\wh\Sig_{x,y}$ and
  threshold parameters $\Delta_{x,y}$ for all pairs $\{x,y\} \subset \Vobs$.

  \ENSURE
  Tree structure $\wh\Tree$ or ``failure''.

    \LET $\Roots := \Vobs$, and for all $x \in \Roots$,
    $\Subtree[x] := \text{rooted single-node tree $x$}$ and $\Leaves[x] :=
    \{ x \}$.

    \WHILE{$|\Roots| > 1$}

      \LET pair $\{u,v\} \in \{ \{\tl{u},\tl{v}\} \subseteq \Roots :
      \Mergeable(\Roots,\Leaves[\cdot],\tl{u},\tl{v}) = \text{true} \}$ be
      such that $\max \{ \sigma_k(\wh\Sig_{x,y}) : (x,y) \in \Leaves[u]
      \times \Leaves[v] \}$ is maximized.
      If no such pair exists, then halt and return ``failure''.

      \LET $\mathtt{result} :=
      \Relationship(\Roots,\Leaves[\cdot],\Subtree[\cdot],u,v)$.

      \IF{$\mathtt{result} = \text{``siblings''}$}

        \STATE Create a new variable $h$, create subtree $\Subtree[h]$
        rooted at $h$ by joining $\Subtree[u]$ and $\Subtree[v]$ to $h$
        with edges $\{h,u\}$ and $\{h,v\}$, and set $\Leaves[h] :=
        \Leaves[u] \cup \Leaves[v]$.

        \STATE Add $h$ to $\Roots$, and remove $u$ and $v$ from $\Roots$.

      \ELSIF{$\mathtt{result} = \text{``$u$ is parent of $v$''}$}

        \STATE Modify subtree $\Subtree[u]$ by joining $\Subtree[v]$ to $u$
        with an edge $\{u,v\}$, and modify $\Leaves[u] := \Leaves[u] \cup
        \Leaves[v]$.

        \STATE Remove $v$ from $\Roots$.

      \ELSIF{$\mathtt{result} = \text{``$v$ is parent of $u$''}$}

        \STATE \COMMENT{Analogous to above case.}

      \ENDIF

    \ENDWHILE

    \STATE Return $\wh\Tree := \Subtree[h]$ where $\Roots = \{h\}$.

\end{algorithmic}
\end{algorithm}

\begin{algorithm}[t]
\caption{Subroutine $\Mergeable(\Roots,\Leaves[\cdot],u,v)$.}
\label{alg:mergeable}
\begin{algorithmic}[1]
  \REQUIRE
  Set of nodes $\Roots$;
  leaf sets $\Leaves[v]$ for all $v \in \Roots$;
  distinct $u,v \in \Roots$.

  \ENSURE true or false.

    \IFPART{there exists distinct $u',v' \in \Roots \setminus \{u,v\}$
    and $(x,y,x',y') \in \Leaves[u] \times \Leaves[v] \times \Leaves[u']
    \times \Leaves[v']$ s.t.\ $\SQT(\{x,y,x',y'\})$ returns
    $\{\{x,x'\},\{y,y'\}\}$ or $\{\{x,y'\},\{x',y\}\}$}
    {return false.}

    \ELSEPART{return true.}
\end{algorithmic}
\end{algorithm}

\begin{algorithm}[t]
\caption{Subroutine
$\Relationship(\Roots,\Leaves[\cdot],\Subtree[\cdot],u,v)$.}
\label{alg:relationship}
\begin{algorithmic}[1]
  \REQUIRE
  Set of nodes $\Roots$;
  leaf sets $\Leaves[v]$ for all $v \in \Roots$;
  rooted subtrees $\Subtree[v]$ for all $v \in \Roots$;
  distinct $u,v \in \Roots$.

  \ENSURE ``siblings'', ``$u$ is parent of $v$'' (``$u \to v$''), or ``$v$
  is parent of $u$'' (``$v \to u$'').

    \IFPART{$u$ is a leaf}
    {assert $u \not\to v$.}

    \IFPART{$v$ is a leaf}
    {assert $v \not\to u$.}

    \LET $\Roots[w] := (\Roots \setminus \{w\}) \cup \{ w' : \text{$w'$ is
    a child of $w$ in $\Subtree[w]$} \}$ for each $w \in \{u,v\}$.

    \IFPART{there exists child $u_1$ of $u$ in $\Subtree[u]$
    s.t.~$\Mergeable(\Roots[u],\Leaves[\cdot],u_1,v)\kern-1.5pt=\kern-1.5pt\text{false}$}
    {assert ``$u \not\to v$''.}

    \IFPART{there exists child $v_1$ of $v$ in $\Subtree[v]$
    s.t.~$\Mergeable(\Roots[v],\Leaves[\cdot],u,v_1)\kern-1.5pt=\kern-1.5pt\text{false}$}
    {assert ``$v \not\to u$''.}

    \IFPART{both ``$u \not\to v$'' and ``$v \not\to u$'' were asserted}
    {return ``siblings''.}

    \ELSIFPART{``$u \not\to v$'' was asserted}
    {return ``$v$ is parent of $u$'' (``$v \to u$'').}

    \ELSEPART{return ``$u$ is parent of $v$'' (``$u \to v$'').}

\end{algorithmic}
\end{algorithm}

\if 0
%\begin{wrapfigure}{r}{0.55\textwidth}
\begin{figure}[t]
%\vspace{-4mm}
\begin{center}
\begin{tabular}{cc}
\includegraphics[width=0.25\textwidth]{figures/sample-tree.pdf}
& \includegraphics[width=0.25\textwidth]{figures/sample-tree2.pdf}
\\
Level $1$ & Level $2$
%\vspace{-3mm}
\end{tabular}
\end{center}
\caption{Example of a latent variable tree model $\Tree$.
In the first level, $\Vb_1 = \{ x_1, x_2, \dotsc, x_6 \}$ and
$\Sb_1 = \{ \{ x_1, x_2 \}, \{ x_4, x_5 \} \}$.
In the second level, $\Vb_2 = \{ h_1, x_3, h_3, x_6 \}$ and $\Sb_2 = \{
\{ h_1, x_3 \}, \{ h_3, x_6 \} \}$.
In the third level, $\Vb_3 = \{ h_2, h_4 \}$.
Also,
$\Lb(h_1) = \{x_1,x_2\}$,
$\Lb(h_3) = \{x_4,x_5\}$,
$\Lb(h_2) = \{x_1,x_2,x_3\}$,
$\Lb(h_2) = \{x_1,x_2,x_3\}$, and
$\Lb(h_4) = \{x_4,x_5,x_6\}$.
}
\label{fig:levels}
%\vspace{-1mm}
%\end{wrapfigure}
\end{figure}
\fi

The Spectral Recursive Grouping algorithm enjoys the following guarantee.
\begin{theorem} \label{theorem:rg}
Let $\eta \in (0,1)$. Assume the directed tree graphical model $\Tree$ over
variables (random vectors) $\Vars = \Vobs \cup \Vhid$ satisfies
Conditions~\ref{cond:linear}, \ref{cond:full-rank},
\ref{cond:non-redundancy}, and \ref{cond:correlation}.
Suppose the Spectral Recursive Grouping algorithm (Algorithm~\ref{alg:rg}) is provided $N$ independent
samples from the distribution over $\Vobs$, and uses parameters given by
\begin{align} \label{eq:thresh}
\Delta_{x_i,x_j}
& :=
\sqrt{\frac{2 B_{x_i,x_j} t_{x_i,x_j}}{N}}
+ \frac{M_{x_i}M_{x_j}t_{x_i,x_j}}{3N}
%, &
%\theta & :=
%\left(\frac{8k+\gammamin/\gammamax}{8k+2\gammamin/\gammamax}\right)
%\cdot \gammamin
\end{align}
where
\begin{align*}
B_{x_i,x_j} & := \max\bigl\{
\bigl\|\E[\|x_i\|^2x_jx_j^\top]\bigr\|,
\bigl\|\E[\|x_j\|^2x_ix_i^\top]\bigr\|
\bigr\} , &%\\
M_{x_i} & \geq \|x_i\| \quad \text{almost surely} , \\
\bar{d}_{x_i,x_j} & :=
\frac{\E[\|x_i\|^2 \|x_j\|^2] - \tr(\E[x_ix_j^\top]\E[x_jx_i^\top])}
{\max\bigl\{ \bigl\|\E[\|x_j\|^2x_ix_i^\top]\bigr\|,
\bigl\|\E[\|x_i\|^2x_jx_j^\top]\bigr\| \bigr\}} , &%\\
t_{x_i,x_j} & := 4\ln(4\bar{d}_{x_i,x_j} n/\eta)
.
\end{align*}
Let
$B := \max_{ x_i,x_j \in \Vobs } \{ B_{x_i,x_j} \}$,
$M := \max_{ x_i \in \Vobs} \{ M_{x_i} \}$,
$t := \max_{ x_i,x_j \in \Vobs} \{ t_{x_i,x_j} \}$.
If
\[ N >
\frac{200 \cdot k^2 \cdot B \cdot t}
{\displaystyle\left( \frac{\gammamin^2}{\gammamax} \cdot (1-\rhomax) \right)^2}
+
\frac{7 \cdot k \cdot M^2 \cdot t}
{\displaystyle \frac{\gammamin^2}{\gammamax} \cdot (1-\rhomax)}
, \]
then with probability at least $1-\eta$, the Spectral Recursive Grouping
algorithm returns a tree $\wh\Tree$ with the same undirected graph
structure as $\Tree$.
\end{theorem}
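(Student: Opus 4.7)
The plan is to condition on a single high-probability ``good event'' under which every spectral quartet test performed by the algorithm is reliable, and then argue correctness by induction on the algorithm's iterations. The good event is obtained by applying Lemma~\ref{lem:confinterval} together with a union bound over the $\binom{n}{2}$ pairs $\{x_i,x_j\} \subseteq \Vobs$: the choice $t_{x_i,x_j} = 4\ln(4\bar{d}_{x_i,x_j} n/\eta)$ gives simultaneous confidence bounds $|\sigma_s(\wh\Sig_{x_i,x_j}) - \sigma_s(\E[x_ix_j^\top])| \leq \Delta_{x_i,x_j}$ with probability at least $1 - \eta$. On this event, Lemma~\ref{lemma:reliability} guarantees that every call to $\SQT$ either returns $\bot$ or the correct pairing, so no quartet test ever asserts a wrong topology.

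Next I would introduce the loop invariant: at the start of each iteration, $\Roots$ is a set of roots of disjoint rooted subtrees $\Subtree[\cdot]$ whose leaf sets $\Leaves[\cdot]$ partition $\Vobs$, and the forest obtained from $\Tree$ by collapsing each $\Leaves[r]$ to the single node $r$ agrees with the subtrees built so far. Maintaining this invariant requires three ingredients. \emph{Liveness}: at least one truly-neighboring pair in the contracted tree must pass $\Mergeable$ so that the algorithm does not halt with ``failure''---this is immediate from reliability, since no quartet test can return a pairing contradicting a true-neighbor relationship. \emph{Safety}: whenever $\Mergeable(\Roots,\Leaves[\cdot],u,v) = \text{true}$, the pair $\{u,v\}$ really is a neighbor pair in the contracted tree. \emph{Classification}: $\Relationship$ correctly determines whether $u$ and $v$ are siblings or in a parent/child relationship; this reduces to the safety claim applied one level of resolution deeper, to the refined working sets $\Roots[u]$ and $\Roots[v]$.

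The main obstacle is the safety claim. If $u$ and $v$ are not neighbors in the contracted tree, then by Condition~\ref{cond:non-redundancy} the path between them in $\Tree$ contains an internal node $h$ with at least three distinct subtrees in $\Forest_h$; pick $u', v' \in \Roots$ whose leaves lie in the remaining subtrees of $\Forest_h$. Applying Condition~\ref{cond:correlation} to the three subtrees corresponding to $u$, $v$, and $\{u',v'\}$ yields leaves $x \in \Leaves[u]$, $y \in \Leaves[v]$, $x' \in \Leaves[u']$, $y' \in \Leaves[v']$ for which the relevant pairwise $\sigma_k(\E[\cdot\cdot^\top])$ are at least $\gammamin$, while $\sigma_1(\E[\cdot\cdot^\top]) \leq \gammamax$ by definition. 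By non-redundancy at $h$, the corresponding $\rho$ in~\eqref{eq:rho} satisfies $\rho \leq \rhomax$. The sample size bound on $N$, combined with the expression for $\Delta_{x_i,x_j}$ in~\eqref{eq:thresh}, then ensures
\[
\Delta_{i,j} < \tfrac{1}{8k}\cdot \min\{1,\,1/\rhomax - 1\}\cdot \tfrac{\gammamin^2}{\gammamax}
\leq \tfrac{1}{8k}\cdot \min\{1,\,1/\rho - 1\}\cdot \min_{\{i,j\}} \sigma_k(\E[z_iz_j^\top]),
\]
so Lemma~\ref{lem:usefulrankk} applies and $\SQT(\{x,y,x',y'\})$ returns a pairing inconsistent with merging $u$ and $v$. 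Hence $\Mergeable(\Roots,\Leaves[\cdot],u,v) = \text{false}$, proving safety by contrapositive. The greedy rule ``select the pair with maximum $\sigma_k(\wh\Sig_{x,y})$'' is used to ensure progress: among all pairs currently passing $\Mergeable$, taking the most correlated one guarantees the local configuration at the merge point satisfies the $\gammamin$ lower bound.

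The remaining step is to verify that the sample size stated in the theorem is enough to enforce the $\Delta$-bound above. Inverting $\Delta_{x_i,x_j} \leq \sqrt{2Bt/N} + M^2 t/(3N)$ against the required upper bound, and using $\min\{1, 1/\rhomax - 1\} \geq 1 - \rhomax$, yields the two additive terms in the bound on $N$ with constants $200$ and $7$ via routine calculation. Since $|\Roots|$ strictly decreases in each iteration, the algorithm terminates after at most $|\Vars|$ iterations with $\Roots = \{h\}$; by the maintained invariant, the returned tree $\wh\Tree = \Subtree[h]$ has the same undirected graph structure as $\Tree$, completing the proof.
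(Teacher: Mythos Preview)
Your overall architecture---condition on a good event, maintain a loop invariant, split correctness into liveness, safety, and classification---matches the paper's. However, the safety argument as written has a genuine gap.

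You claim that when $u$ and $v$ should not be merged, Condition~\ref{cond:correlation} applied at an intermediate node $h$ yields leaves $x\in\Leaves[u]$, $y\in\Leaves[v]$, $x'\in\Leaves[u']$, $y'\in\Leaves[v']$ with ``the relevant pairwise $\sigma_k(\E[\cdot\,\cdot^\top])\geq\gammamin$.'' But Condition~\ref{cond:correlation} only produces \emph{three} leaves (one per subtree of $\Forest_h$) with pairwise $\sigma_k\geq\gammamin$; it says nothing about a fourth leaf, nor about all six pairs needed in Lemma~\ref{lem:usefulrankk}. Moreover, $\Leaves[u]$ need not coincide with the observed variables of any single subtree in $\Forest_h$, so the condition cannot be invoked the way you describe. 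Your displayed inequality then silently inserts $\gammamin^2/\gammamax$ as the lower bound on $\min_{\{i,j\}}\sigma_k(\E[z_iz_j^\top])$ without having derived it---indeed, the appearance of $\gammamin^2/\gammamax$ (rather than $\gammamin$) in the sample-size bound is precisely the signature of the missing step.

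That missing step is a \emph{correlation transfer} lemma: from the quartet topology one has $\sigma_k(\E[z_1z_4^\top])\geq \sigma_k(\E[z_1z_3^\top])\,\sigma_k(\E[z_2z_4^\top])/\sigma_1(\E[z_2z_3^\top])$. The paper uses the greedy rule not merely for progress but to \emph{anchor} a specific pair $(x,y)\in\Leaves[u]\times\Leaves[v]$ with $\sigma_k(\wh\Sig_{x,y})\geq\theta\approx\gammamin$; Condition~\ref{cond:correlation} is then applied at the separating node to obtain auxiliary leaves $x_1,x_2,x_3$ (and $y_1,y_2,y_3$) with pairwise $\sigma_k\geq\gammamin$, and the transfer inequality converts these into lower bounds of order $\gammamin\cdot\sigma_k(\Sig_{x,y})/\gammamax\approx\gammamin^2/\gammamax$ on the cross-pairs involving the fixed $x$ and $y$. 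Only then does Lemma~\ref{lem:usefulrankk} fire on the quartet $\{x,x_1,y,y_1\}$. A secondary imprecision: the correct characterization of mergeable pairs is finer than ``neighbors in the contracted tree''---the paper additionally requires at least one of $\Subtree[u],\Subtree[v]$ to be a \emph{leaf component} of the super-tree, and both the un-mergeable lemma and the $\Relationship$ analysis split into cases according to leaf-component status.
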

Consistency is implied by the above theorem with an appropriate scaling of
$\eta$ with $N$.
The theorem reveals that the sample complexity of the algorithm depends
solely on intrinsic spectral properties of the distribution.
Note that there is no explicit dependence on the dimensions of the
observable variables, which makes the result applicable to high-dimensional
settings.

% Experiments
%\input{quartet-experiments-arxiv}

\subsubsection*{Acknowledgements}

Part of this work was completed while DH was at the Wharton School of the
University of Pennsylvania and at Rutgers University.
AA was supported by in part by the setup funds at UCI and the AFOSR Award
FA9550-10-1-0310.

%\newpage

% Bibliography
\subsubsection*{References}
{\def\section*#1{}\small \bibliography{quartet} \bibliographystyle{plain}}

%\newpage
% stuff
%\input{quartet-compression}

\appendix

\section{Sample-based confidence intervals for singular values}
\label{appendix:confidence-intervals}

We show how to derive confidence bounds for the singular values of
$\Sig_{i,j} := \E[z_iz_j^\top]$ for $\{i,j\} \subset \{1,2,3,4\}$ from $N$
iid copies of the random vectors $\{ z_1, z_2, z_3, z_4 \}$.
That is, we show how to set $\Delta_{i,j}$ so that, with high probability,
\[ \sigma_{s}(\hat \Sig_{i,j}) - \Delta_{i,j}
\ \leq \ \sigma_s(\Sig_{i,j}) \ \leq \
\sigma_s(\hat \Sig_{i,j}) + \Delta_{i,j} 
\]
for all $\{i,j\}$ and all $s \in [k]$.

We state exponential tail inequalities for the spectral norm of the
estimation error $\hat\Sig_{i,j} - \Sig_{i,j}$.
The first exponential tail inequality is stated for general random vectors
under Bernstein-type conditions, and the second is specific to random
vectors in the discrete setting.
\begin{lemma} \label{lemma:spectral-bernstein}
Let $z_i$ and $z_j$ be random vectors such that $\|z_i\| \leq M_i$ and
$\|z_j\| \leq M_j$ almost surely, and let
\[
\bar{d}_{i,j} :=
\frac{\E[\|z_i\|^2 \|z_j\|^2] - \tr(\Sig_{i,j}\Sig_{i,j}^\top)}
{\max\bigl\{ \bigl\|\E[\|z_j\|^2z_iz_i^\top]\bigr\|,
\bigl\|\E[\|z_i\|^2z_jz_j^\top]\bigr\| \bigr\}}
\leq \max\{\dim(z_i), \dim(z_j)\}
.
\]
Let $\Sig_{i,j} := \E[z_iz_j^\top]$ and let $\hat\Sig_{i,j}$ be the empirical
average of $N$ independent copies of $z_iz_j^\top$.
Pick any $t > 0$.
With probability at least $1-4\bar{d}_{i,j}t(e^t-t-1)^{-1}$,
\[
\Bigl\| \hat\Sig_{i,j} - \Sig_{i,j} \Bigr\| \leq \sqrt{\frac{2\max\bigl\{
\bigl\|\E[\|z_j\|^2z_iz_i^\top]\bigr\|,
\bigl\|\E[\|z_i\|^2z_jz_j^\top]\bigr\| \bigr\} t}{N}} +
\frac{M_iM_jt}{3N} 
.
\]
\end{lemma}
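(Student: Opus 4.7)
Proof proposal. The plan is to cast the deviation $\wh\Sig_{i,j}-\Sig_{i,j}$ as $\tfrac{1}{N}\sum_{\ell=1}^N X_\ell$, where $X_\ell := z_i^{(\ell)}(z_j^{(\ell)})^\top - \Sig_{i,j}$ are iid and mean zero, and invoke a matrix Bernstein--Bennett inequality whose prefactor scales with the intrinsic dimension rather than the ambient dimension. Rectangular summands are handled by passing to the Hermitian dilation of $X_\ell$, whose top eigenvalue equals $\|X_\ell\|$ and whose square is the block-diagonal matrix $\diag(X_\ell X_\ell^\top,\,X_\ell^\top X_\ell)$; this reduces the analysis to a symmetric-matrix tail bound on $\lambda_{\max}$ of the dilated sum.

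Two quantities drive any Bennett-type inequality. First, an almost-sure spectral-norm bound on each summand: $\|X_\ell\| \le \|z_i^{(\ell)}\|\,\|z_j^{(\ell)}\| + \|\Sig_{i,j}\| \le 2M_iM_j$, where the factor of $2$ from centering will be absorbed by a standard one-sided argument, yielding the effective $b = M_iM_j$ appearing in the additive $M_iM_j t/(3N)$ term. Second, a matrix-variance bound: with $Y:=z_iz_j^\top$ a direct computation gives $\E[YY^\top]=\E[\|z_j\|^2 z_iz_i^\top]$ and $\E[Y^\top Y]=\E[\|z_i\|^2 z_jz_j^\top]$, so centering (which decreases the second moment in the PSD order) and taking PSD upper bounds yields $\|\E[X_\ell X_\ell^\top]\|,\,\|\E[X_\ell^\top X_\ell]\| \le B$, where $B$ is exactly the denominator of $\bar d_{i,j}$.

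The crucial step is the intrinsic-dimension prefactor. Choose as PSD dominator $W := \diag(\E[X_\ell X_\ell^\top],\,\E[X_\ell^\top X_\ell])$, which is precisely the variance of the dilated summand. Then $\tr(W) = 2\bigl(\E[\|z_i\|^2\|z_j\|^2] - \tr(\Sig_{i,j}\Sig_{i,j}^\top)\bigr)$ and $\|W\| \le B$, so $\tr(W)/B = 2\bar d_{i,j}$. Plugging these into an intrinsic-dimension matrix Bennett inequality (e.g.\ of Hsu--Kakade--Zhang or Minsker type), whose scalar tail takes the form $\bigl(\tr(W)/\|W\|\bigr)\cdot t(e^t-t-1)^{-1}$ at scale $\sqrt{2NBt} + bt/3$, yields exactly the stated deviation $\sqrt{2Bt/N} + M_iM_jt/(3N)$ with failure probability $4\bar d_{i,j}\,t(e^t-t-1)^{-1}$; the constant $4$ accounts for the factor of $2$ lost in passing from $\|W\|$ to the uncentered upper bound $B$ together with the factor of $2$ arising from using the symmetric dilation to control a two-sided spectral-norm deviation.

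The main obstacle I anticipate is bookkeeping rather than anything conceptual: one must pin down that the effective $b$-parameter in the additive term is $M_iM_j$ (not $2M_iM_j$) by exploiting a one-sided argument for the centered summands, and one must invoke the precise variant of intrinsic-dimension Bennett whose tail has the $t(e^t-t-1)^{-1}$ factor so that the final constants match the statement. Every other ingredient---the almost-sure bound, the identification $\E[YY^\top]=\E[\|z_j\|^2 z_iz_i^\top]$, and the Bennett-to-Bernstein inversion---is routine.
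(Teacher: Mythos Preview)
Your approach is essentially the paper's: form the Hermitian dilation, identify $\E[YY^\top]=\E[\|z_j\|^2 z_iz_i^\top]$ and $\E[Y^\top Y]=\E[\|z_i\|^2 z_jz_j^\top]$, compute the trace of the variance to get the intrinsic-dimension factor $2\bar d_{i,j}$, and invoke the intrinsic-dimension matrix Bernstein inequality of Hsu--Kakade--Zhang. The only difference is ordering: the paper dilates the \emph{uncentered} outer product, setting $Z=\bigl[\begin{smallmatrix}0 & z_iz_j^\top\\ z_jz_i^\top & 0\end{smallmatrix}\bigr]$ and then applying the inequality to $\tfrac1N\sum_\ell Z_\ell - \E[Z]$; since $\|Z\|=\|z_iz_j^\top\|\le M_iM_j$ directly, this sidesteps the factor-of-$2$ bookkeeping you flag, because the cited HKZ inequality takes the uncentered a.s.\ bound as input. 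Your route (center first, then dilate) is equivalent by linearity of the dilation, but makes the constant-tracking slightly more delicate, exactly as you note.
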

\begin{remark}
For any $\delta \in (0,1/6)$, we have $4\bar{d}_{i,j}t(e^t-t-1)^{-1} \leq
\delta$ provided that $t \geq 1.55 \ln(4\bar{d}_{i,j}/\delta)$.
\end{remark}
\begin{proof}
Define the random matrix
\[ Z := \begin{bmatrix}
& z_iz_j^\top \\
z_jz_i^\top &
\end{bmatrix}
.
\]
Let $Z_1,\dotsc,Z_N$ be independent copies of $Z$.
Then
\[
\Pr\left[ \Bigl\|\hat\Sig_{i,j} - \Sig_{i,j}\Bigr\| > t \right]
= \Pr\left[ \biggl\|\frac1N \sum_{\ell=1}^N Z_\ell - \E[Z]\biggr\| > t
\right]
.
\]
Note that
\[
\E[Z^2]
= \E \begin{bmatrix}
\|z_j\|^2 z_iz_i^\top & \\
& \|z_i\|^2 z_jz_j^\top
\end{bmatrix}
\]
so by convexity,
\begin{align*}
\bigl\|\E[Z^2] - \E[Z]^2\bigr\|
& \leq \bigl\|\E[Z^2]\bigr\| \\
& \leq \max\bigl\{
\bigl\|\E[\|z_j\|^2z_iz_i^\top]\bigr\|,
\bigl\|\E[\|z_i\|^2z_jz_j^\top]\bigr\|
\bigr\}
\end{align*}
and
\begin{align*}
\tr(\E[Z^2] - \E[Z]^2)
& =
\tr(\E[\|z_j\|^2 z_iz_i^\top]) + 
\tr(\E[\|z_i\|^2 z_jz_j^\top])
- \tr(\Sig_{i,j}\Sig_{i,j}^\top)
- \tr(\Sig_{i,j}^\top\Sig_{i,j})
\\
& = 2\Bigl(\E[\|z_i\|^2 \|z_j\|^2] -
\tr(\Sig_{i,j}\Sig_{i,j}^\top) \Bigr)
.
\end{align*}
Moreover,
\[
\|Z\| \leq \|z_i\| \|z_j\| \leq M_i M_j
.
\]
By the matrix Bernstein inequality~\cite{HKZ11}, for any $t > 0$,
\begin{multline*}
\Pr\left[ \Bigl\| \hat\Sig_{i,j} - \Sig_{i,j} \Bigr\| >
\sqrt{\frac{2
\left( \max\bigl\{ \bigl\|\E[\|z_j\|^2z_iz_i^\top]\bigr\|,
\bigl\|\E[\|z_i\|^2z_jz_j^\top]\bigr\| \bigr\} \right) t}{N}}
+ \frac{M_iM_jt}{3N} \right]
\\
\leq 2
\cdot \frac{2\Bigl(\E[\|z_i\|^2 \|z_j\|^2] - \tr(\Sig_{i,j}\Sig_{i,j}^\top)
\Bigr)}
{\max\bigl\{ \bigl\|\E[\|z_j\|^2z_iz_i^\top]\bigr\|,
\bigl\|\E[\|z_i\|^2z_jz_j^\top]\bigr\| \bigr\}}
\cdot t(e^t-t-1)^{-1}
= 4\bar{d}_{i,j} t(e^t-t-1)^{-1}
.
\end{multline*}
The claim follows.
\end{proof}

In the case of discrete random variables (modeled as random vectors as
described in Section~\ref{section:prelim}), the following lemma
from~\cite{HKZ09} can give a tighter exponential tail inequality.
\begin{lemma}[\cite{HKZ09}] \label{lemma:frobenius-discrete}
Let $z_i$ and $z_j$ be random vectors, each with support on the vertices of
a probability simplex.
Let $\Sig_{i,j} := \E[z_iz_j^\top]$ and let $\hat\Sig_{i,j}$ be the empirical
average of $N$ independent copies of $z_iz_j^\top$.
Pick any $t > 0$.
With probability at least $1-e^{-t}$,
\[
\Bigl\| \hat\Sig_{i,j} - \Sig_{i,j} \Bigr\|
\leq \Bigl\| \hat\Sig_{i,j} - \Sig_{i,j} \Bigr\|_F
\leq \frac{1 + \sqrt{t}}{\sqrt{N}}
\]
(where $\|A\|_F$ denotes the Frobenius norm of a matrix $A$).
\end{lemma}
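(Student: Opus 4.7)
The plan is to prove this concentration bound as a standard application of McDiarmid's bounded-differences inequality, after exploiting the special structure of indicator random vectors supported on the simplex vertices. First, the spectral-versus-Frobenius inequality $\|A\| \le \|A\|_F$ is standard and needs no separate argument; all the real work is in bounding $\|\hat\Sig_{i,j} - \Sig_{i,j}\|_F$.

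Next, let $X_\ell := z_i^{(\ell)}(z_j^{(\ell)})^\top$ for $\ell = 1, \dotsc, N$ denote the iid rank-one summands, so that $\hat\Sig_{i,j} - \Sig_{i,j} = \tfrac1N\sum_{\ell=1}^N (X_\ell - \E X_\ell)$. Because each $z^{(\ell)}$ is supported on the standard basis vectors, $X_\ell$ is a $\{0,1\}$-valued indicator matrix with exactly one nonzero entry, hence $\|X_\ell\|_F = 1$ and, for any alternative choice $X'_\ell$ of the same form, $\|X'_\ell - X_\ell\|_F \in \{0, \sqrt{2}\}$. Consequently, if one replaces a single sample, the average $\hat\Sig_{i,j}$ changes in Frobenius norm by at most $\sqrt{2}/N$, and so by the reverse triangle inequality the function $f(X_1,\dotsc,X_N) := \|\hat\Sig_{i,j} - \Sig_{i,j}\|_F$ has bounded differences with constant $c_\ell = \sqrt{2}/N$.

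Applying McDiarmid's inequality yields
\[
\Pr\bigl[\, f \ge \E f + \varepsilon \,\bigr] \ \le\ \exp\!\left(-\frac{2\varepsilon^2}{\sum_\ell c_\ell^2}\right) \ =\ \exp(-\varepsilon^2 N).
\]
Setting $\varepsilon = \sqrt{t/N}$ gives a tail of $e^{-t}$. It remains to bound the expectation. By Jensen's inequality and independence,
\[
(\E f)^2 \ \le\ \E \bigl\|\hat\Sig_{i,j} - \Sig_{i,j}\bigr\|_F^2 \ =\ \frac{1}{N}\Bigl(\E\|X_1\|_F^2 - \|\Sig_{i,j}\|_F^2\Bigr) \ \le\ \frac{\E\|z_i\|^2\|z_j\|^2}{N} \ =\ \frac{1}{N},
\]
since $\|z_i\| = \|z_j\| = 1$ almost surely. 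Combining the two gives $f \le 1/\sqrt{N} + \sqrt{t/N} = (1+\sqrt{t})/\sqrt{N}$ with probability at least $1 - e^{-t}$, as claimed.

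There is no substantial obstacle here; the only subtlety worth being careful about is tracking the exact bounded-differences constant $\sqrt{2}/N$ (rather than a cruder $2/N$ obtained by just triangle-inequality-ing the two indicator matrices separately), since this factor of $\sqrt{2}$ is precisely what lets McDiarmid produce $\exp(-\varepsilon^2 N)$ and thereby deliver the clean $(1+\sqrt{t})/\sqrt{N}$ form rather than $(1+\sqrt{2t})/\sqrt{N}$. Everything else — the Frobenius dominates spectral step, the Jensen bound on $\E f$, and the variance computation — is routine.
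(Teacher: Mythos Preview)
The paper does not prove this lemma; it simply quotes it from \cite{HKZ09} without argument. Your McDiarmid-based proof is correct and is in fact the standard route to this bound: the key structural facts are exactly the ones you identify, namely that each summand $z_iz_j^\top$ is a single-entry $\{0,1\}$-matrix with $\|z_iz_j^\top\|_F=1$, and that replacing one sample perturbs the empirical average by at most $\sqrt{2}/N$ in Frobenius norm. Your care with the constant $\sqrt{2}/N$ (rather than $2/N$) is precisely what recovers the clean $(1+\sqrt t)/\sqrt N$ form, and your Jensen bound $(\E f)^2 \le \E f^2 = N^{-1}(\E\|X_1\|_F^2 - \|\Sig_{i,j}\|_F^2) \le 1/N$ is the right way to control the mean.
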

For simplicity, we only work with Lemma~\ref{lemma:spectral-bernstein},
although it is easy to translate all of our results by changing the tail
inequality.
The proof of Lemma~\ref{lem:confinterval} is immediate from combining
Lemma~\ref{lemma:spectral-bernstein} and Weyl's Theorem.

Lemma~\ref{lem:confinterval} provides some guidelines on how to set the
$\Delta_{i,j}$ as functions of $N$, $\delta$, and properties of $z_i$ and
$z_j$.
The dependence on the properties of $z_i$ and $z_j$ comes through the
quantities $M_i$, $M_j$, $\bar{d}_{i,j}$, and
\[ B_{i,j} := \max_{i, j} \{ \bigl\|\E[\|z_j\|^2z_iz_i^\top]\bigr\|,
\bigl\|\E[\|z_i\|^2z_jz_j^\top]\bigr\| \} . \]
In practice, one may use plug-in estimates for these quantities, or use
loose upper bounds based on weaker knowledge of the distribution.
For instance, $\bar{d}_{i,j}$ is at most $\max\{ \dim(z_i), \dim(z_j) \}$,
the larger of the explicit vector dimensions of $z_i$ and $z_j$.
Also, if the maximum directional standard deviation $\sigma_*$ of any $z_i$
is known, then $B_{i,j} \leq \max\{ M_i^2, M_j^2 \} \sigma_*^2$.
We note that as these are additive confidence intervals, some dependence on
the properties of $z_i$ and $z_j$ is inevitable.

\section{Analysis of the spectral quartet test}
\label{appendix:section3}

For any hidden variable $h \in \Vhid$, let $\Des(h) \subseteq \Vars$ be the
descendants of $h$ in $\Tree$.
For any $g \in \Des(h) \cap \Vhid$ such that the (directed) path from $h$
to $g$ is $h \to g_1 \to g_2 \to \dotsb \to g_q = g$, define
$\A{g|h} \in \R^{k \times k}$ to be the product
\[ \A{g|h} := \A{g_q|g_{q-1}} \dotsb \A{g_2|g_1} \A{g_1|h} . \]
Similarly, for any $x \in \Des(h) \cap \Vobs$ such that the (directed) path
from $h$ to $x$ is $h \to g_1 \to g_2 \to \dotsb \to g_q \to x$, define
$\C{x|h} \in \R^{d \times k}$ to be the product 
\[ \C{x|h} := \C{x|g_q} \A{g_q|g_{q-1}} \dotsb \A{g_2|g_1} \A{g_1|h} . \]

\subsection{$\log\Det_k$ metric}

Define the function $\mu \colon \Vars \times \Vars \to \R$ by
\[
\mu(u,v) :=
\begin{cases}
\log\Det_k(\E[uu^\top]^{-1/2} \E[uv^\top] \E[vv^\top]^{-1/2})
& \text{if $u,v \in \Vhid$} \\
\log\Det_k(\E[uv^\top] \E[vv^\top]^{-1/2})
& \text{if $u \in \Vobs$, $v \in \Vhid$} \\
\log\Det_k(\E[uu^\top]^{-1/2} \E[uv^\top])
& \text{if $u \in \Vhid$, $v \in \Vobs$} \\
\log\Det_k(\E[uv^\top])
& \text{if $u,v \in \Vobs$}
\end{cases}
.
\]
\begin{proposition}[$\log\Det_k$ metric] \label{proposition:metric}
Assume Conditions~\ref{cond:linear} and~\ref{cond:full-rank} hold, and pick
any $u, v \in \Vars$.
If $w \in \Vars \setminus \{u,v\}$ is on the (undirected) path $u \leadsto
v$, then $\mu(u,v) = \mu(u,w) + \mu(w,v)$.
\end{proposition}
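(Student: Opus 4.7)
The proof is a case analysis on the types of $u,v$ (each hidden or observed) and on the position of $w$ relative to the meet (least common ancestor) $m := m_{uv}$ of $u$ and $v$ in the rooted tree $\Tree$: either $w=m$, or $w$ lies strictly between $u$ and $m$, or $w$ lies strictly between $m$ and $v$. Since every internal node of $\Tree$ is hidden, any $w \in \Vars \setminus \{u,v\}$ on the undirected path $u \leadsto v$ is automatically hidden, which rules out several sub-subcases at the outset.

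The first ingredient is the factorization
\[
\E[uv^\top] = M_u \, \E[mm^\top] \, M_v^\top,
\]
where $M_u = \A{u|m}$ or $\C{u|m}$ according as $u$ is hidden or observed (with the convention $\A{m|m} := I_k$), and similarly for $M_v$. This follows from tree-separation, which gives $u \perp v \mid m$, together with iterating Condition~\ref{cond:linear} along the directed paths $m \to u$ and $m \to v$ via the tower property to obtain $\E[u\mid m] = M_u m$ and $\E[v\mid m] = M_v m$; the chain rule $\A{u|m} = \A{u|w}\A{w|m}$ for $w$ on the $m \to u$ path drops out of the same iteration. The second ingredient is a whitening: for hidden $h,g$ set $\widetilde A_{h|g} := \E[hh^\top]^{-1/2} \A{h|g} \E[gg^\top]^{1/2}$, and for observed $x$ set $\widetilde C_{x|g} := \C{x|g} \E[gg^\top]^{1/2}$. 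By Condition~\ref{cond:full-rank} the $\widetilde A_{h|g}$ are invertible $k\times k$ matrices, and the chain rule is inherited: $\widetilde A_{h|m} = \widetilde A_{h|w}\widetilde A_{w|m}$ and $\widetilde C_{x|m} = \widetilde C_{x|w}\widetilde A_{w|m}$.

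Substituting the factorization into the definition of $\mu$, and using $\Det_k = |\det|$ on $k\times k$ invertible matrices together with the identity $\Det_k(AB) = \Det_k(A)\,|\det B|$ for $A \in \R^{d \times k}$ of rank $k$ and $B \in \R^{k\times k}$ invertible (and its transpose $\Det_k(BA) = |\det B|\,\Det_k(A)$), the value $\mu(u,v)$ collapses to a sum of two log-determinants: for instance, in the hidden-hidden case one finds $\mu(u,v) = \log|\det \widetilde A_{u|m}| + \log|\det \widetilde A_{v|m}|$, with $\widetilde A$ replaced by $\widetilde C$ at an observed endpoint. Additivity then follows by direct check in each subcase: if $w$ lies strictly between $u$ and $m$, then $m_{uw}=w$ and $m_{wv}=m$, so $\mu(u,w) = \log|\det \widetilde A_{u|w}|$ and $\mu(w,v) = \log|\det \widetilde A_{w|m}| + \log|\det \widetilde A_{v|m}|$, and the chain rule combines the first two terms into $\log|\det \widetilde A_{u|m}|$, reproducing $\mu(u,v)$. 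The symmetric subcase and the case $w=m$ are handled identically, and observed endpoints require only the corresponding $\widetilde C$ substitution.

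The main obstacle is bookkeeping across the endpoint-type-by-$w$-position combinations, together with careful tracking of which matrices are $k\times k$ invertible versus $d\times k$ rectangular so that the $\Det_k$ identities apply cleanly. The conceptual content is modest: after whitening, every case reduces to the chain rule for products of invertible $k\times k$ matrices, and the $\E[ww^\top]^{\pm 1/2}$ factors introduced at an interior hidden $w$ telescope away, so that no residual boundary term at $w$ contaminates the split.
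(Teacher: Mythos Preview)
Your proof is correct but takes a genuinely different route from the paper's. The paper factors directly at the intermediate node $w$: for each of the two possible directed local topologies at $w$ (namely $u\to w\to v$ and $u\leftarrow w\to v$; the third orientation is excluded since $w$ has at most one parent), it verifies the identity
\[
\E[uv^\top] \;=\; \bigl(\E[uw^\top]\E[ww^\top]^{-1/2}\bigr)\bigl(\E[ww^\top]^{-1/2}\E[wv^\top]\bigr)
\]
and then splits $\Det_k$ across this product using the rank-$k$ condition, treating the hidden/observed endpoint distinctions by inserting or omitting the appropriate $\E[uu^\top]^{-1/2}$, $\E[vv^\top]^{-1/2}$ factors. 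No least common ancestor appears.

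Your approach instead introduces the auxiliary node $m=m_{uv}$, whitens the transition matrices along the directed paths to $m$, and derives a closed-form $\mu(u,v) = \log\Det_k(\widetilde M_{u|m}) + \log\Det_k(\widetilde M_{v|m})$; additivity at any interior $w$ then drops out of the chain rule $\widetilde A_{u|m}=\widetilde A_{u|w}\widetilde A_{w|m}$ (or its $\widetilde C$ analogue). The paper's argument is shorter and avoids the extra node and whitening machinery; yours has the advantage of producing an explicit path-decomposition of $\mu$ from which additivity over \emph{any} chain of intermediate vertices (not just one) is immediate, making the tree-metric interpretation more transparent.
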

\begin{proof}
Suppose the induced topology over $u,v,w$ in $\Tree$ is the following.
\begin{center}
\begin{tikzpicture}
  [
    scale=1.0,
    observed/.style={circle,minimum size=0.5cm,inner sep=0mm,draw=black,fill=black!20},
    hidden/.style={circle,minimum size=0.5cm,inner sep=0mm,draw=black}
  ]
  \node [hidden,name=u] at ($(-1,0)$) {$u$};
  \node [hidden,name=w] at ($(0,0)$) {$w$};
  \node [hidden,name=v] at ($(1,0)$) {$v$};
  \draw [->] (u) to (w);
  \draw [->] (w) to (v);
\end{tikzpicture}
\end{center}
Assume for now that $u,v \in \Vhid$.
Then, using Condition~\ref{cond:linear},
\[
\E[uv^\top]
= \E[uw^\top] \A{v|w}^\top
= (\E[uw^\top] \E[ww^\top]^{-1/2}) (\E[ww^\top]^{-1/2} \E[wv^\top])
\]
so, because $\rank(\E[uu^\top]^{-1/2} \E[uw^\top] \E[ww^\top]^{-1/2})
= \rank(\E[ww^\top]^{-1/2} \E[wv^\top] \E[vv^\top]^{-1/2}) = k$ by
Condition~\ref{cond:full-rank},
\begin{align*}
\mu(u,v)
& = \log\Det_k(\E[uu^\top]^{-1/2} \E[uw^\top] \E[ww^\top]^{-1/2}
\E[ww^\top]^{-1/2} \E[wv^\top] \E[vv^\top]^{-1/2}) \\
& = \log\Det_k(\E[uu^\top]^{-1/2} \E[uw^\top] \E[ww^\top]^{-1/2})
+ \log\Det_k(\E[ww^\top]^{-1/2} \E[wv^\top] \E[vv^\top]^{-1/2}) \\
& = \mu(u,w) + \mu(w,v)
.
\end{align*}
If $u \in \Vhid$ but $v \in \Vobs$, then let $U_v \in \R^{d \times k}$ be a
matrix of orthonormal left singular vectors of $\C{v|w}$.
Then $\E[uv^\top] = (\E[uw^\top] \E[ww^\top]^{-1/2}) (\E[ww^\top]^{-1/2}
\E[wv^\top])$ as before, and
\begin{align*}
\Det_k(\E[uu^\top]^{-1/2} \E[uv^\top])
& = |\det(\E[uu^\top]^{-1/2} \E[uv^\top] U_v)| \\
& = |\det(\E[uu^\top]^{-1/2})| \cdot |\det(\E[uv^\top] U_v)| \\
& = \Det_k(\E[uu^\top]^{-1/2} \E[uw^\top] \E[ww^\top]^{-1/2})
\cdot \Det_k(\E[ww^\top]^{-1/2} \E[wv^\top] U_v) \\
& = \Det_k(\E[uu^\top]^{-1/2} \E[uw^\top] \E[ww^\top]^{-1/2})
\cdot \Det_k(\E[ww^\top]^{-1/2} \E[wv^\top])
,
\end{align*}
so
\[
\mu(u,v)
= \log\Det_k(\E[uu^\top]^{-1/2} \E[uw^\top] \E[ww^\top]^{-1/2})
+ \log\Det_k(\E[ww^\top]^{-1/2} \E[wv^\top])
= \mu(u,w) + \mu(w,v)
.
\]

Suppose now that the induced toplogy over $u,v,w$ in $\Tree$ is the
following.
\begin{center}
\begin{tikzpicture}
  [
    scale=1.0,
    observed/.style={circle,minimum size=0.5cm,inner sep=0mm,draw=black,fill=black!20},
    hidden/.style={circle,minimum size=0.5cm,inner sep=0mm,draw=black}
  ]
  \node [hidden,name=u] at ($(-1,0)$) {$u$};
  \node [hidden,name=w] at ($(0,0)$) {$w$};
  \node [hidden,name=v] at ($(1,0)$) {$v$};
  \draw [<-] (u) to (w);
  \draw [->] (w) to (v);
\end{tikzpicture}
\end{center}
Again, first assume that $u,v \in \Vhid$.
Then, by Condition~\ref{cond:linear},
\[
\E[uv^\top]
= \A{u|w} \E[ww^\top] \A{v|w}^\top
= (\E[uw^\top] \E[ww^\top]^{-1/2})
(\E[ww^\top]^{-1/2} \E[wv^\top])
,
\]
so $\mu(u,v) = \mu(u,w) + \mu(v,w)$ as before.
The cases where one or both of $u$ and $v$ is in $\Vobs$ follow by similar
arguments as above.
\end{proof}

\subsection{Proof of Lemma~\ref{lemma:spectral-properties}}

By Proposition~\ref{proposition:metric},
\begin{align*}
\Det_k(\E[z_1z_3^\top]) \cdot \Det_k(\E[z_2z_4^\top])
& = \exp(\mu(z_1,z_3) + \mu(z_2,z_4)) \\
& = \exp(\mu(z_1,h) + \mu(h,g) + \mu(g,z_3)
+ \mu(z_2,h) + \mu(h,g) + \mu(g,z_4)) \\
& = \exp(\mu(z_1,h) + \mu(h,g) + \mu(g,z_4)
+ \mu(z_2,h) + \mu(h,g) + \mu(g,z_3)) \\
& = \exp(\mu(z_1,z_4) + \mu(z_2,z_3)) \\
& = \Det_k(\E[z_1z_4^\top]) \cdot \Det_k(\E[z_2z_3^\top])
.
\end{align*}
Moreover,
\begin{align*}
\frac{\Det_k(\E[z_1z_3^\top]) \cdot \Det_k(\E[z_2z_4^\top])}
{\Det_k(\E[z_1z_2^\top]) \cdot \Det_k(\E[z_3z_4^\top])}
& = \frac{\exp(\mu(z_1,z_3) + \mu(z_2,z_4))}
{\exp(\mu(z_1,z_2) + \mu(z_3,z_4))} \\
& = \frac{\exp(\mu(z_1,h) + \mu(h,g) + \mu(g,z_3)
+ \mu(z_2,h) + \mu(h,g) + \mu(g,z_4))}
{\exp(\mu(z_1,h) + \mu(h,z_2) + \mu(z_3,g) + \mu(g,z_4))} \\
& = \exp(2\mu(h,g)) \\
& = \det(\E[hh^\top]^{-1/2} \E[hg^\top] \E[gg^\top]^{-1/2})^2 \\
& = \frac{\det(\E[hg^\top])^2}
{\det(\E[hh^\top]) \cdot \det(\E[gg^\top])}
.
\end{align*}
Finally, note that $u^\top \E[hh^\top]^{-1/2} \E[hg^\top]
\E[gg^\top]^{-1/2} v \leq \|u\| \|v\|$ for all vectors $u$ and $v$ by
Cauchy-Schwarz, so
\[ \frac{\det(\E[hg^\top])^2}{\det(\E[hh^\top]) \cdot \det(\E[gg^\top])}
= \det(\E[hh^\top]^{-1/2} \E[hg^\top] \E[gg^\top]^{-1/2})^2 \leq 1
\]
as required.
\qed

\if 0
\subsection{Proof of Lemma~\ref{lemma:spectral-properties}}

Let $r$ be the least common ancestor of $\{z_1,z_2,z_3,z_4\}$ in $\Tree$. 
Let $U_{i,j} \in \R^{d \times k}$ and $V_{i,j} \in \R^{d \times k}$ be
matrices of orthonormal singular vectors corresponding to the largest
singular values of $\E[z_iz_j^\top]$.
There are effectively two cases to consider, depending on the location of
$r$ relative to the $z_i$, $h$, and $g$.
\begin{enumerate}
\item Suppose $r$ appears between $h$ and $z_1$.
\begin{center}
\begin{tikzpicture}
  [
    scale=1.0,
    observed/.style={circle,minimum size=0.5cm,inner sep=0mm,draw=black,fill=black!20},
    hidden/.style={circle,minimum size=0.5cm,inner sep=0mm,draw=black}
  ]
  \node [observed,name=z1] at ($(-2,1)$) {$z_1$};
  \node [observed,name=z2] at ($(-2,-1)$) {$z_2$};
  \node [observed,name=z3] at ($(2,1)$) {$z_3$};
  \node [observed,name=z4] at ($(2,-1)$) {$z_4$};
  \node [hidden,name=r] at ($(-4/3,0.5)$) {$r$};
  \node [hidden,name=h] at ($(-2/3,0)$) {$h$};
  \node [hidden,name=g] at ($(2/3,0)$) {$g$};
  \draw [->] (r) to (z1);
  \draw [->] (r) to (h);
  \draw [->] (h) to (z2);
  \draw [->] (h) to (g);
  \draw [->] (g) to (z3);
  \draw [->] (g) to (z4);
\end{tikzpicture}
\end{center}
Note that if the columns of $U \in \R^{d \times k}$ form an orthonormal
basis for the range of a full rank matrix $M \in \R^{d \times k}$, then
$|\det(U^\top M)| = \Det_k(M)$.
Using this together with Condition~\ref{cond:full-rank}, we have
\begin{align*}
\Det_k(\E[z_1z_2^\top])
& = |\det(U_{12}^\top \E[z_1z_2^\top] V_{12})| \\
& = |\det(U_{12}^\top \C{z_1|r} \E[rh^\top] \C{z_2|h}^\top V_{12})| \\
& = |\det(U_{12}^\top \C{z_1|r}) \cdot \det(\E[rh^\top]) \cdot \det(\C{z_2|h}^\top V_{12})| \\
& = |
\Det_k(\C{z_1|r})
\cdot \det(\E[rh^\top])
\cdot \Det_k(\C{z_2|h})
|
\\
\Det_k(\E[z_1z_3^\top])
& = |
\Det_k(\C{z_1|r})
\cdot \det(\E[rg^\top])
\cdot \Det_k(\C{z_3|g})
|
\\
& = |
\Det_k(\C{z_1|r})
\cdot \det(\E[rh^\top])
\cdot \det(\A{g|h})
\cdot \Det_k(\C{z_3|g})
|
\\
\Det_k(\E[z_1z_4^\top])
& = |
\Det_k(\C{z_1|r})
\cdot \det(\E[rh^\top])
\cdot \det(\A{g|h})
\cdot \Det_k(\C{z_4|g})
|
\\
\Det_k(\E[z_2z_3^\top])
& = |
\Det_k(\C{z_2|h})
\cdot \det(\E[hg^\top])
\cdot \Det_k(\C{z_3|g})
|
\\
\Det_k(\E[z_2z_4^\top])
& = |
\Det_k(\C{z_2|h})
\cdot \det(\E[hg^\top])
\cdot \Det_k(\C{z_4|g})
|
\\
\Det_k(\E[z_3z_4^\top])
& = |
\Det_k(\C{z_3|g})
\cdot \det(\E[gg^\top])
\cdot \Det_k(\C{z_4|g})
|
.
\end{align*}
Therefore
\[
\Det_k(\E[z_1z_3^\top])
\cdot \Det_k(\E[z_2z_4^\top])
= \Det_k(\E[z_1z_4^\top])
\cdot \Det_k(\E[z_2z_3^\top])
\]
and
\begin{align*}
\frac{\Det_k(\E[z_1z_3^\top]) \cdot \Det_k(\E[z_2z_4^\top])}
{\Det_k(\E[z_1z_2^\top]) \cdot \Det_k(\E[z_3z_4^\top])}
& =
\frac{\Det_k(\E[z_1z_4^\top]) \cdot \Det_k(\E[z_2z_3^\top])}
{\Det_k(\E[z_1z_2^\top]) \cdot \Det_k(\E[z_3z_4^\top])}
\\
& =
\frac{\det(\E[rh^\top]) \cdot \det(\A{g|h}) \cdot \det(\E[hg^\top])}
{\det(\E[rh^\top]) \cdot \det(\E[gg^\top])}
\\
& =
\frac{\det(\E[hh^\top]) \cdot \det(\A{g|h}) \cdot \det(\E[hg^\top])}
{\det(\E[hh^\top]) \cdot \det(\E[gg^\top])}
\\
& =
\frac{\det(\E[hg^\top])^2}
{\det(\E[hh^\top]) \cdot \det(\E[gg^\top])}
.
\end{align*}

\item Suppose either $r=h$, or $r$ appears between $h$ and $g$.
\begin{center}
\begin{tikzpicture}
  [
    scale=1.0,
    observed/.style={circle,minimum size=0.5cm,inner sep=0mm,draw=black,fill=black!20},
    hidden/.style={circle,minimum size=0.5cm,inner sep=0mm,draw=black}
  ]
  \node [observed,name=z1] at ($(-2,1)$) {$z_1$};
  \node [observed,name=z2] at ($(-2,-1)$) {$z_2$};
  \node [observed,name=z3] at ($(2,1)$) {$z_3$};
  \node [observed,name=z4] at ($(2,-1)$) {$z_4$};
  \node [hidden,name=h] at ($(-2/3,0)$) {$h$};
  \node [hidden,name=g] at ($(2/3,0)$) {$g$};
  \draw [->] (h) to (z1);
  \draw [->] (h) to (z2);
  \draw [->] (h) to (g);
  \draw [->] (g) to (z3);
  \draw [->] (g) to (z4);
\end{tikzpicture}
\qquad
\begin{tikzpicture}
  [
    scale=1.0,
    observed/.style={circle,minimum size=0.5cm,inner sep=0mm,draw=black,fill=black!20},
    hidden/.style={circle,minimum size=0.5cm,inner sep=0mm,draw=black}
  ]
  \node [observed,name=z1] at ($(-2,1)$) {$z_1$};
  \node [observed,name=z2] at ($(-2,-1)$) {$z_2$};
  \node [observed,name=z3] at ($(2,1)$) {$z_3$};
  \node [observed,name=z4] at ($(2,-1)$) {$z_4$};
  \node [hidden,name=r] at ($(0,0)$) {$r$};
  \node [hidden,name=h] at ($(-1,0)$) {$h$};
  \node [hidden,name=g] at ($(1,0)$) {$g$};
  \draw [->] (r) to (h);
  \draw [->] (r) to (g);
  \draw [->] (h) to (z1);
  \draw [->] (h) to (z2);
  \draw [->] (g) to (z3);
  \draw [->] (g) to (z4);
\end{tikzpicture}
\end{center}
In either case, we have
\begin{align*}
\Det_k(\E[z_1z_2^\top])
& = |
\Det_k(\C{z_1|h})
\cdot \det(\E[hh^\top])
\cdot \Det_k(\C{z_2|h})
|
\\
\Det_k(\E[z_1z_3^\top])
& = |
\Det_k(\C{z_1|h})
\cdot \det(\E[hg^\top])
\cdot \Det_k(\C{z_3|g})
|
\\
\Det_k(\E[z_1z_4^\top])
& = |
\Det_k(\C{z_1|h})
\cdot \det(\E[hg^\top])
\cdot \Det_k(\C{z_4|g})
|
\\
\Det_k(\E[z_2z_3^\top])
& = |
\Det_k(\C{z_2|h})
\cdot \det(\E[hg^\top])
\cdot \Det_k(\C{z_3|g})
|
\\
\Det_k(\E[z_2z_4^\top])
& = |
\Det_k(\C{z_2|h})
\cdot \det(\E[hg^\top])
\cdot \Det_k(\C{z_4|g})
|
\\
\Det_k(\E[z_3z_4^\top])
& = |
\Det_k(\C{z_3|g})
\cdot \det(\E[gg^\top])
\cdot \Det_k(\C{z_4|g})
|
.
\end{align*}
Therefore
\[
\Det_k(\E[z_1z_3^\top])
\cdot \Det_k(\E[z_2z_4^\top])
= \Det_k(\E[z_1z_4^\top])
\cdot \Det_k(\E[z_2z_3^\top])
\]
and
\[
\frac{\Det_k(\E[z_1z_3^\top]) \cdot \Det_k(\E[z_2z_4^\top])}
{\Det_k(\E[z_1z_2^\top]) \cdot \Det_k(\E[z_3z_4^\top])}
=
\frac{\Det_k(\E[z_1z_4^\top]) \cdot \Det_k(\E[z_2z_3^\top])}
{\Det_k(\E[z_1z_2^\top]) \cdot \Det_k(\E[z_3z_4^\top])}
=
\frac{\det(\E[hg^\top])^2}{\det(\E[hh^\top]) \cdot \det(\E[gg^\top])}
.
\]
\end{enumerate}
Finally, note that $u^\top \E[hh^\top]^{-1/2} \E[hg^\top]
\E[gg^\top]^{-1/2} v \leq \|u\| \|v\|$ for all vectors $u$ and $v$ by
Cauchy-Schwarz, so
\[ \frac{\det(\E[hg^\top])^2}{\det(\E[hh^\top])\det(\E[gg^\top])}
= \det(\E[hh^\top]^{-1/2} \E[hg^\top] \E[gg^\top]^{-1/2})^2 \leq 1
\]
as required.
\qed
\fi

Note that if Condition~\ref{cond:non-redundancy} also holds, then
Lemma~\ref{lemma:spectral-properties} implies the strict inequalities
\[\max\left\{
\Det_k(\E[z_1z_3^\top]) \cdot \Det_k(\E[z_2z_4^\top]), \
\Det_k(\E[z_1z_4^\top]) \cdot \Det_k(\E[z_2z_3^\top])
\right\} < \Det_k(\E[z_1z_2^\top]) \cdot \Det_k(\E[z_3z_4^\top])
. \]

\subsection{Proof of Lemma~\ref{lemma:reliability}}

Given that \eqref{eq:confidence-bounds-test} holds for all pairs
$\{i,j\}$ and all $s \in \{1,2,\dotsc,k\}$, if the spectral quartet test
returns a pairing $\{\{z_i,z_j\},\{z_{i'},z_{j'}\}\}$, it must be that
\begin{align*}
\prod_{s=1}^k \sigma_s(\E[z_iz_j^\top]) \sigma_s(\E[z_{i'}z_{j'}^\top])
& \geq
\prod_{s=1}^k [\sigma_s(\hat\Sig_{i,j}) - \Delta_{i,j}]_+
[\sigma_s(\hat\Sig_{i',j'}) - \Delta_{i',j'}]_+
\\
& >
\prod_{s=1}^k (\sigma_s(\hat\Sig_{i',j}) + \Delta_{i',j})
(\sigma_s(\hat\Sig_{i,j'}) + \Delta_{i,j'})
\geq
\prod_{s=1}^k \sigma_s(\E[z_{i'}z_j^\top]) \sigma_s(\E[z_iz_{j'}^\top])
.
\end{align*}
Therefore
\begin{align*}
\Det_k(\E[z_iz_j^\top]) \cdot \Det_k(\E[z_{i'}z_{j'}^\top])
& = \prod_{s=1}^k \sigma_s(\E[z_iz_j^\top]) \sigma_s(\E[z_{i'}z_{j'}^\top])
\\
& >
\prod_{s=1}^k \sigma_s(\E[z_{i'}z_j^\top]) \sigma_s(\E[z_iz_{j'}^\top])
=
\Det_k(\E[z_{i'}z_j^\top]) \cdot \Det_k(\E[z_iz_{j'}^\top])
.
\end{align*}
But by Lemma~\ref{lemma:spectral-properties}, the above inequality can only
hold if $\{\{z_i,z_j\},\{z_{i'},z_{j'}\}\} = \{\{z_1,z_2\},\{z_3,z_4\}\}$.
\qed

\subsection{Proof of Lemma~\ref{lem:usefulrankk}}

Let $\Sig_{i,j} := \E[z_iz_j^\top]$.
%To prove Lemma~\ref{lem:usefulrankk}, we condition on the event that, for
%all $i$ and $j$ and $1 \leq s \leq k$,
%\[
%\sigma_s(\Sig_{ij}) - \Delta_{ij}
%\leq \sigma_s(\Sig_{ij}) - \|\hat\Sig_{ij}-\Sig_{ij}\|
%\leq \sigma_s(\hat\Sig_{ij})
%\leq \sigma_s(\Sig_{ij}) + \|\hat\Sig_{ij}-\Sig_{ij}\|
%\leq \sigma_s(\Sig_{ij}) + \Delta_{ij}\]
%for each pair $\{i,j\}$ and $s \in \{1,2,\dotsc,k\}$. From
%Lemma~\ref{lem:confinterval}, this happens with probability at least $1 -
%\delta$.
%
The assumptions in the statement of the lemma imply
\[
\max\{ \Delta_{1,2}, \Delta_{3,4} \}
< \frac{\epsilon_0}{8k} \min \{\sigma_k(\Sig_{1,2}),
\sigma_k(\Sig_{3,4})\}
\]
where $\epsilon_0 := \min\left\{ \frac{1}{\rho} - 1, \ \ 1 \right\}$.
Therefore
\begin{align}
\prod_{s=1}^k
[\sigma_s(\hat\Sig_{1,2}) - \Delta_{1,2}]_+
[\sigma_s(\hat\Sig_{3,4}) - \Delta_{3,4}]_+
& \geq \prod_{s=1}^k
[\sigma_s(\Sig_{1,2}) - 2\Delta_{1,2}]_+
[\sigma_s(\Sig_{3,4}) - 2\Delta_{3,4}]_+
\nonumber \\
& >
\left( \prod_{s=1}^k
\sigma_s(\Sig_{1,2})
\sigma_s(\Sig_{3,4}) \right)
\left( 1 - \frac{\epsilon_0}{4k} \right)^{2k}
\nonumber \\
& \geq
\left( \prod_{s=1}^k
\sigma_s(\Sig_{1,2})
\sigma_s(\Sig_{3,4}) \right)
(1 - \epsilon_0/2)
.
\label{eq:det-lb}
\end{align}
If $\E[hg^\top]$ has rank $k$, then so do $\Sig_{i,j}$ for $i \in \{1,2\}$
and $j \in \{3,4\}$.
Therefore, for $\{i',j'\} = \{1,2,3,4\}\setminus\{i,j\}$,
\[
\max\{ \Delta_{i,j}, \Delta_{i',j'} \}
< \frac{\epsilon_0}{8k} \min \{\sigma_k(\Sig_{i',j'}),
\sigma_k(\Sig_{i',j'})\}
.
\]
This implies
\begin{align}
\prod_{s=1}^k
(\sigma_s(\hat\Sig_{i,j}) + \Delta_{i,j})
(\sigma_s(\hat\Sig_{i',j'}) + \Delta_{i',j'})
& \leq
\prod_{s=1}^k
(\sigma_s(\Sig_{i,j}) + 2\Delta_{i,j})
(\sigma_s(\Sig_{i',j'}) + 2\Delta_{i',j'})
\nonumber \\
& <
\left( \prod_{s=1}^k
\sigma_s(\Sig_{i,j})
\sigma_s(\Sig_{i',j'}) \right)
\left( 1 + \frac{\epsilon_0}{4k} \right)^{2k}
\nonumber \\
& \leq
\left( \prod_{s=1}^k
\sigma_s(\Sig_{i,j})
\sigma_s(\Sig_{i',j'}) \right)
(1 + \epsilon_0)
.
\label{eq:det-ub-rank-k}
\end{align}
Therefore, combining~\eqref{eq:det-lb}, \eqref{eq:det-ub-rank-k}, and
Lemma~\ref{lemma:spectral-properties},
\begin{align*}
\lefteqn{
\prod_{s=1}^k
[\sigma_s(\hat\Sig_{1,2}) - \Delta_{1,2}]_+
[\sigma_s(\hat\Sig_{3,4}) - \Delta_{3,4}]_+
} \\
& >
\frac{1 - \epsilon_0/2}{1+\epsilon_0}
\cdot \frac{\det(\E[hh^\top])\det(\E[gg^\top])}{\det(\E[hg^\top])^2}
\cdot
\prod_{s=1}^k
(\sigma_s(\hat\Sig_{i,j}) + \Delta_{i,j})
(\sigma_s(\hat\Sig_{i',j'}) + \Delta_{i',j'})
\\
& \geq
\frac{1}{(1+\epsilon_0)^2}
\cdot \frac{\det(\E[hh^\top])\det(\E[gg^\top])}{\det(\E[hg^\top])^2}
\cdot
\prod_{s=1}^k
(\sigma_s(\hat\Sig_{i,j}) + \Delta_{i,j})
(\sigma_s(\hat\Sig_{i',j'}) + \Delta_{i',j'})
\\
& \geq
\prod_{s=1}^k
(\sigma_s(\hat\Sig_{i,j}) + \Delta_{i,j})
(\sigma_s(\hat\Sig_{i',j'}) + \Delta_{i',j'})
,
\end{align*}
so the spectral quartet test will return the correct pairing
$\{\{z_1,z_2\},\{z_3,z_4\}\}$, proving the lemma.
\qed

\subsection{Conditions for returning a correct pairing when
$\rank(\E[hg^\top]) < k$}

The spectral quartet test is also useful in the case where $\E[hg^\top]$
has rank $r < k$.
In this case, the widths of the confidence intervals are allowed to be
wider than in the case where $\rank(\E[hg^\top]) = k$.
Define
\[ \sigma_{\min}
  := \min \Bigl( \left\{ \sigma_k(\Sig_{1,2}),
  \sigma_k(\Sig_{3,4}) \right\} \cup
  \left\{ \sigma_r(\Sig_{i,j}) \colon i \in
  \{1,2\}, j \in \{3,4\} \right\} \Bigr) . \]
\[ \rho_1^2 = \frac{\sigma_{\min}^{2(k-r)} \cdot \max_{i,j,i',j'} \prod_{s=1}^r
  \sigma_s(\Sig_{i,j}) \sigma_s(\Sig_{i',j'})}
  {\prod_{s=1}^k \sigma_s(\Sig_{1,2}) \sigma_s(\Sig_{3,4})}
  .
\]
Instead of depending on $\min_{i,j} \{ \sigma_k(\Sig_{i,j}) \}$ and $\rho$
as in the case where $\rank(\E[hg^\top]) = k$, we only depend on
$\sigma_{\min}$ and $\rho_1$.

\begin{lemma}[Correct pairing, rank $r < k$]\label{lem:usefulrankr}
Suppose that (i) the observed variables $\Z = \{ z_1, z_2, z_3, z_4 \}$
have the true induced (undirected) topology shown in
Figure~\ref{fig:topologies}(a), (ii) the tree model satisfies
Condition~\ref{cond:linear} and Condition~\ref{cond:full-rank}, (iii)
$\E[hg^\top]$ has rank $r < k$, and (iv) the confidence bounds
in~\eqref{eq:confidence-bounds-test} hold for all $\{i,j\}$ and all $s \in
[k]$.
If
\[ \Delta_{i,j} < \frac{1}{8k} \cdot \min\left\{ 1, \ 8k
\left(\frac{1}{2\rho_1}\right)^{\frac{1}{k - r}} \right\} \cdot
\sigma_{\min} \]
for each $\{i,j\}$, then Algorithm~\ref{alg:quartettest} returns the
correct pairing $\{ \{z_1, z_2\}, \{z_3, z_4\}\}$.
\end{lemma}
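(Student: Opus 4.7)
The plan is to mirror the proof of Lemma~\ref{lem:usefulrankk}, but to account for the fact that the cross pairs $\Sig_{i,j} := \E[z_iz_j^\top]$ for $i\in\{1,2\},\,j\in\{3,4\}$ now have rank exactly $r<k$ (since $\C{z_i|h}$ and $\C{z_j|g}$ have rank $k$ by Condition~\ref{cond:full-rank} while $\E[hg^\top]$ has rank $r$), so their last $k-r$ singular values vanish. Set $\epsilon_0 := \min\{1,\ 8k(1/(2\rho_1))^{1/(k-r)}\}$, so that the hypothesis reads $\Delta_{i,j} < \epsilon_0\,\sigma_{\min}/(8k)$. The strategy is the same as before: lower-bound the sibling product $\prod_{s=1}^{k}[\sigma_s(\hat\Sig_{1,2})-\Delta_{1,2}]_+[\sigma_s(\hat\Sig_{3,4})-\Delta_{3,4}]_+$, upper-bound each cross product $\prod_{s=1}^{k}(\sigma_s(\hat\Sig_{i,j})+\Delta_{i,j})(\sigma_s(\hat\Sig_{i',j'})+\Delta_{i',j'})$, and show that the first strictly dominates the second, so that $\SQT$ returns the correct pairing.

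For the sibling product, both $\Sig_{1,2}$ and $\Sig_{3,4}$ have full rank $k$, and their least singular values are at least $\sigma_{\min}$. Using the confidence bound $[\sigma_s(\hat\Sig)-\Delta]_+\ge \sigma_s(\Sig)-2\Delta$ together with $\Delta\le \sigma_{\min}/(8k)$, I would derive, exactly as in~\eqref{eq:det-lb}, that this product is at least $(1-\epsilon_0/2)\prod_{s=1}^k\sigma_s(\Sig_{1,2})\sigma_s(\Sig_{3,4})$.

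The cross product requires splitting into two regimes. For $s\le r$, $\sigma_s(\Sig_{i,j})\ge \sigma_{\min}>0$, so $\sigma_s(\hat\Sig_{i,j})+\Delta_{i,j}\le \sigma_s(\Sig_{i,j})+2\Delta_{i,j}\le \sigma_s(\Sig_{i,j})(1+\epsilon_0/(4k))$, exactly as in the rank-$k$ case. For $s>r$, $\sigma_s(\Sig_{i,j})=0$, so by the confidence bound $\sigma_s(\hat\Sig_{i,j})\le \Delta_{i,j}$, giving $\sigma_s(\hat\Sig_{i,j})+\Delta_{i,j}\le 2\Delta_{i,j}$. Multiplying over $s$ and over both cross factors yields an upper bound of the form
\[
(1+\epsilon_0)\cdot\Bigl(\prod_{s=1}^{r}\sigma_s(\Sig_{i,j})\sigma_s(\Sig_{i',j'})\Bigr)\cdot (2\Delta_{i,j})^{k-r}(2\Delta_{i',j'})^{k-r}.
\]

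To conclude, I would bound the bracketed product by $\max_{i,j,i',j'}\prod_{s=1}^{r}\sigma_s(\Sig_{i,j})\sigma_s(\Sig_{i',j'})$ and then invoke the definition of $\rho_1^2$ to rewrite this as $\rho_1^2\prod_{s=1}^{k}\sigma_s(\Sig_{1,2})\sigma_s(\Sig_{3,4})/\sigma_{\min}^{2(k-r)}$. It then remains to verify the inequality $(2\Delta/\sigma_{\min})^{2(k-r)}\cdot \rho_1^2 \cdot (1+\epsilon_0) < (1-\epsilon_0/2)$, which after using $\Delta<\epsilon_0\sigma_{\min}/(8k)$ reduces (for $\epsilon_0\le 1$) to $(\epsilon_0/(4k))^{2(k-r)}\rho_1^2 < 1/3$ or similar. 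Plugging in the second branch of the $\min$ defining $\epsilon_0$, namely $\epsilon_0 \le 8k(1/(2\rho_1))^{1/(k-r)}$, makes $(\epsilon_0/(4k))^{2(k-r)}\rho_1^2\le (2\rho_1)^{-2}\rho_1^2 \cdot 2^{2(k-r)}$, which collapses correctly to a constant. The main obstacle (and only genuine novelty versus Lemma~\ref{lem:usefulrankk}) is handling the indices $s>r$: there is no nonzero signal for these singular values, so the comparison is driven purely by the confidence-interval width $\Delta$, and matching the resulting $\Delta^{k-r}$ factor against $\rho_1$ is precisely what forces the $(1/(2\rho_1))^{1/(k-r)}$ scaling in the threshold on $\Delta_{i,j}$. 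Note that Lemma~\ref{lemma:spectral-properties} is no longer directly invoked for the ratio identity (since cross products of $k$ singular values are zero); its role is taken over by the definition of $\rho_1$.
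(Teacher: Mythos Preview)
Your approach is exactly the paper's: split the cross-pair product into $s\le r$ (handled as in Lemma~\ref{lem:usefulrankk}) and $s>r$ (where $\sigma_s(\Sig_{i,j})=0$ so only the confidence width survives), then convert via the definition of $\rho_1$ to a comparison with the sibling product. Your identification of the main novelty---that the $s>r$ factors contribute a pure $\Delta^{k-r}$ term, which is what forces the $(1/(2\rho_1))^{1/(k-r)}$ scaling---is on point.

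There is, however, a genuine gap in your closing arithmetic. You correctly bound each $s>r$ factor by $2\Delta_{i,j}<\epsilon_0\sigma_{\min}/(4k)$, arrive at
\[
\Bigl(\tfrac{\epsilon_0}{4k}\Bigr)^{2(k-r)}\rho_1^2 \ \le\ 2^{2(k-r)}\cdot\frac{\rho_1^2}{(2\rho_1)^{2}}\ =\ 2^{2(k-r)-2},
\]
and then assert this ``collapses correctly to a constant.'' It does not: it equals $1$ only when $k-r=1$ and grows exponentially in $k-r$, so the required inequality $(\epsilon_0/(4k))^{2(k-r)}\rho_1^2\cdot(1+\epsilon_0)<(1-\epsilon_0/2)$ fails already for $k-r=2$. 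The paper's displayed chain avoids this by writing $(\epsilon_1\sigma_{\min}/(8k))^{2(k-r)}$ for the $s>r$ contribution---i.e., it uses $\Delta_{i,j}$ rather than $2\Delta_{i,j}$ per factor---after which the final check $\rho_1^2(1+\epsilon_1)^2(\epsilon_1/(8k))^{2(k-r)}\le 1$ goes through exactly. But since the confidence bound only gives $\sigma_s(\hat\Sig_{i,j})+\Delta_{i,j}\le 2\Delta_{i,j}$ when $\sigma_s(\Sig_{i,j})=0$, the paper's step has the same missing factor of $2$ per term that you made explicit. In short, your strategy matches the paper's, but neither your arithmetic nor the paper's closes with the stated constants; a cosmetic tightening (e.g., $\tfrac{1}{16k}$ in place of $\tfrac{1}{8k}$, or $4\rho_1$ in place of $2\rho_1$ in the definition of the threshold) makes the argument go through verbatim.
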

Note that the allowed width increases (to a point) as the rank $r$
decreases.

\begin{proof}%[Proof of Lemma~\ref{lem:usefulrankr}]

The assumptions in the statement of the lemma imply
\[
\max\{ \Delta_{i,j} : \{i,j\} \subset [4] \}
< \frac{\epsilon_1\sigma_{\min}}{8k}
\]
where
\[ \epsilon_1 :=
\min\left\{ 8k \cdot \left(\frac{1}{2\rho_1}\right)^{\frac{1}{k-r}}, \ 1
\right\}
. \]
We have
\[
\prod_{s=1}^k
[\sigma_s(\hat\Sig_{1,2}) - \Delta_{1,2}]_+
[\sigma_s(\hat\Sig_{3,4}) - \Delta_{3,4}]_+
>
\left( \prod_{s=1}^k
\sigma_s(\Sig_{1,2})
\sigma_s(\Sig_{3,4}) \right)
(1 - \epsilon_1/2)
\]
as in the proof of Lemma~\ref{lem:usefulrankk}.
Moreover,
\begin{align*}
\lefteqn{
\prod_{s=1}^k
(\sigma_s(\hat\Sig_{i,j}) + \Delta_{i,j})
(\sigma_s(\hat\Sig_{i',j'}) + \Delta_{i',j'})
} \\
& <
\left( \prod_{s=1}^r \sigma_s(\Sig_{i,j}) \sigma_s(\Sig_{i',j'}) \right)
\cdot (1 + \epsilon_1)
\cdot \left(\frac{\epsilon_1\sigma_{\min}}{8k}\right)^{2(k-r)}
\\
& \leq
\left( \prod_{s=1}^k \sigma_s(\Sig_{1,2}) \sigma_s(\Sig_{3,4}) \right)
\cdot \frac{\rho_1^2}{(\sigma_{\min})^{2(k-r)}}
\cdot (1 + \epsilon_1)
\cdot \left(\frac{\epsilon_1\sigma_{\min}}{8k}\right)^{2(k-r)}
\\
& =
\left( \prod_{s=1}^k \sigma_s(\Sig_{1,2}) \sigma_s(\Sig_{3,4}) \right)
\cdot \rho_1^2
\cdot (1 + \epsilon_1)
\cdot \left(\frac{\epsilon_1}{8k}\right)^{2(k-r)}
\\
& <
\left( \prod_{s=1}^k
[\sigma_s(\hat\Sig_{1,2}) - \Delta_{1,2}]_+
[\sigma_s(\hat\Sig_{3,4}) - \Delta_{3,4}]_+
\right)
\cdot \rho_1^2
\cdot \frac{1 + \epsilon_1}{1-\epsilon_1/2}
\cdot \left(\frac{\epsilon_1}{8k}\right)^{2(k-r)}
\\
& \leq
\left( \prod_{s=1}^k
[\sigma_s(\hat\Sig_{1,2}) - \Delta_{1,2}]_+
[\sigma_s(\hat\Sig_{3,4}) - \Delta_{3,4}]_+
\right)
\cdot \rho_1^2
\cdot (1 + \epsilon_1)^2
\cdot \left(\frac{\epsilon_1}{8k}\right)^{2(k-r)}
\\
& \leq
\prod_{s=1}^k
[\sigma_s(\hat\Sig_{1,2}) - \Delta_{1,2}]_+
[\sigma_s(\hat\Sig_{3,4}) - \Delta_{3,4}]_+
.
\end{align*}
Therefore the spectral quartet test will return the correct pairing
$\{\{z_1,z_2\},\{z_3,z_4\}\}$; the lemma follows.
\end{proof}

\section{Analysis of Spectral Recursive Grouping}

\subsection{Overview}

Here is an outline of the argument for Theorem~\ref{theorem:rg}.
\begin{enumerate}
\item First, we condition on a $1-\eta$ probability event over the iid
samples from the distribution over $\Vobs$ in which the empirical
second-moment matrices are sufficiently close to the true second-moment
matrices in by spectral norm (Equation~\ref{eq:confidence-bounds-event}).
This is required to reason deterministically about the behavior of the
algorithm.

\item Next, we characterize the pairs $\{u,v\} \subseteq \Roots$ (where
$\Roots$ are the roots of subtrees maintained by the algorithm) that cause
the $\Mergeable$ subroutine to return true.
(Lemma~\ref{lemma:mergeable}), as well as those that cause it to return
false (Lemma~\ref{lemma:not-mergeable}).

\item We use the above characterizations to show that the main while-loop
of the algorithm maintains loop invariants such that when the loop finally
terminates, the entire tree structure will have been completely discovered
(Lemma~\ref{lemma:loop-invariant}).
This is achieved by showing each iteration of the while-loop
\begin{enumerate}

\item selects a ``$\Mergeable$'' pair $\{u,v\} \subseteq \Roots$ that
satisfies certain properties (Claim~\ref{claim:goodpair} and
Claim~\ref{claim:badpair}) such that, if they are properly combined (as
siblings or parent/child), the required loop invariants will be perserved;
and

\item uses the $\Relationship$ subroutine to correctly determine whether
the chosen pair $\{u,v\}$ should be combined as siblings or parent/child
(Claim~\ref{claim:relationship}).

\end{enumerate}

\end{enumerate}

\subsection{Proof of Theorem~\ref{theorem:rg}}

Recall the definitions of $\A{g|h} \in \R^{k \times k}$ and $\C{x|h} \in
\R^{d \times k}$ for descendants $g \in \Des(h) \cap \Vhid$ and $x \in
\Des(h) \cap \C{x|h}$ in $\Tree$, as given in
Appendix~\ref{appendix:section3}.

Let us define
\begin{align*}
\epsmin & := \min\left\{ \frac1{\rhomax} - 1, \ 1 \right\} ,
& \veps & := \frac{\gammamin/\gammamax}{8k+\gammamin/\gammamax} ,
\\
\theta & := \frac{\gammamin}{1 + \veps} ,
& \vsig & := \frac{\gammamin}{\gammamax} \cdot
(1-\veps) \cdot \theta
.
\end{align*}
The sample size requirement ensures that
\[
\Delta_{x_i,x_j} <
\frac{\epsmin \cdot \vsig}{8k}
\leq \veps \theta
.
\]
This implies conditions on the thresholds $\Delta_{x_i,x_j}$ in
Lemma~\ref{lem:usefulrankk} for the spectral quartet test on
$\{x_1,x_2,x_3,x_4\}$ to return a correct pairing, provided that
\begin{equation} \label{eq:min-correlation-requirement}
\min \{ \sigma_k(\Sig_{x_i,x_j}) : \{i,j\}
\subset \{1,2,3,4\} \}
\geq \varsigma
.
\end{equation}

The probabilistic event we need is that in which the confidence bounds from
Lemma~\ref{lemma:spectral-bernstein} hold for each pair of observed
variables.
The event
\begin{equation} \label{eq:confidence-bounds-event}
\forall \{x_i,x_j\} \subseteq \Vobs \centerdot
\|\wh\Sig_{x_i,x_j} - \Sig_{x_i,x_j}\|
\leq \Delta_{x_i,x_j}
%\leq \frac{\epsmin \cdot \varsigma}{8k}
%\leq \veps\theta
,
\end{equation}
occurs with probability at least $1-\eta$ by
Lemma~\ref{lemma:spectral-bernstein} and a union bound.
We henceforth condition on the above event.

The following is an immediate consequence of Weyl's Theorem and
conditioning on the above event.
\begin{lemma} \label{lemma:weyl-application}
Fix any pair $\{x,y\} \subseteq \Vobs$.
If $\sigma_k(\Sig_{x,y}) \geq (1+\veps)\theta$, then
$\sigma_k(\wh\Sig_{x,y}) \geq \theta$.
If $\sigma_k(\wh\Sig_{x,y}) \geq \theta$, then $\sigma_k(\Sig_{x,y}) \geq
(1-\veps)\theta$.
\end{lemma}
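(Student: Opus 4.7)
The plan is to observe that this lemma is essentially a direct consequence of Weyl's inequality for singular values, combined with the bound $\Delta_{x,y} \leq \veps\theta$ that was already established just above the lemma statement as a consequence of the sample size requirement.

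First, I would recall Weyl's inequality for singular values: for any two matrices $A, B$ of the same dimension,
\[
|\sigma_k(A) - \sigma_k(B)| \leq \|A - B\|.
\]
Applying this to $A = \wh\Sig_{x,y}$ and $B = \Sig_{x,y}$, and using the conditioning on the event~\eqref{eq:confidence-bounds-event}, we obtain
\[
|\sigma_k(\wh\Sig_{x,y}) - \sigma_k(\Sig_{x,y})| \leq \|\wh\Sig_{x,y} - \Sig_{x,y}\| \leq \Delta_{x,y}.
\]
The text just before the lemma notes that the sample size requirement yields $\Delta_{x_i,x_j} < \tfrac{\epsmin \cdot \vsig}{8k} \leq \veps\theta$, so in particular $\Delta_{x,y} \leq \veps\theta$ for every pair $\{x,y\} \subseteq \Vobs$.

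With this two-sided bound in hand, each implication is a one-line calculation. For the first, assuming $\sigma_k(\Sig_{x,y}) \geq (1+\veps)\theta$, Weyl gives
\[
\sigma_k(\wh\Sig_{x,y}) \geq \sigma_k(\Sig_{x,y}) - \veps\theta \geq (1+\veps)\theta - \veps\theta = \theta.
\]
For the second, assuming $\sigma_k(\wh\Sig_{x,y}) \geq \theta$,
\[
\sigma_k(\Sig_{x,y}) \geq \sigma_k(\wh\Sig_{x,y}) - \veps\theta \geq \theta - \veps\theta = (1-\veps)\theta.
\]

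There is no real obstacle here; the lemma is a bookkeeping statement that isolates the Weyl-type perturbation argument so that subsequent proofs can invoke it without having to re-derive the relationship between the true and empirical $\sigma_k$ each time. The only place one has to be careful is in confirming that the chain $\Delta_{x,y} \leq \tfrac{\epsmin\vsig}{8k} \leq \veps\theta$ genuinely holds from the definitions of $\epsmin$, $\veps$, $\theta$, and $\vsig$ given in the preceding paragraph; once that is checked, the rest is immediate.
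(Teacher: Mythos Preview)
Your proposal is correct and matches the paper's approach exactly: the paper simply states that the lemma ``is an immediate consequence of Weyl's Theorem and conditioning on the above event,'' and your argument spells out precisely those two ingredients together with the bound $\Delta_{x,y} < \veps\theta$ derived from the sample size requirement.
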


Before continuing, we need some definitions and notation.
First, we refer to the variables in $\Vars$ interchangeably as both nodes
and variables.
Next, we generally ignore the direction of edges in $\Tree$, except when it
becomes crucial (namely, in Lemma~\ref{lemma:transfer}).
For a node $r$ in $\Tree$, we say that a subtree $\Subtree[r]$ of $\Tree$
(ignoring edge directions) is \emph{rooted at $r$} if $\Subtree[r]$
contains $r$, and for every node $u$ in $\Subtree[r]$ and any node $v$ not
in $\Subtree[r]$, the (undirected) path from $u$ to $v$ in $\Tree$ passes
through $r$.
Note that a rooted subtree naturally imply parent/child relationships
between its constituent nodes, and it is in this sense we use the terms
``parent'', ``child'', ``sibling'', etc.~throughout the analysis, rather
than in the sense given by the edge directions in $\Tree$ (the exception is
in Lemma~\ref{lemma:transfer}).
A collection $\Comps$ of disjoint rooted subtrees of $\Tree$ naturally
gives rise to a \emph{super-tree} $\Supertree[\Comps]$ by starting with
$\Tree$ and then collapsing each $\Subtree[r] \in \Comps$ into a single
node.
Note that each node in $\Supertree[\Comps]$ is either associated with a
subtree in $\Comps$, or is a node in $\Tree$ that doesn't appear in any
subtree in $\Comps$.
We say a subtree $\Subtree \in \Comps$ is a \emph{leaf component relative
to $\Comps$} if it is a leaf in this super-tree $\Supertree[\Comps]$.
Finally, define $\Vhid[\Comps] := \{ h \in \Vhid : \text{$h$ does not
appear in any subtree in $\Comps$} \}$.

The following lemma is a simple fact about the super-tree given properties
on the subtrees (which will be maintained by the algorithm).
\begin{lemma}[Super-tree property] \label{lemma:supertree}
Let $\Roots \subseteq \Vars$.
Let $\Comps := \{ \Subtree[u] : u \in \Roots \}$ be a collection of
disjoint rooted subtrees, with $u$ being the root of $\Subtree[u]$, such
that their leaf sets $\{ \Leaves[u] : u \in \Roots \}$ partition $\Vobs$.
Then the nodes of the super-tree $\Supertree[\Comps]$ are $\Comps \cup
\Vhid[\Comps]$, and the leaves of $\Supertree[\Comps]$ are all in $\Comps$.
\end{lemma}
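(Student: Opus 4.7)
The plan is to verify the two assertions in turn, both of which follow directly from the construction of $\Supertree[\Comps]$ together with standard tree-path arguments plus Condition~\ref{cond:non-redundancy}.

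First I would handle the node-set claim. By the construction of the super-tree, the vertices of $\Supertree[\Comps]$ are exactly (i) one ``super-node'' for each $\Subtree[u] \in \Comps$, plus (ii) one node for every node of $\Tree$ that does not lie in any $\Subtree[u]$ with $u \in \Roots$. The subtrees in $\Comps$ are disjoint, so (i) contributes exactly $|\Comps|$ nodes. The hypothesis that $\{\Leaves[u] : u \in \Roots\}$ partitions $\Vobs$ tells me every observed variable already appears inside some $\Subtree[u]$, so no node in (ii) can be observed. Therefore the nodes in (ii) are precisely the hidden variables not contained in any subtree of $\Comps$, which is the definition of $\Vhid[\Comps]$. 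Putting (i) and (ii) together gives the node set $\Comps \cup \Vhid[\Comps]$.

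Next I would show every leaf of $\Supertree[\Comps]$ lies in $\Comps$ — equivalently, no element of $\Vhid[\Comps]$ is a leaf. Fix $h \in \Vhid[\Comps]$. By Condition~\ref{cond:non-redundancy}, $h$ has at least three neighbors in $\Tree$; call three of them $n_1, n_2, n_3$. Each $n_i$ corresponds in $\Supertree[\Comps]$ either to itself (if $n_i \in \Vhid[\Comps]$) or to the unique $\Subtree[u] \in \Comps$ containing it. The crucial sub-claim is that these three correspondences land in distinct super-tree nodes. Suppose instead that $n_i$ and $n_j$ ($i \neq j$) both lie in the same $\Subtree[u]$. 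Because $\Tree$ is a tree, the unique path in $\Tree$ from $n_i$ to $n_j$ passes through their common neighbor $h$; but $\Subtree[u]$, being a (connected) subtree of $\Tree$, also contains the unique $\Tree$-path between any two of its nodes, forcing $h \in \Subtree[u]$. This contradicts $h \in \Vhid[\Comps]$. Hence $h$ has at least three distinct neighbors in $\Supertree[\Comps]$, so $h$ is not a leaf.

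The main — and really the only nontrivial — obstacle is the uniqueness-of-tree-paths argument that prevents two neighbors of $h$ from collapsing to the same super-node; once that is in place, both conclusions of the lemma are essentially bookkeeping on top of the definitions of $\Supertree[\Comps]$ and $\Vhid[\Comps]$.
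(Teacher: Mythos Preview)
Your proposal is correct and follows essentially the same reasoning as the paper's (one-sentence) proof, just with the details spelled out. One minor remark: you invoke Condition~\ref{cond:non-redundancy} to obtain three neighbors of $h$, but this is more than you need---since $h$ is an internal node of $\Tree$ it already has degree at least two, and your distinctness argument then shows $h$ has at least two neighbors in $\Supertree[\Comps]$, which is all that is required to conclude it is not a leaf.
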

\begin{proof}
This follows because each leaf in $\Tree$ appears in the leaf set of some
$\Subtree[u]$.
\end{proof}

The next lemma relates the correlation between two observed variables in a
quartet (on opposite sides of the bottleneck) to the correlations of the
other pairs crossing the bottleneck.
\begin{lemma}[Correlation transfer] \label{lemma:transfer}
Consider the following induced (undirected) topology over $\{z_1, z_2, z_3,
z_4 \} \subseteq \Vobs$.
\begin{center}
\begin{tikzpicture}
  [
    scale=1.0,
    observed/.style={circle,minimum size=0.5cm,inner sep=0mm,draw=black,fill=black!20},
    hidden/.style={circle,minimum size=0.5cm,inner sep=0mm,draw=black}
  ]
  \node [observed,name=z1] at ($(-1,0.5)$) {$z_1$};
  \node [observed,name=z2] at ($(-1,-0.5)$) {$z_2$};
  \node [observed,name=z3] at ($(1,0.5)$) {$z_3$};
  \node [observed,name=z4] at ($(1,-0.5)$) {$z_4$};
  \node [hidden,name=h] at ($(-1/3,0)$) {$h$};
  \node [hidden,name=g] at ($(1/3,0)$) {$g$};
  \draw [-] (h) to (z1);
  \draw [-] (h) to (z2);
  \draw [-] (h) to (g);
  \draw [-] (g) to (z3);
  \draw [-] (g) to (z4);
\end{tikzpicture}
\end{center}
Then
\[
\sigma_k(\E[z_1z_4^\top])
\geq \frac{\sigma_k(\E[z_1z_3^\top])
\sigma_k(\E[z_2z_4^\top])}{\sigma_1(\E[z_2z_3^\top])}
.
\]
\end{lemma}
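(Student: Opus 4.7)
The plan is to exploit the fact that, under Conditions~\ref{cond:linear} and~\ref{cond:full-rank}, all four cross-correlation matrices between the two sides of the bottleneck $\{h,g\}$ admit a common ``rank-$k$ bottleneck'' factorization. Specifically, I would first establish that there is an invertible $k \times k$ matrix $M$ and rank-$k$ matrices $\C{z_i|h}, \C{z_j|g} \in \R^{d \times k}$ (as defined at the start of Appendix~\ref{appendix:section3}) such that, for every $i \in \{1,2\}$ and $j \in \{3,4\}$,
\[ \E[z_i z_j^\top] = \C{z_i|h}\, M\, \C{z_j|g}^\top. \]
With $M = \E[hg^\top]$, this follows from a short case analysis on the location of the least common ancestor of $\{z_1,z_2,z_3,z_4\}$ in the directed tree $\Tree$ (it lies at $h$, at $g$, or strictly above both). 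In each case Condition~\ref{cond:linear} applied along the two directed paths from the LCA yields the factorization, and Condition~\ref{cond:full-rank} ensures each factor has rank $k$ (so that $M$ in particular is invertible).

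Next, using QR decompositions $\C{z_i|h} = Q_{z_i|h} R_{z_i|h}$ and $\C{z_j|g} = Q_{z_j|g} R_{z_j|g}$ with orthonormal $Q$'s and invertible upper-triangular $k \times k$ factors $R$, the singular values of $\E[z_i z_j^\top]$ coincide with those of the $k \times k$ matrix $P_{ij} := R_{z_i|h}\, M\, R_{z_j|g}^\top$. The key observation is then that all four $P_{ij}$'s share a common right multiplier: $P_{14} = P_{13} N$ and $P_{24} = P_{23} N$ where $N := R_{z_3|g}^{-\top} R_{z_4|g}^\top$ is an invertible $k \times k$ matrix.

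Finally, I would invoke two standard singular value inequalities for invertible $k\times k$ matrices $X,Y$:
\[ \sigma_k(XY) \geq \sigma_k(X)\sigma_k(Y), \qquad \sigma_k(XY) \leq \sigma_1(X)\sigma_k(Y), \]
the first from $\sigma_1((XY)^{-1}) \leq \sigma_1(Y^{-1})\sigma_1(X^{-1})$, and the second by evaluating $\|XYv\|$ on a right singular vector $v$ of $Y$ achieving $\sigma_k(Y)$. Applied to $P_{14}=P_{13}N$ and $P_{24}=P_{23}N$, these sandwich $\sigma_k(N)$:
\[ \frac{\sigma_k(P_{14})}{\sigma_k(P_{13})} \;\geq\; \sigma_k(N) \;\geq\; \frac{\sigma_k(P_{24})}{\sigma_1(P_{23})}, \]
which rearranges to the claimed bound once each $P_{ij}$ is translated back to $\E[z_iz_j^\top]$. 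The only step requiring any genuine care is isolating the single invertible ``bottleneck'' matrix $M$ uniformly across the four pairs and the various orientations of the edge $h$--$g$; once that structural factorization is in hand, the remaining singular-value arithmetic is essentially immediate.
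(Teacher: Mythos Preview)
Your approach is correct and essentially the same as the paper's: both perform a case analysis on the location of the least common ancestor of $\{z_1,z_2,z_3,z_4\}$, project onto orthonormal bases for the ranges of the rank-$k$ factors (your QR and the paper's choice of $U_i$ spanning $\range(\C{z_i|\cdot})$ are equivalent) to reduce to invertible $k\times k$ matrices satisfying $P_{14}=P_{13}P_{23}^{-1}P_{24}$, and finish with singular-value submultiplicativity (your sandwich on $\sigma_k(N)$ with $N=P_{23}^{-1}P_{24}$ is just a reorganization of the paper's direct bound $\sigma_k(P_{13}P_{23}^{-1}P_{24})\geq \sigma_k(P_{13})\sigma_k(P_{24})/\sigma_1(P_{23})$). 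One small caveat on notation: $\C{z_i|h}$ as defined in Appendix~\ref{appendix:section3} requires $z_i\in\Des(h)$, which fails in some LCA cases (e.g., when the LCA lies on the $z_1$-side of $h$, as in the paper's first case); the factorization $\E[z_iz_j^\top]=B_i\,\E[hg^\top]\,D_j^\top$ you need still holds with suitable rank-$k$ factors $B_i,D_j$, just not literally $\C{z_i|h}$.
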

\begin{proof}
In this proof, the edge directions and the notion of ancestor are
determined according to the edge directions in $\Tree$.
Let $r$ be the least common ancestor of $\{z_1,z_2,z_3,z_4\}$ in $\Tree$.
There are effectively three possible cases to consider, depending on the
location of $r$ relative to the $z_i$, $h$, and $g$; we may exploit the
fact that $\sigma_k(\E[z_1z_4^\top]) = \sigma_k(\E[z_4z_1^\top])$ to cover
the remaining cases.
\begin{enumerate}
\item Suppose $r$ appears between $h$ and $z_1$.
\begin{center}
\begin{tikzpicture}
  [
    scale=1.0,
    observed/.style={circle,minimum size=0.5cm,inner sep=0mm,draw=black,fill=black!20},
    hidden/.style={circle,minimum size=0.5cm,inner sep=0mm,draw=black}
  ]
  \node [observed,name=z1] at ($(-2,1)$) {$z_1$};
  \node [observed,name=z2] at ($(-2,-1)$) {$z_2$};
  \node [observed,name=z3] at ($(2,1)$) {$z_3$};
  \node [observed,name=z4] at ($(2,-1)$) {$z_4$};
  \node [hidden,name=r] at ($(-4/3,0.5)$) {$r$};
  \node [hidden,name=h] at ($(-2/3,0)$) {$h$};
  \node [hidden,name=g] at ($(2/3,0)$) {$g$};
  \draw [->] (r) to (z1);
  \draw [->] (r) to (h);
  \draw [->] (h) to (z2);
  \draw [->] (h) to (g);
  \draw [->] (g) to (z3);
  \draw [->] (g) to (z4);
\end{tikzpicture}
\end{center}
By Condition~\ref{cond:full-rank}, we can choose matrices $U_1, U_2, U_3,
U_4 \in \R^{d \times k}$ such that
the columns of $U_1$ are an orthonormal basis of $\range(\C{z_1|r})$,
the columns of $U_2$ are an orthonormal basis of $\range(\C{z_2|h})$,
the columns of $U_3$ are an orthonormal basis of $\range(\C{z_3|g})$,
and the columns of $U_4$ are an orthonormal basis of $\range(\C{z_4|g})$.
We have
\begin{align*}
\lefteqn{
U_1^\top \E[z_1z_4^\top] U_4
} \\
& = U_1^\top \C{z_1|r} \E[rr^\top] \A{h|r}^\top \C{z_4|h}^\top U_4 \\
& =
(U_1^\top \C{z_1|r} \E[rr^\top]) \A{h|r}^\top
(\C{z_3|h}^\top U_3)
(\C{z_3|h}^\top U_3)^{-1}
\\
& \qquad{}
(U_2^\top \C{z_2|h} \E[hh^\top])^{-1}
(U_2^\top \C{z_2|h} \E[hh^\top])
(\C{z_4|h}^\top U_4) \\
& =
(U_1^\top \C{z_1|r} \E[rr^\top] \A{h|r}^\top \C{z_3|h}^\top U_3)
(U_2^\top \C{z_2|h} \E[hh^\top] \C{z_3|h}^\top U_3)^{-1}
\\
& \qquad{}
(U_2^\top \C{z_2|h} \E[hh^\top] \C{z_4|h}^\top U_4) \\
& =
(U_1^\top \E[z_1z_3^\top] U_3)
(U_2^\top \E[z_2z_3^\top] U_3)^{-1}
(U_2^\top \E[z_2z_4^\top] U_4)
.
\end{align*}

\item Suppose $r$ appears between $h$ and $z_2$.
\begin{center}
\begin{tikzpicture}
  [
    scale=1.0,
    observed/.style={circle,minimum size=0.5cm,inner sep=0mm,draw=black,fill=black!20},
    hidden/.style={circle,minimum size=0.5cm,inner sep=0mm,draw=black}
  ]
  \node [observed,name=z1] at ($(-2,1)$) {$z_1$};
  \node [observed,name=z2] at ($(-2,-1)$) {$z_2$};
  \node [observed,name=z3] at ($(2,1)$) {$z_3$};
  \node [observed,name=z4] at ($(2,-1)$) {$z_4$};
  \node [hidden,name=r] at ($(-4/3,-0.5)$) {$r$};
  \node [hidden,name=h] at ($(-2/3,0)$) {$h$};
  \node [hidden,name=g] at ($(2/3,0)$) {$g$};
  \draw [->] (r) to (z2);
  \draw [->] (r) to (h);
  \draw [->] (h) to (z1);
  \draw [->] (h) to (g);
  \draw [->] (g) to (z3);
  \draw [->] (g) to (z4);
\end{tikzpicture}
\end{center}
By Condition~\ref{cond:full-rank}, we can choose matrices $U_1, U_2, U_3,
U_4 \in \R^{d \times k}$ such that
the columns of $U_1$ are an orthonormal basis of $\range(\C{z_1|h})$,
the columns of $U_2$ are an orthonormal basis of $\range(\C{z_2|r})$,
the columns of $U_3$ are an orthonormal basis of $\range(\C{z_3|g})$,
and the columns of $U_4$ are an orthonormal basis of $\range(\C{z_4|g})$.
We have
\begin{align*}
\lefteqn{
U_1^\top \E[z_1z_4^\top] U_4
} \\
& = U_1^\top \C{z_1|h} \E[hh^\top] \A{h|r}^{-\top} \C{z_4|r}^\top U_4 \\
& =
(U_1^\top \C{z_1|h} \E[hh^\top])
(\C{z_3|h}^\top U_3)
(\C{z_3|h}^\top U_3)^{-1}
\A{h|r}^{-\top}
\\
& \qquad{}
(U_2^\top \C{z_2|r} \E[rr^\top])^{-1}
(U_2^\top \C{z_2|r} \E[rr^\top])
(\C{z_4|r}^\top U_4) \\
& =
(U_1^\top \C{z_1|h} \E[hh^\top] \C{z_3|h}^\top U_3)
(U_2^\top \C{z_2|r} \E[rr^\top] \A{h|r}^\top \C{z_3|h}^\top U_3)^{-1}
\\
& \qquad{}
(U_2^\top \C{z_2|r} \E[rr^\top] \C{z_4|r}^\top U_4) \\
& =
(U_1^\top \E[z_1z_3^\top] U_3)
(U_2^\top \E[z_2z_3^\top] U_3)^{-1}
(U_2^\top \E[z_2z_4^\top] U_4)
.
\end{align*}

\item Suppose either $r = h$, or $r$ is between $h$ and $g$.
\begin{center}
\begin{tikzpicture}
  [
    scale=1.0,
    observed/.style={circle,minimum size=0.5cm,inner sep=0mm,draw=black,fill=black!20},
    hidden/.style={circle,minimum size=0.5cm,inner sep=0mm,draw=black}
  ]
  \node [observed,name=z1] at ($(-2,1)$) {$z_1$};
  \node [observed,name=z2] at ($(-2,-1)$) {$z_2$};
  \node [observed,name=z3] at ($(2,1)$) {$z_3$};
  \node [observed,name=z4] at ($(2,-1)$) {$z_4$};
  \node [hidden,name=r] at ($(-2/3,0)$) {$r$};
  \node [hidden,name=g] at ($(2/3,0)$) {$g$};
  \draw [->] (r) to (z1);
  \draw [->] (r) to (z2);
  \draw [->] (r) to (g);
  \draw [->] (g) to (z3);
  \draw [->] (g) to (z4);
\end{tikzpicture}
\qquad
\begin{tikzpicture}
  [
    scale=1.0,
    observed/.style={circle,minimum size=0.5cm,inner sep=0mm,draw=black,fill=black!20},
    hidden/.style={circle,minimum size=0.5cm,inner sep=0mm,draw=black}
  ]
  \node [observed,name=z1] at ($(-2,1)$) {$z_1$};
  \node [observed,name=z2] at ($(-2,-1)$) {$z_2$};
  \node [observed,name=z3] at ($(2,1)$) {$z_3$};
  \node [observed,name=z4] at ($(2,-1)$) {$z_4$};
  \node [hidden,name=r] at ($(0,0)$) {$r$};
  \node [hidden,name=h] at ($(-1,0)$) {$h$};
  \node [hidden,name=g] at ($(1,0)$) {$g$};
  \draw [->] (r) to (h);
  \draw [->] (r) to (g);
  \draw [->] (h) to (z1);
  \draw [->] (h) to (z2);
  \draw [->] (g) to (z3);
  \draw [->] (g) to (z4);
\end{tikzpicture}
\end{center}
In either case, by Condition~\ref{cond:full-rank}, we can choose matrices
$U_1, U_2, U_3, U_4 \in \R^{d \times k}$ such that the columns of $U_1$ are
an orthonormal basis of $\range(\C{z_1|h})$, the columns of $U_2$ are an
orthonormal basis of $\range(\C{z_2|h})$, the columns of $U_3$ are an
orthonormal basis of $\range(\C{z_3|g})$, and the columns of $U_4$ are an
orthonormal basis of $\range(\C{z_4|g})$.
We have
\begin{align*}
\lefteqn{
U_1^\top \E[z_1z_4^\top] U_4
} \\
& = U_1^\top \C{z_1|r} \E[rr^\top] \C{z_4|r}^\top U_4 \\
& =
(U_1^\top \C{z_1|r} \E[rr^\top])
(\C{z_3|r}^\top U_3)
(\C{z_3|r}^\top U_3)^{-1}
\\
& \qquad{}
(U_2^\top \C{z_2|r} \E[rr^\top])^{-1}
(U_2^\top \C{z_2|r} \E[rr^\top])
(\C{z_4|r}^\top U_4) \\
& =
(U_1^\top \C{z_1|r} \E[rr^\top] \C{z_3|r}^\top U_3)
(U_2^\top \C{z_2|r} \E[rr^\top] \C{z_3|r}^\top U_3)^{-1}
\\
& \qquad{}
(U_2^\top \C{z_2|r} \E[rr^\top] \C{z_4|r}^\top U_4) \\
& =
(U_1^\top \E[z_1z_3^\top] U_3)
(U_2^\top \E[z_2z_3^\top] U_3)^{-1}
(U_2^\top \E[z_2z_4^\top] U_4)
.
\end{align*}

\end{enumerate}
Therefore, in all cases,
\[ \sigma_k(\E[z_1z_4^\top]) \geq \frac{\sigma_k(\E[z_1z_3^\top]) \cdot
\sigma_k(\E[z_2z_4^\top])}{\sigma_1(\E[z_2z_3^\top])}
.
\qedhere
\]
\end{proof}
%\begin{proof}
%For simplicity, we assume $\Tree$ is undirected; see
%Appendix~\ref{appendix:directed} for the proof in the directed case.
%By Condition~\ref{cond:full-rank}, we can choose matrices $U_1, U_2, U_3,
%U_4 \in \R^{d \times k}$ such that
%the columns of $U_1$ are an orthonormal basis of $\range(\C{z_1|h})$,
%the columns of $U_2$ are an orthonormal basis of $\range(\C{z_2|h})$,
%the columns of $U_3$ are an orthonormal basis of $\range(\C{z_3|g})$,
%and
%the columns of $U_4$ are an orthonormal basis of $\range(\C{z_4|g})$.
%We have
%\begin{align*}
%\lefteqn{
%U_1^\top \E[z_1z_4^\top] U_4
%} \\
%& = U_1^\top \C{z_1|h} \E[hh^\top] \C{z_4|h}^\top U_4 \\
%& =
%(U_1^\top \C{z_1|h} \E[hh^\top])
%(\C{z_3|h}^\top U_3)
%(\C{z_3|h}^\top U_3)^{-1}
%(U_2^\top \C{z_2|h} \E[hh^\top])^{-1}
%(U_2^\top \C{z_2|h} \E[hh^\top])
%(\C{z_4|h}^\top U_4) \\
%& =
%(U_1^\top \C{z_1|h} \E[hh^\top] \C{z_3|h}^\top U_3)
%(U_2^\top \C{z_2|h} \E[hh^\top] \C{z_3|h}^\top U_3)^{-1}
%(U_2^\top \C{z_2|h} \E[hh^\top] \C{z_4|h}^\top U_4) \\
%& =
%(U_1^\top \E[z_1z_3^\top] U_3)
%(U_2^\top \E[z_2z_3^\top] U_3)^{-1}
%(U_2^\top \E[z_2z_4^\top] U_4)
%.
%\end{align*}
%Therefore
%\[ \sigma_k(\E[z_1z_4^\top]) \geq \frac{\sigma_k(\E[z_1z_3^\top]) \cdot
%\sigma_k(\E[z_2z_4^\top])}{\sigma_1(\E[z_2z_3^\top])}
%.
%\qedhere
%\]
%\end{proof}

The next two lemmas (Lemmas~\ref{lemma:mergeable}
and~\ref{lemma:not-mergeable}) show a dichotomy in the cases that cause the
subroutine $\Mergeable$ return either true or false.
\sloppy
\begin{lemma}[Mergeable pairs] \label{lemma:mergeable}
Let $\Roots \subseteq \Vars$.
Let $\Comps := \{ \Subtree[r] : r \in \Roots \}$ be a collection of
disjoint rooted subtrees, with $r$ being the root of $\Subtree[r]$, such
that their leaf sets $\{ \Leaves[r] : r \in \Roots \}$ partition $\Vobs$.
Further, suppose the pair $\{u,v\} \subseteq \Roots$ are such that one of
the following conditions hold.
\begin{enumerate}
\item $\{u,v\}$ share a common neighbor in $\Tree$, and both of
$\Subtree[u]$ and $\Subtree[v]$ are leaf components relative to $\Comps$.

\item $\{u,v\}$ are neighbors in $\Tree$, and at least one of $\Subtree[u]$
and $\Subtree[v]$ is a leaf component relative to $\Comps$.

\end{enumerate}
Then for all pairs $\{u_1,v_1\} \subseteq \Roots \setminus \{u,v\}$ and all
$(x,y,x_1,y_1) \in \Leaves[u] \times \Leaves[v] \times \Leaves[u_1] \times
\Leaves[v_1]$, $\SQT(\{x,y,x_1,y_1\})$ returns $\{\{x,y\},\{x_1,y_1\}\}$ or
$\bot$.
This implies that $\Mergeable(\Roots,\Leaves[\cdot],u,v)$ returns true.
\end{lemma}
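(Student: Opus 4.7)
The plan is to reduce the statement to two observations: a topological fact about the induced subtree over $\{x,y,x_1,y_1\}$ in $\Tree$, and the reliability property of $\SQT$ already established in the proof of Lemma~\ref{lemma:reliability}. The last sentence of the statement is then immediate, since $\Mergeable(\Roots,\Leaves[\cdot],u,v)$ returns false only when some quartet test on a 4-tuple $(x,y,x',y') \in \Leaves[u] \times \Leaves[v] \times \Leaves[u'] \times \Leaves[v']$ outputs one of the ``bad'' pairings $\{\{x,x'\},\{y,y'\}\}$ or $\{\{x,y'\},\{x',y\}\}$, and the claimed output guarantee rules this out.

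For the topological step, I would show that in both cases of the hypothesis the induced (undirected) topology of $\{x,y,x_1,y_1\}$ in $\Tree$ is either Figure~\ref{fig:topologies}(a) with pairing $\{\{x,y\},\{x_1,y_1\}\}$, or the degenerate star Figure~\ref{fig:topologies}(d). Equivalently, no edge $e$ of $\Tree$ yields a bipartition of the four leaves of the form $\{x,x_1\} \mid \{y,y_1\}$ or $\{x,y_1\} \mid \{x_1,y\}$. The key tool is the rooted-subtree separator property: every path from inside $\Subtree[r]$ to outside it must pass through the root $r$. In case~(1), $\Subtree[u]$ and $\Subtree[v]$ each connect to the rest of $\Tree$ only through the common neighbor $w$ (which therefore lies outside both subtrees), so any candidate bipartition edge either sits inside one of these two subtrees or is the edge $u$--$w$ or $v$--$w$ -- in which case it isolates $x$ or $y$ alone on one side -- or lies strictly outside $\Subtree[u] \cup \{w\} \cup \Subtree[v]$, in which case $x$ and $y$ remain together on the $w$-side. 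In case~(2), $\Subtree[u]$ attaches to the rest only via the edge $u$--$v$; a four-way case split on whether $e$ lies inside $\Subtree[u]$, equals $u$--$v$, lies inside $\Subtree[v]$, or lies strictly outside $\Subtree[u] \cup \Subtree[v]$ -- combined with $x_1,y_1 \notin \Subtree[v]$ and $y \in \Subtree[v]$ -- rules out topologies (b) and (c).

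Having constrained the topology, the guarantee on $\SQT$ follows from the same argument used in the proof of Lemma~\ref{lemma:reliability}. Conditional on the valid confidence bounds in~\eqref{eq:confidence-bounds-test}, $\SQT$ returns a pairing $P$ only if the product of population $\Det_k$'s for $P$ strictly dominates the products for the other two pairings. By Lemma~\ref{lemma:spectral-properties}, when the induced topology is (a) the pairing $\{\{x,y\},\{x_1,y_1\}\}$ is the unique such maximizer, whereas in the degenerate star all three products coincide, so no strict inequality can hold and only $\bot$ may be output. In every case the output of $\SQT$ is $\{\{x,y\},\{x_1,y_1\}\}$ or $\bot$, never one of the two ``bad'' pairings.

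The main obstacle is the topological step, especially case~(2), where $\Subtree[v]$ is not assumed to be a leaf component and so can have arbitrarily complicated external attachments. The clean way through is to invoke the rooted-subtree separator property twice -- once for $\Subtree[u]$ to trap $x$ on one side of any candidate bipartition edge, and once for $\Subtree[v]$ to trap $y$ there -- and then carry out the brief case split above on the location of the edge, which is enough to preclude topologies (b) and (c) in both cases of the hypothesis.
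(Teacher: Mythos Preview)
Your proposal is correct and follows essentially the same approach as the paper: in each case, establish that the induced topology on $\{x,y,x_1,y_1\}$ is either Figure~\ref{fig:topologies}(a) with the pairing $\{\{x,y\},\{x_1,y_1\}\}$ or the star Figure~\ref{fig:topologies}(d), and then invoke the reliability of $\SQT$ (Lemma~\ref{lemma:reliability}). The paper's topological argument is slightly more direct---it simply names a single separator node ($h$ in case~(1), $v$ in case~(2)) through which every path from $x$ or $y$ to a leaf outside $\Subtree[u]\cup\Subtree[v]$ must pass---whereas you phrase the same fact via a case split on bipartition edges, but this is only a stylistic difference.
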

\fussy
\begin{remark}
Note that if $|\Roots| < 4$, then $\Mergeable(\Roots,\Leaves[\cdot],u,v)$
returns true for all pairs $\{u,v\} \subseteq \Roots$.
\end{remark}
\begin{proof}
Suppose the first condition holds, and let $h$ be the common neighbor.
Since $\Subtree[u]$ is a leaf component relative to $\Comps$, the
(undirected) path from any node $u'$ in $\Subtree[u]$ to another node $w$
not in $\Subtree[u]$ must pass through $h$.
Similarly, the (undirected) path from any node $v'$ in $\Subtree[v]$ to
another node $w$ not in $\Subtree[v]$ must pass through $h$.
Therefore, each choice of $\{u_1,v_1\} \subseteq \Roots \setminus \{u,v\}$
and $(x,y,x_1,y_1) \in \Leaves[u] \times \Leaves[v] \times \Leaves[u_1]
\times \Leaves[v_1]$ induces one of the following topologies,
\begin{center}
\begin{tabular}{cc}
\begin{tikzpicture}
  [
    scale=1.0,
    observed/.style={circle,minimum size=0.5cm,inner sep=0mm,draw=black,fill=black!20},
    hidden/.style={circle,minimum size=0.5cm,inner sep=0mm,draw=black}
  ]
  \node [observed,name=x] at ($(-1,0.5)$) {$x$};
  \node [observed,name=y] at ($(-1,-0.5)$) {$y$};
  \node [observed,name=x1] at ($(1,0.5)$) {$x_1$};
  \node [observed,name=y1] at ($(1,-0.5)$) {$y_1$};
  \node [hidden,name=h] at ($(-1/3,0)$) {$h$};
  \node [hidden,name=g] at ($(1/3,0)$) {};
  \draw [-] (x) to (h);
  \draw [-] (y) to (h);
  \draw [-] (x1) to (g);
  \draw [-] (y1) to (g);
  \draw [-] (h) to (g);
\end{tikzpicture}
&
\begin{tikzpicture}
  [
    scale=1.0,
    observed/.style={circle,minimum size=0.5cm,inner sep=0mm,draw=black,fill=black!20},
    hidden/.style={circle,minimum size=0.5cm,inner sep=0mm,draw=black}
  ]
  \node [observed,name=x] at ($(-1,0.5)$) {$x$};
  \node [observed,name=y] at ($(-1,-0.5)$) {$y$};
  \node [observed,name=x1] at ($(1,0.5)$) {$x_1$};
  \node [observed,name=y1] at ($(1,-0.5)$) {$y_1$};
  \node [hidden,name=h] at ($(0,0)$) {$h$};
  \draw [-] (x) to (h);
  \draw [-] (y) to (h);
  \draw [-] (x1) to (h);
  \draw [-] (y1) to (h);
\end{tikzpicture}
\end{tabular}
\end{center}
upon which, by Lemma~\ref{lemma:reliability}, the quartet test returns
either $\{\{x,y\},\{x_1,y_1\}\}$ or $\bot$.

Now instead suppose the second condition holds.
Without loss of generality, assume $\Subtree[u]$ is a leaf component
relative to $\Comps$, which then implies that the (undirected) path from
any node $u'$ in $\Subtree[u]$ to another node $w$ not in $\Subtree[u]$
must pass through $v$.
Moreover, since $\Subtree[v]$ is rooted at $v$, the (undirected) path from
any node $v'$ in $\Subtree[v]$ to another node $w$ not in $\Subtree[v]$
must pass through $v$.
If $\Subtree[v]$ is also a leaf component, then it must be that $\Roots =
\{u,v\}$, in which case $\Roots \setminus \{u,v\} = \emptyset$.
If $\Subtree[v]$ is not a leaf component, then each choice of $\{u_1,v_1\}
\subseteq \Roots \setminus \{u,v\}$ and $(x,y,x_1,y_1) \in \Leaves[u]
\times \Leaves[v] \times \Leaves[u_1] \times \Leaves[v_1]$ induces one of
the following topologies,
\begin{center}
\begin{tabular}{cc}
\begin{tikzpicture}
  [
    scale=1.0,
    observed/.style={circle,minimum size=0.5cm,inner sep=0mm,draw=black,fill=black!20},
    hidden/.style={circle,minimum size=0.5cm,inner sep=0mm,draw=black}
  ]
  \node [observed,name=x] at ($(-1,0.5)$) {$x$};
  \node [observed,name=y] at ($(-1,-0.5)$) {$y$};
  \node [observed,name=x1] at ($(1,0.5)$) {$x_1$};
  \node [observed,name=y1] at ($(1,-0.5)$) {$y_1$};
  \node [hidden,name=v] at ($(-1/3,0)$) {$v$};
  \node [hidden,name=g] at ($(1/3,0)$) {};
  \draw [-] (x) to (v);
  \draw [-] (y) to (v);
  \draw [-] (x1) to (g);
  \draw [-] (y1) to (g);
  \draw [-] (v) to (g);
\end{tikzpicture}
&
\begin{tikzpicture}
  [
    scale=1.0,
    observed/.style={circle,minimum size=0.5cm,inner sep=0mm,draw=black,fill=black!20},
    hidden/.style={circle,minimum size=0.5cm,inner sep=0mm,draw=black}
  ]
  \node [observed,name=x] at ($(-1,0.5)$) {$x$};
  \node [observed,name=y] at ($(-1,-0.5)$) {$y$};
  \node [observed,name=x1] at ($(1,0.5)$) {$x_1$};
  \node [observed,name=y1] at ($(1,-0.5)$) {$y_1$};
  \node [hidden,name=v] at ($(0,0)$) {$v$};
  \draw [-] (x) to (v);
  \draw [-] (y) to (v);
  \draw [-] (x1) to (h);
  \draw [-] (y1) to (h);
\end{tikzpicture}
\end{tabular}
\end{center}
upon which, by Lemma~\ref{lemma:reliability}, the quartet test returns
either $\{\{x,y\},\{x_1,y_1\}\}$ or $\bot$.
\end{proof}

\sloppy
\begin{lemma}[Un-mergeable pairs] \label{lemma:not-mergeable}
Let $\Roots \subseteq \Vars$.
Let $\Comps := \{ \Subtree[r] : r \in \Roots \}$ be a collection of
disjoint rooted subtrees, with $r$ being the root of $\Subtree[r]$, such
that their leaf sets $\{ \Leaves[r] : r \in \Roots \}$ partition $\Vobs$.
Further, suppose the pair $\{u,v\} \subseteq \Roots$ are such that all of
the following conditions hold.
\begin{enumerate}
\item There exists $(x,y) \in \Leaves[u] \times \Leaves[v]$ such that
$\sigma_k(\wh\Sig_{x,y}) \geq \theta$.

\item $\{u,v\}$ do not share a common neighbor in $\Tree$, or at least one
of $\Subtree[u]$ and $\Subtree[v]$ is not a leaf component relative to
$\Comps$.

\item $\{u,v\}$ are not neighbors in $\Tree$, or neither $\Subtree[u]$ nor
$\Subtree[v]$ is a leaf component relative to $\Comps$.

\end{enumerate}
Then there exists a pair $\{u_1,v_1\} \subseteq \Roots \setminus \{u,v\}$
and $(x_1,y_1) \in \Leaves[u_1] \times \Leaves[v_1]$ such that
$\SQT(\{x,y,x_1,y_1\})$ returns $\{\{x,x_1\},\{y,y_1\}\}$.
This implies that $\Mergeable(\Roots,\Leaves[\cdot],u,v)$ returns false.
\end{lemma}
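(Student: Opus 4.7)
The plan is to reduce the claim to Lemma~\ref{lem:usefulrankk} by constructing $u_1,v_1 \in \Roots \setminus \{u,v\}$ and $x_1 \in \Leaves[u_1], y_1 \in \Leaves[v_1]$ such that (i) the true induced topology on $\{x,y,x_1,y_1\}$ in $\Tree$ is $\{\{x,x_1\},\{y,y_1\}\}$, and (ii) $\sigma_k(\E[z_iz_j^\top]) \geq \vsig$ for every pair $\{z_i,z_j\} \subset \{x,y,x_1,y_1\}$. The preamble to the proof of Theorem~\ref{theorem:rg} has already derived $\Delta_{x_i,x_j} < \epsmin\vsig/(8k)$, which is precisely the threshold demanded by Lemma~\ref{lem:usefulrankk} with $\min_{\{i,j\}}\sigma_k = \vsig$. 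Hence $\SQT(\{x,y,x_1,y_1\})$ returns $\{\{x,x_1\},\{y,y_1\}\}$, matching the first pairing listed in the $\Mergeable$ subroutine and forcing $\Mergeable(\Roots,\Leaves[\cdot],u,v)$ to return \emph{false}.

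The structural construction of $u_1,v_1$ takes place in the super-tree $\Supertree[\Comps]$, whose leaves lie in $\Comps$ by Lemma~\ref{lemma:supertree}. Let $P$ be the super-tree path from $\Subtree[u]$ to $\Subtree[v]$. The failure of conditions~2 and~3 splits into three scenarios. First, if $\{u,v\}$ are $\Tree$-neighbors (so $P$ has length one), condition~3 forces both $\Subtree[u]$ and $\Subtree[v]$ to be non-leaf components in $\Supertree[\Comps]$, so each of $u$ and $v$ has a second super-tree neighbor; following super-tree paths outward from these second neighbors until reaching a super-tree leaf in $\Comps$ yields $u_1$ and $v_1$ on opposite sides of the $u$--$v$ edge. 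Second, if $\{u,v\}$ share a common $\Tree$-neighbor $h$ (so $P$ has length two through $h$), condition~2 forces at least one of $\Subtree[u],\Subtree[v]$ to be non-leaf, while $h \in \Vhid$ has $\Tree$-degree (hence super-tree degree) at least three by Condition~\ref{cond:non-redundancy}; descending these two branches to leaf super-tree components produces $\Subtree[u_1],\Subtree[v_1]$ flanking a bottleneck edge of $P$. Third, if $P$ has length at least three, either some interior node of $P$ lies in $\Comps$ (furnishing a ready-made root to assign as $u_1$ or $v_1$) or every interior node is in $\Vhid[\Comps]$ and has extra branches by Condition~\ref{cond:non-redundancy}. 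In each case $u_1,v_1$ are distinct elements of $\Roots \setminus \{u,v\}$ on opposite sides of a bottleneck edge of $P$, so the induced topology in $\Tree$ of any quartet $\{x,y,x_1,y_1\}$ with $x_1 \in \Leaves[u_1], y_1 \in \Leaves[v_1]$ is exactly $\{\{x,x_1\},\{y,y_1\}\}$.

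For the correlation step, $\sigma_k(\E[xy^\top]) \geq (1-\veps)\theta \geq \vsig$ follows from the hypothesis $\sigma_k(\wh\Sig_{x,y}) \geq \theta$ via Lemma~\ref{lemma:weyl-application} and $\gammamax \geq \gammamin$. For the five remaining pairs, I would invoke Condition~\ref{cond:correlation} at a hidden node $h^*$ adjacent to $\Subtree[u_1]$ in the super-tree, choosing three subtrees of $\Forest_{h^*}$ so that $\Subtree[u_1]$ itself is one of them; this yields $x_1 \in \Leaves[u_1]$ with pairwise $\sigma_k \geq \gammamin$ against witnesses in the $u$-side and $v$-side components. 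An analogous application on the $v$-side picks $y_1 \in \Leaves[v_1]$. The remaining cross-bottleneck correlations, such as $\sigma_k(\E[xy_1^\top])$, are then lower bounded by applying Lemma~\ref{lemma:transfer} along the now-known quartet topology: for instance, $\sigma_k(\E[xy_1^\top]) \geq \sigma_k(\E[xy^\top])\sigma_k(\E[x_1y_1^\top])/\sigma_1(\E[x_1y^\top]) \geq \gammamin^2/\gammamax \geq \vsig$, using $\sigma_1 \leq \gammamax$ throughout.

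The main obstacle is the structural step: the case analysis is intricate because the un-mergeable hypotheses express a disjunctive negation of the mergeable conditions, and one must check in every branch that the constructed $u_1,v_1$ are truly distinct members of $\Roots \setminus \{u,v\}$ and produce the desired quartet topology. A secondary obstacle is forcing the correlation bounds to apply to the specific $x_1 \in \Leaves[u_1], y_1 \in \Leaves[v_1]$ we require rather than arbitrary leaves of the forest components supplied by Condition~\ref{cond:correlation}; the construction sidesteps this by selecting $u_1,v_1$ as leaf super-tree components whose unique super-tree neighbor is a hidden node of $\Tree$, so that $\Leaves[u_1] = \Vobs[\Subtree[u_1]]$ appears directly as a component of $\Forest_{h^*}$ for the chosen $h^*$.
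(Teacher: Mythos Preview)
Your overall plan---find $x_1,y_1$ with the correct quartet topology and with $\sigma_k\geq\vsig$ on all relevant pairs, then invoke Lemma~\ref{lem:usefulrankk}---is the same as the paper's. The structural case analysis, though organized differently (you split on the length of the super-tree path, the paper on which of three negations of Lemma~\ref{lemma:mergeable} holds), is broadly along the right lines.

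The genuine gap is in the correlation step, specifically in \emph{where} you apply Condition~\ref{cond:correlation}. You place $h^*$ adjacent to the leaf component $\Subtree[u_1]$ and claim this yields witnesses ``in the $u$-side and $v$-side components.'' But in your own construction (e.g.\ scenario~1, or the ``extra branches'' sub-case of scenario~3), $\Subtree[u_1]$ is obtained by descending a branch \emph{off} the $u$--$v$ path, so $h^*$ does not separate $u$ from $v$: both $x$ and $y$ lie in the \emph{same} subtree of $\Forest_{h^*}$. Condition~\ref{cond:correlation} then gives you $x_1\in\Leaves[u_1]$ correlated with a single witness on that side, not one witness near $x$ and another near $y$. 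Consequently, your displayed transfer inequality $\sigma_k(\E[xy_1^\top]) \geq \sigma_k(\E[xy^\top])\sigma_k(\E[x_1y_1^\top])/\sigma_1(\E[x_1y^\top])$ cannot be used: it presupposes $\sigma_k(\E[x_1y_1^\top])\geq\gammamin$, which you never establish (the two applications of Condition~\ref{cond:correlation}, one near $u_1$ and one near $v_1$, are independent and say nothing about the pair $(x_1,y_1)$).

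The paper sidesteps this by applying Condition~\ref{cond:correlation} at a node that \emph{does} separate $x$, $y$, and the eventual $x_1$---namely $u$ itself in case~(i), or the internal hidden nodes $u_1,v_1$ adjacent to $u,v$ on the path in cases~(ii)--(iii). Because $x\in\Leaves[u]$ and $y\in\Leaves[v]$ lie in different subtrees of $\Forest_u$, and a third subtree leads toward the eventual $x_1$, Condition~\ref{cond:correlation} yields auxiliary witnesses $x_2$ (in the same subtree as $x$) and $x_3$ (in the same subtree as $y$) with $\sigma_k(\E[x_1x_2^\top]),\sigma_k(\E[x_1x_3^\top])\geq\gammamin$. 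The transfer lemma is then applied with these auxiliaries as intermediaries---e.g.\ $\sigma_k(\E[x_1x^\top])\geq\sigma_k(\E[x_1x_2^\top])\sigma_k(\E[yx^\top])/\sigma_1(\E[yx_2^\top])$---so that the only inputs needed are the Condition-\ref{cond:correlation} bounds and the hypothesis $\sigma_k(\E[xy^\top])\geq(1-\veps)\theta$. The correct placement of the Condition-\ref{cond:correlation} node, and the use of the auxiliary $x_2,x_3$ in the transfer, is the missing idea in your argument.
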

\fussy
\begin{proof}
First, take $(x,y) \in \Leaves[u] \times \Leaves[v]$ such that
$\sigma_k(\wh\Sig_{x,y}) \geq \theta$.
By Lemma~\ref{lemma:weyl-application}, $\sigma_k(\Sig_{x,y}) \geq
(1-\veps)\theta$.
Lemma~\ref{lemma:supertree} implies that the nodes of $\Supertree[\Comps]$
are $\Comps \cup \Vhid[\Comps]$, and that each leaf in $\Supertree[\Comps]$
is a subtree $\Subtree[u] \in \Comps$.
The second and third conditions of the lemma on $\{u,v\}$ imply that at
least one of the following cases holds.
\begin{enumerate}
\renewcommand{\labelenumi}{(\roman{enumi})}
\item Neither $\Subtree[u]$ nor $\Subtree[v]$ is a leaf component relative
to $\Comps$.

\item $u$ and $v$ are not neighbors and do not share a common neighbor.

\item $u$ and $v$ are not neighbors, and one of $\Subtree[u]$ and
$\Subtree[v]$ is not a leaf component relative to $\Comps$.

\end{enumerate}

Suppose (i) holds.
Then each of $\Subtree[u]$ and $\Subtree[v]$ have degree $\geq2$ in
$\Supertree[\Comps]$.
Note that neither $u$ nor $v$ are leaves in $\Tree$.
Moreover, there exists $\{u_1,v_1\} \subseteq (\Roots \setminus \{u,v\})
\cup \Vhid[\Comps]$ such that $u_1$ is adjacent to $u$ in $\Tree$, $v_1$ is
adjacent to $v$ in $\Tree$, and the (undirected) path from $u_1$ to $v_1$
in $\Tree$ intersects the (undirected) path from $u$ to $v$ in $\Tree$.
\begin{center}
\begin{tikzpicture}
  [
    scale=1.0,
    observed/.style={circle,minimum size=0.5cm,inner sep=0mm,draw=black,fill=black!20},
    hidden/.style={circle,minimum size=0.5cm,inner sep=0mm,draw=black}
  ]
  \node [hidden,name=u] at ($(-1,0)$) {$u$};
  \node [hidden,name=u1] at ($(-2,0)$) {$u_1$};
  \node [hidden,name=v1] at ($(2,0)$) {$v_1$};
  \node [hidden,name=v] at ($(1,0)$) {$v$};
  \draw [-] (u) to (u1);
  \draw [loosely dashed] (u) to (v);
  \draw [-] (v1) to (v);
\end{tikzpicture}
\end{center}
Since $u$ is not a leaf, it has at least three neighbors by assumption, and
thus there exist three subtrees $\{\Subtree_{u,1}, \Subtree_{u,2},
\Subtree_{u,3}\} \subseteq \Forest_u$ such that $u_1$ is the root of
$\Subtree_{u,1}$, $x \in \Vobs[\Subtree_{u,2}]$ and $y \in
\Vobs[\Subtree_{u,3}]$.
Moreover, by Condition~\ref{cond:correlation}, there exist
$x_1 \in \Vobs[\Subtree_{u,1}]$,
$x_2 \in \Vobs[\Subtree_{u,2}]$, and
$x_3 \in \Vobs[\Subtree_{u,3}]$ such that $\sigma_k(\E[x_ix_j^\top]) \geq
\gammamin$ for all $\{i,j\} \subset \{1,2,3\}$.
Note that it is possible to have $x_2 = x$ and $x_3 = y$.
Let $u_2$ denote the node in $\Subtree_{u,2}$ at which the (undirected)
paths $x \leadsto u$ and $x_2 \leadsto u$ intersect (if $x_2 = x$, then let
$u_2$ be the root of $\Subtree_{u,2}$); similarly, let $u_3$ denote the
node in $\Subtree_{u,2}$ at which the (undirected) paths $y \leadsto u$ and
$x_3 \leadsto u$ intersect (if $x_3 = y$, then let $u_3$ be the root of
$\Subtree_{u,3}$).
The induced (undirected) topology over these nodes is shown below.
\begin{center}
\begin{tikzpicture}
  [
    scale=1.0,
    observed/.style={circle,minimum size=0.5cm,inner sep=0mm,draw=black,fill=black!20},
    hidden/.style={circle,minimum size=0.5cm,inner sep=0mm,draw=black},
  ]
  \node [hidden,name=u] at ($(0,0)$) {$u$};
  \node [hidden,name=u1] at ($(-2,-1)$) {$u_1$};
  \node [hidden,name=u2] at ($(0,-1)$) {$u_2$};
  \node [hidden,name=u3] at ($(2,-1)$) {$u_3$};
  \node [observed,name=x] at ($(0.5,-2)$) {$x$};
  \node [observed,name=y] at ($(2.5,-2)$) {$y$};
  \node [observed,name=x1] at ($(-2,-2)$) {$x_1$};
  \node [observed,name=x2] at ($(-0.5,-2)$) {$x_2$};
  \node [observed,name=x3] at ($(1.5,-2)$) {$x_3$};
  \draw [-] (u) to (u1);
  \draw [-] (u) to (u2);
  \draw [-] (u) to (u3);
  \draw [-] (u1) to (x1);
  \draw [-] (u2) to (x2);
  \draw [-] (u2) to (x);
  \draw [-] (u3) to (x3);
  \draw [-] (u3) to (y);
\end{tikzpicture}
\end{center}
A completely analogous argument can be applied relative to $v$ instead of
$u$, giving the following.
\begin{center}
\begin{tikzpicture}
  [
    scale=1.0,
    observed/.style={circle,minimum size=0.5cm,inner sep=0mm,draw=black,fill=black!20},
    hidden/.style={circle,minimum size=0.5cm,inner sep=0mm,draw=black},
  ]
  \node [hidden,name=v] at ($(0,0)$) {$v$};
  \node [hidden,name=v1] at ($(-2,-1)$) {$v_1$};
  \node [hidden,name=v2] at ($(0,-1)$) {$v_2$};
  \node [hidden,name=v3] at ($(2,-1)$) {$v_3$};
  \node [observed,name=y] at ($(0.5,-2)$) {$y$};
  \node [observed,name=x] at ($(2.5,-2)$) {$x$};
  \node [observed,name=y1] at ($(-2,-2)$) {$y_1$};
  \node [observed,name=y2] at ($(-0.5,-2)$) {$y_2$};
  \node [observed,name=y3] at ($(1.5,-2)$) {$y_3$};
  \draw [-] (v) to (v1);
  \draw [-] (v) to (v2);
  \draw [-] (v) to (v3);
  \draw [-] (v1) to (y1);
  \draw [-] (v2) to (y2);
  \draw [-] (v2) to (y);
  \draw [-] (v3) to (y3);
  \draw [-] (v3) to (x);
\end{tikzpicture}
\end{center}
\begin{claim} \label{claim:min-correlation}
The following lower bounds hold.
\begin{equation} \label{eq:min-correlation}
\min\left\{ \sigma_k(\Sig_{x_1,x}),
\ \sigma_k(\Sig_{x_1,y}),
\ \sigma_k(\Sig_{y_1,y}),
\ \sigma_k(\Sig_{y_1,x})
\right\}
\geq \frac{\gammamin \cdot (1-\veps)\theta}{\gammamax}
= \varsigma
.
\end{equation}
\end{claim}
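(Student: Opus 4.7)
The plan is to derive each of the four lower bounds by a single application of the correlation-transfer lemma (Lemma~\ref{lemma:transfer}) to a carefully chosen quartet of observed variables. The three ingredients required are: (i) $\sigma_k(\Sig_{x,y}) \geq (1-\veps)\theta$, which follows from the hypothesis $\sigma_k(\wh\Sig_{x,y}) \geq \theta$ together with Lemma~\ref{lemma:weyl-application}; (ii) the bound $\sigma_k(\E[x_ix_j^\top]) \geq \gammamin$ for distinct $i,j \in \{1,2,3\}$ (and the analogous bounds for $y_1,y_2,y_3$) furnished by Condition~\ref{cond:correlation}; and (iii) the uniform bound $\sigma_1(\Sig_{a,b}) \leq \gammamax$ for every pair of observed variables $\{a,b\}$.

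Consider first $\sigma_k(\Sig_{x_1,x})$. I would apply Lemma~\ref{lemma:transfer} to the quartet $\{x,x_2,y,x_1\}$, whose induced topology in $\Tree$ matches Figure~\ref{fig:topologies}(a): $x$ and $x_2$ both lie in the subtree rooted at $u_2$, while $x_1$ (through $u_1$) and $y$ (through $u_3$) reach the remainder of the quartet only through $u$, so the bottleneck is the $u_2$--$u$ edge. With the identification $h = u_2$, $g = u$, $z_1 = x$, $z_2 = x_2$, $z_3 = y$, $z_4 = x_1$, Lemma~\ref{lemma:transfer} yields
\[
\sigma_k(\Sig_{x_1,x}) \;\geq\; \frac{\sigma_k(\Sig_{x,y})\,\sigma_k(\Sig_{x_2,x_1})}{\sigma_1(\Sig_{x_2,y})} \;\geq\; \frac{(1-\veps)\theta\cdot\gammamin}{\gammamax} \;=\; \vsig.
\]

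The remaining three bounds follow from the same template with different quartet choices. For $\sigma_k(\Sig_{x_1,y})$, use the quartet $\{y,x_3,x,x_1\}$ with bottleneck $u_3$--$u$; for $\sigma_k(\Sig_{y_1,y})$ and $\sigma_k(\Sig_{y_1,x})$, the mirror-image arguments on the $v$-side picture use the quartets $\{y,y_2,x,y_1\}$ (bottleneck $v_2$--$v$) and $\{x,y_3,y,y_1\}$ (bottleneck $v_3$--$v$). In each case the numerator of the transfer bound contains exactly one factor of type (i) (namely $\sigma_k(\Sig_{x,y})$) and one of type (ii) (the $\geq \gammamin$ correlation within the appropriate $\{x_1,x_2,x_3\}$ or $\{y_1,y_2,y_3\}$ triple), while the denominator is absorbed by the bound $\gammamax$ in (iii); the common lower bound $\vsig$ results.

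No substantive obstacle is anticipated; the claim is a bookkeeping exercise once Lemma~\ref{lemma:transfer} is in hand. The only point requiring care is verifying that each of the four chosen quartets actually induces a figure-(a) topology with the stated bottleneck edge, and this is immediate from the tree diagrams displayed for case (i): the defining property that $\{u_1,u_2,u_3\}$ are distinct neighbors of $u$ (resp.\ $\{v_1,v_2,v_3\}$ of $v$) ensures that any two quartet members sharing a common $u_j$-subtree are separated from the other two by the $u_j$--$u$ edge.
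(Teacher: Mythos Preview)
Your approach is the same as the paper's: the same quartets, the same application of Lemma~\ref{lemma:transfer}, the same three ingredients. There is, however, one case you overlook. The setup explicitly allows $x_2 = x$ (and $x_3 = y$, with the analogous possibilities $y_2 = y$, $y_3 = x$ on the $v$-side). When $x_2 = x$, your quartet $\{x,x_2,y,x_1\}$ has only three distinct leaves and does not induce a figure-(a) topology, so Lemma~\ref{lemma:transfer} cannot be invoked; your transfer inequality would collapse to the trivial $\sigma_k(\Sig_{x_1,x}) \geq \sigma_k(\Sig_{x,y})\sigma_k(\Sig_{x_1,x})/\sigma_1(\Sig_{x,y})$, which says nothing. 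The paper handles this separately with the one-line observation that if $x_2 = x$ then $\sigma_k(\Sig_{x_1,x}) = \sigma_k(\Sig_{x_1,x_2}) \geq \gammamin \geq \vsig$ directly. So your claim that verifying the figure-(a) topology is ``immediate'' is not quite right; you need this small case split first.

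A minor inaccuracy in your last paragraph: $u_2$ and $u_3$ are not in general neighbors of $u$; they are the meeting points of the paths $x \leadsto u$, $x_2 \leadsto u$ (resp.\ $y \leadsto u$, $x_3 \leadsto u$) and may lie deeper in the respective subtrees. This does not affect the argument, since what matters is only that $u_2$ separates $\{x,x_2\}$ from $\{x_1,y\}$ via $u$.
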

\begin{proof}
We just show the inequalities for $\sigma_k(\E[x_1x^\top])$ and
$\sigma_k(\E[x_1y^\top])$; the other two are analogous.
If $x_2 = x$, then $\sigma_k(\E[x_1x^\top]) = \sigma_k(\E[x_1x_2^\top])
\geq \gammamin \geq \varsigma$.
If $x_2 \neq x$, then we have the following induced (undirected) topology.
\begin{center}
\begin{tikzpicture}
  [
    scale=1.0,
    observed/.style={circle,minimum size=0.5cm,inner sep=0mm,draw=black,fill=black!20},
    hidden/.style={circle,minimum size=0.5cm,inner sep=0mm,draw=black}
  ]
  \node [observed,name=x1] at ($(-1,0.5)$) {$x_1$};
  \node [observed,name=y] at ($(-1,-0.5)$) {$y$};
  \node [observed,name=x2] at ($(1,0.5)$) {$x_2$};
  \node [observed,name=x] at ($(1,-0.5)$) {$x$};
  \node [hidden,name=u] at ($(-1/3,0)$) {$u$};
  \node [hidden,name=u2] at ($(1/3,0)$) {$u_2$};
  \draw [-] (u) to (x1);
  \draw [-] (u) to (y);
  \draw [-] (u) to (u2);
  \draw [-] (u2) to (x2);
  \draw [-] (u2) to (x);
\end{tikzpicture}
\end{center}
Therefore, by Lemma~\ref{lemma:transfer},
\[ \sigma_k(\E[x_1x^\top]) \geq \frac{\sigma_k(\E[x_1x_2^\top]) \cdot
\sigma_k(\E[yx^\top])}{\sigma_1(\E[yx_2^\top])}
\geq \frac{\gammamin \cdot (1-\veps) \theta}{\gammamax}
= \varsigma
.
\]
This gives the first claimed inequality;
now we show the second.
If $x_3 = y$, then $\sigma_k(\E[x_1y^\top]) = \sigma_k(\E[x_1x_3^\top])
\geq \gammamin \geq \varsigma$.
If $x_3 \neq y$, then we have the following induced (undirected) topology.
\begin{center}
\begin{tikzpicture}
  [
    scale=1.0,
    observed/.style={circle,minimum size=0.5cm,inner sep=0mm,draw=black,fill=black!20},
    hidden/.style={circle,minimum size=0.5cm,inner sep=0mm,draw=black}
  ]
  \node [observed,name=x1] at ($(-1,0.5)$) {$x_1$};
  \node [observed,name=x] at ($(-1,-0.5)$) {$x$};
  \node [observed,name=x3] at ($(1,0.5)$) {$x_3$};
  \node [observed,name=y] at ($(1,-0.5)$) {$y$};
  \node [hidden,name=u] at ($(-1/3,0)$) {$u$};
  \node [hidden,name=u3] at ($(1/3,0)$) {$u_3$};
  \draw [-] (u) to (x1);
  \draw [-] (u) to (x);
  \draw [-] (u) to (u3);
  \draw [-] (u3) to (x3);
  \draw [-] (u3) to (y);
\end{tikzpicture}
\end{center}
Again, by Lemma~\ref{lemma:transfer},
\[ \sigma_k(\E[x_1y^\top]) \geq \frac{\sigma_k(\E[x_1x_3^\top]) \cdot
\sigma_k(\E[xy^\top])}{\sigma_1(\E[xx_3^\top])}
\geq \frac{\gammamin \cdot (1-\veps) \theta}{\gammamax}
= \varsigma
.
\qedhere
\]
\end{proof}
Claim~\ref{claim:min-correlation}, Lemma~\ref{lem:usefulrankk}, and the
sample size requirement of Theorem~\ref{theorem:rg} (as per
\eqref{eq:min-correlation-requirement}) imply that the spectral quartet
test on $\{x,x_1,y,y_1\}$ returns the correct pairing.
Since the induced (undirected) topology is
\begin{center}
\begin{tikzpicture}
  [
    scale=1.0,
    observed/.style={circle,minimum size=0.5cm,inner sep=0mm,draw=black,fill=black!20},
    hidden/.style={circle,minimum size=0.5cm,inner sep=0mm,draw=black}
  ]
  \node [observed,name=x1] at (-1,0.5) {$x_1$};
  \node [observed,name=x] at (-1,-0.5) {$x$};
  \node [observed,name=y1] at (1,0.5) {$y_1$};
  \node [observed,name=y] at (1,-0.5) {$y$};
  \node [hidden,name=u] at ($(-1/3,0)$) {$u$};
  \node [hidden,name=v] at ($(1/3,0)$) {$v$};
  \draw [-] (x) to (u);
  \draw [-] (x1) to (u);
  \draw [-] (y) to (v);
  \draw [-] (y1) to (v);
  \draw [-] (u) to (v);
\end{tikzpicture}
\end{center}
the correct pairing is $\{\{x,x_1\},\{y,y_1\}\}$.
Because the leaf sets $\{ \Leaves[r] : r \in \Roots \}$ partition $\Vobs$,
and because $x_1 \not\in \Leaves[u]$ and $y_1 \not\in \Leaves[v]$,
there exists $\{u',v'\} \subseteq \Roots \setminus \{u,v\}$ such that
$x_1 \in \Leaves[u']$ and $y_1 \in \Leaves[v']$.
This proves the lemma in this case.

Now instead suppose (ii) holds.
Since $\Tree$ is connected, and $\Subtree[u]$ and $\Subtree[v]$ are
respectively rooted at $u$ and $v$, there must exist a pair $\{u_1,v_1\}
\subset (\Roots \setminus \{u,v\}) \cup \Vhid[\Comps]$ such that
neither $u_1$ nor $v_1$ are leaves in $\Tree$, $u_1$ is adjacent to $u$ in
$\Tree$, $v_1$ is adjacent to $v$ in $\Tree$, and the (undirected) path
from $u$ to $v$ in $\Tree$ passes through the path from $u_1$ to $v_1$.
\begin{center}
\begin{tikzpicture}
  [
    scale=1.0,
    observed/.style={circle,minimum size=0.5cm,inner sep=0mm,draw=black,fill=black!20},
    hidden/.style={circle,minimum size=0.5cm,inner sep=0mm,draw=black}
  ]
  \node [hidden,name=u] at ($(-2,0)$) {$u$};
  \node [hidden,name=u1] at ($(-1,0)$) {$u_1$};
  \node [hidden,name=v1] at ($(1,0)$) {$v_1$};
  \node [hidden,name=v] at ($(2,0)$) {$v$};
  \draw [-] (u) to (u1);
  \draw [loosely dashed] (u1) to (v1);
  \draw [-] (v1) to (v);
\end{tikzpicture}
\end{center}
An argument analogous to that in case (i) applies to prove the lemma in
this case; we provide a brief sketch below.
Because $u_1$ is not a leaf, there exists three subtrees
$\{\Subtree_{u_1,1}, \Subtree_{u_1,2}, \Subtree_{u_1,3}\} \subseteq
\Forest_{u_1}$ such that $u$ is the root of $\Subtree_{u_1,2}$ (so $x \in
\Vobs[\Subtree_{u_1,2}]$) and $y \in \Vobs[\Subtree_{u_1,3}]$.
Moreover, there exist
$x_1 \in \Vobs[\Subtree_{u_1,1}]$,
$x_2 \in \Vobs[\Subtree_{u_1,2}]$, and
$x_3 \in \Vobs[\Subtree_{u_1,3}]$ such that $\sigma_k(\E[x_ix_j^\top]) \geq
\gammamin$ for all $\{i,j\} \subset \{1,2,3\}$
(it is possible to have $x_2 = x$ and $x_3 = y$).
Let $u_1'$ denote the root of $\Subtree_{u_1,1}$, $u_2'$ denote the node in
$\Subtree_{u_1,2}$ at which the (undirected) paths $x \leadsto u_1$ and
$x_2 \leadsto u_1$ intersect (if $x_2 = x$, then let $u_2' = u$, which is
the root of $\Subtree_{u_1,2}$), and $u_3$ denote the node in
$\Subtree_{u_1,2}$ at which the (undirected) paths $y \leadsto u_1$ and
$x_3 \leadsto u_1$ intersect (if $x_3 = y$, then let $u_3$ be the root of
$\Subtree_{u_1,3}$).
An analogous argument applies relative to $v_1$ instead of $u_1$; the
induced (undirected) topologies are given below.
\begin{center}
\begin{tikzpicture}
  [
    scale=1.0,
    observed/.style={circle,minimum size=0.5cm,inner sep=0mm,draw=black,fill=black!20},
    hidden/.style={circle,minimum size=0.5cm,inner sep=0mm,draw=black},
  ]
  \node [hidden,name=u1] at ($(0,0)$) {$u_1$};
  \node [hidden,name=u1p] at ($(-2,-1)$) {$u_1'$};
  \node [hidden,name=u2p] at ($(0,-1)$) {$u_2'$};
  \node [hidden,name=u3p] at ($(2,-1)$) {$u_3'$};
  \node [observed,name=x] at ($(0.5,-2)$) {$x$};
  \node [observed,name=y] at ($(2.5,-2)$) {$y$};
  \node [observed,name=x1] at ($(-2,-2)$) {$x_1$};
  \node [observed,name=x2] at ($(-0.5,-2)$) {$x_2$};
  \node [observed,name=x3] at ($(1.5,-2)$) {$x_3$};
  \draw [-] (u1) to (u1p);
  \draw [-] (u1) to (u2p);
  \draw [-] (u1) to (u3p);
  \draw [-] (u1p) to (x1);
  \draw [-] (u2p) to (x2);
  \draw [-] (u2p) to (x);
  \draw [-] (u3p) to (x3);
  \draw [-] (u3p) to (y);
\end{tikzpicture}
\qquad
\begin{tikzpicture}
  [
    scale=1.0,
    observed/.style={circle,minimum size=0.5cm,inner sep=0mm,draw=black,fill=black!20},
    hidden/.style={circle,minimum size=0.5cm,inner sep=0mm,draw=black},
  ]
  \node [hidden,name=v1] at ($(0,0)$) {$v_1$};
  \node [hidden,name=v1p] at ($(-2,-1)$) {$v_1'$};
  \node [hidden,name=v2p] at ($(0,-1)$) {$v_2'$};
  \node [hidden,name=v3p] at ($(2,-1)$) {$v_3'$};
  \node [observed,name=y] at ($(0.5,-2)$) {$y$};
  \node [observed,name=x] at ($(2.5,-2)$) {$x$};
  \node [observed,name=y1] at ($(-2,-2)$) {$y_1$};
  \node [observed,name=y2] at ($(-0.5,-2)$) {$y_2$};
  \node [observed,name=y3] at ($(1.5,-2)$) {$y_3$};
  \draw [-] (v1) to (v1p);
  \draw [-] (v1) to (v2p);
  \draw [-] (v1) to (v3p);
  \draw [-] (v1p) to (y1);
  \draw [-] (v2p) to (y2);
  \draw [-] (v2p) to (y);
  \draw [-] (v3p) to (y3);
  \draw [-] (v3p) to (x);
\end{tikzpicture}
\end{center}
Using the arguments in Claim~\ref{claim:min-correlation}, it can be shown
that the inequalities in~\eqref{eq:min-correlation} hold in this case, so
by Lemma~\ref{lem:usefulrankk}, the quartet test on $\{x,x_1,y,y_1\}$
returns $\{\{x,x_1\},\{y,y_1\}\}$.
Because the leaf sets $\{ \Leaves[r] : r \in \Roots \}$ partition $\Vobs$,
and because $x_1 \not\in \Leaves[u] = \Vobs[\Subtree_{u_1,2}]$ and $y_1
\not\in \Leaves[v] = \Vobs[\Subtree_{v_1,2}]$,
there exists $\{u',v'\} \subseteq \Roots \setminus \{u,v\}$ such that
$x_1 \in \Leaves[u']$ and $y_1 \in \Leaves[v']$.
This proves the lemma in this case.

Finally, suppose (iii) holds.
Without loss of generality, assume $\Subtree[u]$ is not a leaf component
relative to $\Comps$.
Since $\Tree$ is connected, and $\Subtree[u]$ and $\Subtree[v]$ are
respectively rooted at $u$ and $v$, there must exist $v_1 \in (\Roots
\setminus \{u,v\}) \cup \Vhid[\Comps]$ such that $v_1$ is not a leaf in
$\Tree$, $v_1$ is adjacent to $v$ in $\Tree$, and the (undirected) path
from $u$ to $v$ in $\Tree$ passes through $v_1$.
Moreover, since $\Subtree[u]$ is not a leaf component relative to $\Comps$,
it has degree $\geq2$ in $\Supertree[\Comps]$.
Note that $u$ is not a leaf in $\Tree$, and moreover, there exists $u_1 \in
(\Roots \setminus \{u,v\}) \cup \Vhid[\Comps]$ such that $u_1$ is adjacent
to $u$ in $\Tree$, and $u_1$ is not on the (undirected) path from $u$ to
$v$.
\begin{center}
\begin{tikzpicture}
  [
    scale=1.0,
    observed/.style={circle,minimum size=0.5cm,inner sep=0mm,draw=black,fill=black!20},
    hidden/.style={circle,minimum size=0.5cm,inner sep=0mm,draw=black}
  ]
  \node [hidden,name=u] at ($(-1,0)$) {$u$};
  \node [hidden,name=u1] at ($(-2,0)$) {$u_1$};
  \node [hidden,name=v1] at ($(1,0)$) {$v_1$};
  \node [hidden,name=v] at ($(2,0)$) {$v$};
  \draw [-] (u) to (u1);
  \draw [loosely dashed] (u) to (v1);
  \draw [-] (v1) to (v);
\end{tikzpicture}
\end{center}
Again, an argument analogous to that in case (i) applies now to prove the
lemma in this case.
\end{proof}

Finally, we give a lemma which analyzes the while-loop of
Algorithm~\ref{alg:rg} and consequently implies Theorem~\ref{theorem:rg}.
\begin{lemma}[Loop invariants] \label{lemma:loop-invariant}
The following invariants concerning the state of the objects
$(\Roots,\Subtree[\cdot],\Leaves[\cdot])$ hold before the while-loop in
Algorithm~\ref{alg:rg}, and after each iteration of the while-loop.
\begin{enumerate}
\item $\Roots \subseteq \Vars$, and for each $u \in \Roots$, $\Subtree[u]$
is a subtree of $\Tree$ rooted at $u$.
Moreover, the rooted subtree $\Subtree[v]$ is already defined by
Algorithm~\ref{alg:rg} for every node $v$ appearing in $\Subtree[u]$ for
some $u \in \Roots$.
Finally, for each $u \in \Roots$, the subtree $\Subtree[u]$ is formed by
joining the subtrees $\Subtree[v]$ corresponding to children $v$ of $u$ in
$\Subtree[u]$ via edges $\{u,v\}$.

\item The subtrees in $\Comps := \{ \Subtree[u] : u \in \Roots \}$ are
disjoint, and the leaf sets $\{ \Leaves[u] : u \in \Roots \}$ partition
$\Vobs$.

%\item
%For each $u \in \Roots$, the leaf components relative to the collection
%\[ \Comps[u]
%:= (\Comps \setminus \{ \Subtree[u] \})
%\ \cup \
%\{ \Subtree[v'] : \text{$v'$ is a child of $u$ in $\Subtree[u]$} \}
%\]
%are
%\[
%\{ \Subtree[r] : r \neq u \ \wedge \
%\text{$\Subtree[r]$ is a leaf component relative to $\Comps$} \}
%\ \cup \
%\{ \Subtree[r] : \text{$r$ is a child of $u$ in $\Subtree[u]$} \}
%.
%\]
%
\end{enumerate}
Moreover, no iteration of the while-loop terminates in failure.
\end{lemma}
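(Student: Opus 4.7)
I would prove both invariants and the non-failure claim simultaneously by induction on the number of completed iterations of the while-loop. The base case is immediate from the initialization step: $\Roots=\Vobs$, each $\Subtree[x]$ is the single-node tree consisting of the leaf $x$, each $\Leaves[x]=\{x\}$, and the leaf sets trivially partition $\Vobs$. So the entire argument reduces to showing that, assuming the invariants hold at the start of an iteration, (i) there exists a pair satisfying the $\Mergeable$ predicate so the algorithm does not halt in failure, (ii) the pair $\{u,v\}$ actually selected (the $\Mergeable$ pair maximizing $\sigma_k(\wh\Sig_{x,y})$ over $\Leaves[u]\times\Leaves[v]$) corresponds to one of the two combinatorial configurations of Lemma~\ref{lemma:mergeable} with respect to the true tree $\Tree$, and (iii) the subroutine $\Relationship$ then classifies $\{u,v\}$ correctly as siblings or parent/child. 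Given (ii) and (iii), updating $\Roots$, $\Subtree[\cdot]$, and $\Leaves[\cdot]$ as prescribed clearly preserves both invariants, since we either introduce a new root $h$ whose subtree is built correctly from $\Subtree[u]$ and $\Subtree[v]$ joined via $h$, or we attach $\Subtree[v]$ below $u$ (or vice versa) as a child in $\Subtree[u]$.

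For step (i) and (ii), I would consider the super-tree $\Supertree[\Comps]$ guaranteed by Lemma~\ref{lemma:supertree} and, using the induction hypothesis, locate a leaf component $\Subtree[u]\in\Comps$. Its unique neighbor in $\Supertree[\Comps]$ is either another leaf component $\Subtree[v]$ (in which case $u$ and $v$ share a common neighbor in $\Tree$, or are themselves neighbors), or a hidden node $h\in\Vhid[\Comps]$ that connects $\Subtree[u]$ to at least one other leaf component $\Subtree[v]$. In either case Lemma~\ref{lemma:mergeable} guarantees that $\Mergeable(\Roots,\Leaves[\cdot],u,v)=\text{true}$, and Condition~\ref{cond:correlation} combined with Lemma~\ref{lemma:transfer} and Lemma~\ref{lemma:weyl-application} (applied with the sample size required by the theorem) produces a pair $(x,y)\in\Leaves[u]\times\Leaves[v]$ with $\sigma_k(\wh\Sig_{x,y})\geq\theta$. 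This is the content of the ``good pair'' claim (Claim~\ref{claim:goodpair}), and it certifies that the algorithm does not fail. To upgrade this to step (ii), I apply the contrapositive of Lemma~\ref{lemma:not-mergeable}: any $\Mergeable$ pair $\{u,v\}$ for which some $(x,y)\in\Leaves[u]\times\Leaves[v]$ has $\sigma_k(\wh\Sig_{x,y})\geq\theta$ must fall into one of the two ``structurally compatible'' cases of Lemma~\ref{lemma:mergeable} (shared neighbor with both leaf components, or neighbors with at least one leaf component). Since the $\arg\max$ pair inherits $\sigma_k(\wh\Sig_{x,y})\geq\theta$ from the good pair, this rules out any ``spurious'' high-correlation merge; this is the ``bad pair'' claim (Claim~\ref{claim:badpair}).

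For step (iii), I would analyze $\Relationship(\Roots,\Leaves[\cdot],\Subtree[\cdot],u,v)$ on the selected pair by a case split on which of the two configurations of Lemma~\ref{lemma:mergeable} applies to $\{u,v\}$. If $u$ and $v$ share a common neighbor (the ``siblings'' case), then for each child $u_1$ of $u$ in $\Subtree[u]$ (if any) the pair $\{u_1,v\}$ fails conditions~2--3 of Lemma~\ref{lemma:not-mergeable} in the refined root set $\Roots[u]$, so Lemma~\ref{lemma:not-mergeable} produces a quartet certificate forcing ``$u\not\to v$'' (and similarly ``$v\not\to u$''), yielding the correct answer ``siblings''; if $u$ is a leaf of $\Tree$ with only one neighbor then ``$u\not\to v$'' is asserted directly. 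If instead $u$ is the parent of $v$ in the true tree (the neighbors case with $\Subtree[v]$ a leaf component), then every child $v_1$ of $v$ in $\Subtree[v]$ together with $u$ still shares the appropriate structural relationship through $v$, making $\{u,v_1\}$ mergeable in $\Roots[v]$, so $\Relationship$ does not assert ``$v\not\to u$''; on the other hand some child $u_1$ of $u$ in $\Subtree[u]$ will produce a $\Mergeable$-false certificate in $\Roots[u]$ via Lemma~\ref{lemma:not-mergeable}, asserting ``$u\not\to v$'' and returning ``$v$ is parent of $u$'' — wait, direction requires care, but the same reasoning applied in the correct direction yields the right relationship. This is the content of Claim~\ref{claim:relationship}.

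\textbf{Main obstacle.} The delicate part is step (ii): proving that the $\arg\max$-selection rule cannot return a structurally ``wrong'' pair of subtree roots even though many pairs may pass $\Mergeable$. The cleanest route is the contrapositive of Lemma~\ref{lemma:not-mergeable}: conditioned on the high-probability event (Equation~\ref{eq:confidence-bounds-event}), \emph{any} chosen pair $\{u,v\}$ with some $\sigma_k(\wh\Sig_{x,y})\geq\theta$ in $\Leaves[u]\times\Leaves[v]$ that survives $\Mergeable$ must already be in the good structural configuration. The supporting ingredient — the existence of such a pair with correlation at least $\theta$ — is exactly where Condition~\ref{cond:correlation}, Lemma~\ref{lemma:transfer}, the sample bound of Theorem~\ref{theorem:rg}, and the choice of $\theta$ and $\vsig$ all interlock, and I would write out Claim~\ref{claim:goodpair} with explicit correlation transfer along the path in $\Tree$ from $\Leaves[u]$ to $\Leaves[v]$ to make this quantitative.
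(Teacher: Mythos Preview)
Your proposal is essentially the paper's own proof: the same induction, with the same three supporting claims (existence of a structurally good $\Mergeable$ pair with correlation at least $\theta$, Claim~\ref{claim:goodpair}; elimination of structurally bad pairs via the contrapositive of Lemma~\ref{lemma:not-mergeable}, Claim~\ref{claim:badpair}; correctness of $\Relationship$ on the selected pair, Claim~\ref{claim:relationship}).

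One spot in your existence argument needs repair. You say that if the unique neighbor of a leaf component $\Subtree[u]$ in $\Supertree[\Comps]$ is some $h\in\Vhid[\Comps]$, then $h$ connects $\Subtree[u]$ to at least one other leaf component $\Subtree[v]$. That need not hold for an arbitrarily chosen leaf component: $h$ could be adjacent only to $\Subtree[u]$ and to non-leaf components (or further hidden nodes). The paper's fix is to first suppose no neighboring pair with a leaf component exists, then prune \emph{all} leaf components from $\Supertree[\Comps]$ and look at a leaf $h$ of the pruned tree; since $h$ has degree at least three in $\Tree$ (Condition~\ref{cond:non-redundancy}) but degree one after pruning, it must have lost at least two leaf-component neighbors, and those two furnish the desired sibling pair. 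Also, for the correlation lower bound in Claim~\ref{claim:goodpair} you do not need Lemma~\ref{lemma:transfer}; Condition~\ref{cond:correlation} applied at the appropriate hidden node already gives a pair $(x,y)$ with $\sigma_k(\Sig_{x,y})\geq\gammamin=(1+\veps)\theta$, and Lemma~\ref{lemma:weyl-application} finishes. Lemma~\ref{lemma:transfer} is used only inside the proof of Lemma~\ref{lemma:not-mergeable}.
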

Before proving Lemma~\ref{lemma:loop-invariant}, we show how it implies
Theorem~\ref{theorem:rg}.
Initially, $|\Roots| = n$, and each iteration of the while-loop decreases
the cardinality of $\Roots$ by one, so there are a total of $n-1$
iterations of the while-loop.
By Lemma~\ref{lemma:loop-invariant}, the final iteration results in a set
$\Roots = \{ h \}$ such that $\wh\Tree = \Subtree[h]$ is a subtree of
$\Tree$ rooted at $h$, and $\Leaves[h] = \Vobs$.
This implies that $\wh\Tree$ has the same (undirected) structure as
$\Tree$, as required.
This completes the proof of Theorem~\ref{theorem:rg}.

\begin{proof}[Proof of Lemma~\ref{lemma:loop-invariant}]
The loop invariants clearly hold before the while-loop with the initial
settings of $\Roots = \Vobs$, $\Subtree[x] = \text{rooted single-node tree
$x$}$, and $\Leaves[x] = \{x\}$ for all $x \in \Roots$.
So assume as the inductive hypothesis that the loop invariants hold at the
start of a particular iteration (in which $|\Roots| > 1$).
It remains to prove that the iteration does not terminate in failure, and
that the loop invariants hold at the end of the iteration.
Let $\Roots$, $\Subtree[\cdot]$, and $\Leaves[\cdot]$ be in their state at
the beginning of the iteration.

Because the second loop invariant holds, Lemma~\ref{lemma:supertree} implies
that the nodes of $\Supertree[\Comps]$ are $\Comps \cup \Vhid[\Comps]$, and
that each leaf in $\Supertree[\Comps]$ is a subtree $\Subtree[u] \in \Comps$
(so we may refer to the leaves of $\Supertree[\Comps]$ as leaf components).
\begin{claim} \label{claim:goodpair}
If $|\Roots| > 1$, then there exists a pair $\{u,v\} \subseteq \Roots$ such
that the following hold.
\begin{enumerate}
\item Either $u$ and $v$ are neighbors in $\Tree$, and at least one of
$\Subtree[u]$ or $\Subtree[v]$ is a leaf component relative to $\Comps$; or
$u$ and $v$ share a common neighbor in $\Vhid[\Comps]$, and both
$\Subtree[u]$ and $\Subtree[v]$ are leaf components relative to $\Comps$.

\item $\Mergeable(\Roots,\Leaves[\cdot],u,v) = \text{true}$.

\item $\max\{ \sigma_k(\wh\Sigma_{x,y}) : (x,y) \in \Leaves[u] \times
\Leaves[v] \} \geq \theta$.

\end{enumerate}
\end{claim}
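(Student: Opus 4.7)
I establish conditions (1), (2), (3) in turn.

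\emph{Topologically suitable pair.} I use the super-tree $\Supertree[\Comps]$ to locate a pair $\{u,v\}$ realizing condition (1). By Lemma~\ref{lemma:supertree} every leaf of $\Supertree[\Comps]$ lies in $\Comps$. Moreover, every $h\in\Vhid[\Comps]$ still has degree at least three in $\Supertree[\Comps]$: Condition~\ref{cond:non-redundancy} gives $h$ at least three neighbors in $\Tree$, and since each edge leaving a component of $\Tree$ must exit through that component's root, distinct $\Tree$-neighbors of $h$ land in distinct nodes of $\Supertree[\Comps]$. Take a longest path $P$ in $\Supertree[\Comps]$ between two leaves, let $A=\Subtree[u]$ be one endpoint, and let $N_1$ be the neighbor of $A$ on $P$. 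If $N_1=\Subtree[v]$ is itself a component, then $u$ and $v$ are adjacent in $\Tree$ and $\Subtree[u]$ is a leaf component, so $\{u,v\}$ realizes Case~1 of (1). Otherwise $N_1=h\in\Vhid[\Comps]$; the maximality of $P$ forces every neighbor of $h$ other than its successor on $P$ to itself be a leaf of $\Supertree[\Comps]$ (else $P$ could be extended through that neighbor), and since $h$ has degree at least three some leaf component $\Subtree[v]\neq A$ is adjacent to $h$, realizing Case~2.

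\emph{Mergeability and correlation.} Condition~(2) is immediate from Lemma~\ref{lemma:mergeable} applied to the pair above (with the remark after that lemma handling the small case $|\Roots|<4$). For condition~(3), Lemma~\ref{lemma:weyl-application} reduces the task to producing $(x,y)\in\Leaves[u]\times\Leaves[v]$ with $\sigma_k(\Sig_{x,y})\geq(1+\veps)\theta=\gammamin$, which I extract from Condition~\ref{cond:correlation} by picking an anchor $h^\star\in\Vhid$ and three subtrees of $\Forest_{h^\star}$ so that one sits inside $\Subtree[u]$ and another inside $\Subtree[v]$. In Case~2, $h^\star$ is the shared hidden neighbor and $\Subtree[u],\Subtree[v]$ are themselves two of the three required subtrees (degree $\geq 3$ supplies a third). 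In Case~1, with say $v\in\Vhid$, take $h^\star=v$: leaf-componentness of $\Subtree[u]$ makes it exactly one subtree of $\Forest_v$ (its only external edge is through $v$); the invariant $\Leaves[v]\neq\emptyset$ forces $v$ to have at least one neighbor inside $\Subtree[v]$, whose induced subtree of $\Forest_v$ is confined to $\Subtree[v]$; and degree $\geq 3$ supplies a third subtree. Condition~\ref{cond:correlation} then returns a triple of observables, one per chosen subtree, with all three pairwise $\sigma_k$ at least $\gammamin$, and the pair landing in $\Leaves[u]\times\Leaves[v]$ realizes the bound. The $u\in\Vhid$ sub-case is symmetric; if neither endpoint is hidden then $\Tree$ reduces to the two-node tree on $\{u,v\}$ and the single available pair is trivially selected.

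\emph{Main difficulty.} The delicate step is the correlation argument: one has to arrange the three subtrees used by Condition~\ref{cond:correlation} so that two of their observable sets sit inside $\Subtree[u]$ and $\Subtree[v]$ respectively, which uses the leaf-component exit-through-root property, the loop invariant $\Leaves[r]\neq\emptyset$ for every $r\in\Roots$, and the degree-at-least-three hypothesis for hidden variables to supply an obligatory third subtree. The topological step is by contrast a short longest-path argument once one observes that hidden variables in $\Vhid[\Comps]$ retain their $\Tree$-degree inside $\Supertree[\Comps]$.
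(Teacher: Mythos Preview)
Your proof is correct and, for conditions~(2) and~(3), essentially identical to the paper's argument: both invoke Lemma~\ref{lemma:mergeable} for mergeability and both extract the correlation bound from Condition~\ref{cond:correlation} by exhibiting the right subtrees in $\Forest_{h^\star}$, then pass to the empirical bound via Lemma~\ref{lemma:weyl-application}. Your case analysis for~(3) is organized slightly differently (you split on which of $u,v$ is hidden, whereas the paper splits on which is a leaf), but the content is the same.

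The one genuine difference is in establishing condition~(1). The paper argues by contradiction: assume no adjacent pair with a leaf component exists, strip all leaf components from $\Supertree[\Comps]$, and note that any leaf of the remaining tree is a hidden node of degree $\geq 3$ in $\Supertree[\Comps]$, hence was adjacent to at least two removed leaf components, which supplies the common-neighbor pair. You instead take a longest path in $\Supertree[\Comps]$ and examine the second vertex: either it is a component (giving the adjacent-pair case directly), or it is a hidden vertex whose off-path neighbors are forced to be leaves by maximality, and degree $\geq 3$ then supplies a second leaf component. Both are short, standard tree arguments; your longest-path variant is arguably more direct since it finds the pair constructively without a preliminary contrapositive assumption, while the paper's leaf-stripping argument generalizes a bit more cleanly if one wanted to say something about \emph{all} such pairs rather than just one.

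One small remark: your closing line for the degenerate two-leaf tree (``the single available pair is trivially selected'') does not actually verify condition~(3), since Condition~\ref{cond:correlation} is vacuous when $\Vhid=\emptyset$. The paper sidesteps this by asserting without proof that $u$ and $v$ cannot both be leaves, so it is implicitly assuming $|\Vhid|\geq 1$; you are no worse off, but neither argument truly covers this edge case.
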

\begin{proof}
Suppose there are no pairs $\{u,v\} \subseteq \Comps$ such that $u$ and $v$
are neighbors in $\Tree$ and at least one of $\Subtree[u]$ and
$\Subtree[v]$ is a leaf component relative to $\Comps$.
Then each leaf component must be adjacent to some $h \in \Vhid[\Comps]$ in
$\Supertree[\Comps]$.
Consider the tree $\Supertree'$ obtained from $\Supertree[\Comps]$ by
removing all the leaf components in $\Supertree[\Comps]$.
The leaves of $\Supertree'$ must be among the $h \in \Vhid[\Comps]$ that
were adjacent to the leaf components in $\Supertree[\Comps]$.
Fix such a leaf $h$ in $\Supertree'$, and observe that it has degree one in
$\Supertree'$.
By assumption, no node in $\Tree$ has degree two, so $h$ must have been
connected to at least two leaf components in $\Supertree[\Comps]$, say
$\Subtree[u]$ and $\Subtree[v]$.
The node $h$ is therefore a common neighbor of $u$ and $v$.
This proves the existence of a pair $\{u,v\} \subseteq \Roots$ satisfying
the first required property.

Fix the pair $\{u,v\}$ specified above.
By Lemma~\ref{lemma:mergeable}, $\Mergeable(\Roots,\Leaves[\cdot],u,v)$
returns true, so $\{u,v\}$ satisfies the second required property.

To show the final required property, we consider two cases.
Suppose first that $u$ and $v$ are neighbors, and that $\Subtree[u]$ is a
leaf component relative to $\Comps$.
Note that $u$ and $v$ cannot both be leaves in $\Tree$.
If $v$ is not a leaf, then there exists subtrees $\Subtree_{v,1}$ and
$\Subtree_{v,2}$ in $\Forest_v$ such that $\Subtree_{v,1} = \Subtree[u]$
(because $\Subtree[u]$ is a leaf component) and $\Subtree_{v,2} =
\Subtree[v']$ for some child $v'$ of $v$ in $\Subtree[v]$ (by the first
loop invariant).
By Condition~\ref{cond:correlation}, there exists $x \in
\Vobs[\Subtree_{v,1}] = \Leaves[u]$ and $y \in \Vobs[\Subtree_{v,2}]
\subseteq \Leaves[v]$ such that $\sigma_k(\Sig_{x,y}) \geq \gammamin =
(1+\veps)\theta$; by Lemma~\ref{lemma:weyl-application},
$\sigma_k(\wh\Sig_{x,y}) \geq \theta$.
If $v$ is a leaf but $u$ is not, then there exists subtrees
$\Subtree_{u,1}$ and $\Subtree_{u,2}$ in $\Forest_u$ such that
$\Subtree_{u,1} = v$ and $\Subtree_{u,2} = \Subtree[u']$ for some child
$u'$ of $u$ in $\Subtree[u]$ (by the first loop invariant).
So by Condition~\ref{cond:correlation}, there $y \in \Vobs[\Subtree_{u,2}]
\subseteq \Leaves[u]$ such that $\sigma_k(\Sig_{v,y}) \geq \gammamin =
(1+\veps)\theta$; by Lemma~\ref{lemma:weyl-application},
$\sigma_k(\wh\Sig_{v,y}) \geq \theta$.
Now instead suppose that $u$ and $v$ share a common neighbor $h$, and that
both $\Subtree[u]$ and $\Subtree[v]$ are leaf components relative to
$\Comps$.
This latter fact implies that $\{ \Subtree[u], \Subtree[v] \} \subset
\Forest_h$, so Condition~\ref{cond:correlation} implies that there exists
$x \in \Vobs[\Subtree[u]] = \Leaves[u]$ and $y \in \Vobs[\Subtree[v]] =
\Leaves[v]$ such that $\sigma_k(\Sig_{x,y}) \geq \gammamin =
(1+\veps)\theta$.
By Lemma~\ref{lemma:weyl-application}, $\sigma_k(\wh\Sig_{x,y}) \geq
\theta$.
\end{proof}

\begin{claim} \label{claim:badpair}
Consider any pair $\{u,v\} \subseteq \Roots$ such that $\max\{
\sigma_k(\wh\Sigma_{x,y}) : (x,y) \in \Leaves[u] \times \Leaves[v] \} \geq
\theta$.
If the first property from Claim~\ref{claim:goodpair} fails to hold for
$\{u,v\}$, then $\Mergeable(\Roots,\Leaves[\cdot],u,v) = \text{false}$.
\end{claim}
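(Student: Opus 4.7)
The plan is to reduce Claim~\ref{claim:badpair} to Lemma~\ref{lemma:not-mergeable}, which directly guarantees that $\Mergeable(\Roots,\Leaves[\cdot],u,v)$ returns false provided three hypotheses hold on $\{u,v\}$. The first of those hypotheses (existence of some $(x,y)\in\Leaves[u]\times\Leaves[v]$ with $\sigma_k(\wh\Sig_{x,y})\geq\theta$) is literally the $\max$-hypothesis of the claim, so essentially all of the work consists of extracting the other two hypotheses of Lemma~\ref{lemma:not-mergeable} from the failure of the first property of Claim~\ref{claim:goodpair}.

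First I would rewrite the negation of that first property as the conjunction of (i) \emph{$u$ and $v$ are not neighbors in $\Tree$, or neither $\Subtree[u]$ nor $\Subtree[v]$ is a leaf component relative to $\Comps$}, and (ii) \emph{$u$ and $v$ do not share a common neighbor in $\Vhid[\Comps]$, or at least one of $\Subtree[u], \Subtree[v]$ is not a leaf component}. Statement (i) is already the third hypothesis of Lemma~\ref{lemma:not-mergeable}. For the second hypothesis of that lemma, I would case-split on the common neighbors of $u$ and $v$ in $\Tree$, using Lemma~\ref{lemma:supertree} and the rooted-at-$w$ property of each $\Subtree[w]$ to observe that any such common neighbor must lie either in $\Vhid[\Comps]$ or in $\Roots$---it cannot sit strictly inside some component $\Subtree[w]$, because any edge from an interior node of $\Subtree[w]$ to $u$ (which is outside $\Subtree[w]$) would be forced to pass through $w$. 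If $u,v$ share no common neighbor in $\Tree$, the second hypothesis holds vacuously; if they share one in $\Vhid[\Comps]$, statement (ii) forces at least one of $\Subtree[u], \Subtree[v]$ to be a non-leaf component and the second hypothesis again follows.

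The residual and most delicate case is when every common neighbor of $u,v$ in $\Tree$ lies in $\Roots$, since (ii) is then vacuously satisfied and does not constrain leaf-component status. The hard part will be handling this case directly: I would fix such a common neighbor $h\in\Roots$, invoke Condition~\ref{cond:non-redundancy} together with the loop invariants on $\Subtree[h]$ (which already contains the children of $h$ that caused $h$ to enter $\Roots$) to locate additional components in $\Roots\setminus\{u,v\}$, and then exhibit leaves $x\in\Leaves[u]$, $y\in\Leaves[v]$, $x'\in\Leaves[u']$, $y'\in\Leaves[v']$ whose induced topology in $\Tree$ matches Figure~\ref{fig:topologies}(b) or~\ref{fig:topologies}(c) with $x,y$ split across the bottleneck; under the maintained confidence bounds, Lemma~\ref{lem:usefulrankk} then certifies that $\SQT(\{x,y,x',y'\})$ returns the separating pairing that causes $\Mergeable$ to output false. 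Once the two remaining hypotheses of Lemma~\ref{lemma:not-mergeable} are secured in the main cases (or the witness is constructed directly in this residual case), the conclusion $\Mergeable(\Roots,\Leaves[\cdot],u,v)=\text{false}$ follows immediately.
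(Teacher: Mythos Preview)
The paper's proof is a single line: ``This follows immediately from Lemma~\ref{lemma:not-mergeable}.'' The intended reading is that the max hypothesis gives condition~1 of that lemma, and the negation of the first property of Claim~\ref{claim:goodpair} gives conditions~2 and~3. Your proposal follows exactly this route, but pauses to check that the negation really matches conditions~2--3. You correctly observe a wording mismatch: the first property speaks of a common neighbor \emph{in $\Vhid[\Comps]$}, while condition~2 of Lemma~\ref{lemma:not-mergeable} speaks of a common neighbor \emph{in $\Tree$}. Your argument that any common neighbor of $u,v$ must be either in $\Vhid[\Comps]$ or must itself be a root in $\Roots$ (never a non-root node of some component) is correct and is a genuine refinement over the paper's one-liner.

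The gap is in your handling of the residual case, where the common neighbor $h$ of $u,v$ lies in $\Roots$ and both $\Subtree[u],\Subtree[v]$ are leaf components. Your plan is to find distinct $u',v'\in\Roots\setminus\{u,v\}$ and leaves $x,y,x',y'$ whose induced topology separates $x$ from $y$ across a bottleneck. This cannot work. First, the scenario $\Roots=\{u,v,h\}$ is consistent with the loop invariants (take $\Tree$ a star with hidden center $h$ and four observed leaves; after one sibling merge you are exactly here), and then $|\Roots\setminus\{u,v\}|=1$, so no distinct $u',v'$ exist and $\Mergeable$ returns true vacuously. Second, even when $|\Roots|\ge 4$, any $x\in\Leaves[u]$ reaches $h$ through the branch $u$, any $y\in\Leaves[v]$ through the branch $v$, and any $x',y'$ from other components reach $h$ through branches disjoint from $u$ and $v$ (since $\Subtree[u],\Subtree[v]$ are leaf components whose only super-tree neighbor is $\Subtree[h]$). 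If $x',y'$ lie on distinct branches of $h$ the induced quartet is the star of Figure~\ref{fig:topologies}(d) and $\SQT$ returns $\bot$; if they lie on the same branch, the induced topology is $\{\{x,y\},\{x',y'\}\}$, which again does not separate $x$ from $y$. So no quartet you can form makes $\Mergeable$ output false, and the claim (as literally stated, with $\Vhid[\Comps]$ rather than $\Tree$) actually fails in this residual case. The paper's one-line proof glosses over the same point; it goes through cleanly only if one reads ``common neighbor in $\Vhid[\Comps]$'' as ``common neighbor in $\Tree$'' in the first property of Claim~\ref{claim:goodpair}, in which case the negation matches Lemma~\ref{lemma:not-mergeable} exactly and your residual case disappears.
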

\begin{proof}
This follows immediately from Lemma~\ref{lemma:not-mergeable}.
\end{proof}
Taken together, Claims~\ref{claim:goodpair} and~\ref{claim:badpair} imply
that the pair $\{u,v\} \subseteq \Roots$ selected by the first step in the
while-loop indeed exists (so the iteration does not terminate in failure)
and satisfies the properties in Claim~\ref{claim:goodpair}.

Now we consider the second step of the while-loop, which is the call to the
subroutine $\Relationship$.
\begin{claim} \label{claim:relationship}
Suppose a pair $\{u,v\}$ satisfies the properties in
Claim~\ref{claim:goodpair}.
Then $\Relationship(\Roots,\Leaves[\cdot],\Subtree[\cdot],u,v)$ returns the
correct relationship for $u$ and $v$.
Specifically:
\begin{enumerate}
\item If $u$ and $v$ share a common neighbor in $\Tree$ (and both are leaf
components relative to $\Comps$), then ``siblings'' is returned.

\item If $u$ and $v$ are neighbors in $\Tree$ and $\Subtree[v]$ is a leaf
component relative to $\Comps$ but $\Subtree[u]$ is not, then
``$u$ is parent of $v$'' is returned.

\item If $u$ and $v$ are neighbors in $\Tree$ and $\Subtree[u]$ is a leaf
component relative to $\Comps$ but $\Subtree[v]$ is not, then
``$v$ is parent of $u$'' is returned.

\item If $u$ and $v$ are neighbors in $\Tree$ and both $\Subtree[u]$ and
$\Subtree[v]$ are leaf components relative to $\Comps$, and $u$ is a leaf
in $\Tree$ but $v$ is not, then ``$v$ is parent of $u$'' is returned.

\item If $u$ and $v$ are neighbors in $\Tree$ and both $\Subtree[u]$ and
$\Subtree[v]$ are leaf components relative to $\Comps$, and $v$ is a leaf
in $\Tree$ but $u$ is not, then ``$u$ is parent of $v$'' is returned.

\item If $u$ and $v$ are neighbors in $\Tree$ and both $\Subtree[u]$ and
$\Subtree[v]$ are leaf components relative to $\Comps$, and neither $u$ nor
$v$ are leaves in $\Tree$, then ``$u$ is parent of $v$'' is returned.

\end{enumerate}
\end{claim}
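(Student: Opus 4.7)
The plan is a case analysis over the six scenarios listed in the claim. In each case I track which of ``$u\not\to v$'' and ``$v\not\to u$'' get asserted by $\Relationship$, so that the return value follows from its final if/elsif/else block. The two workhorses are Lemma~\ref{lemma:mergeable} (showing $\Mergeable = \text{true}$) and Lemma~\ref{lemma:not-mergeable} (showing $\Mergeable = \text{false}$), both applied to the refined root set $\Roots[w] := (\Roots \setminus \{w\}) \cup \{w' : w' \text{ is a child of } w \text{ in } \Subtree[w]\}$ for $w \in \{u,v\}$. By loop invariant~(1) of Lemma~\ref{lemma:loop-invariant}, $\Roots[w]$ again gives rise to a collection of disjoint rooted subtrees whose leaf sets partition $\Vobs$, so the lemmas remain applicable.

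The key structural observation is that in the super-tree induced by $\Roots[w]$, each child $w'$ of $w$ heads a leaf component, since its only external $\Tree$-edge is $\{w',w\}$ and $w$ itself has been pushed into the free-hidden set; meanwhile, the leaf-component status of every other $\Subtree[r]$ is preserved. Combined with Condition~\ref{cond:non-redundancy}, which forces any hidden root of a leaf component to have $\geq 2$ children in its subtree, this lets us read off directly when Lemmas~\ref{lemma:mergeable} and~\ref{lemma:not-mergeable} apply to the pairs $(u_1,v)$ and $(u,v_1)$ probed inside $\Relationship$.

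To illustrate, consider Case~2 ($u,v$ neighbors in $\Tree$, $\Subtree[v]$ a leaf component but $\Subtree[u]$ not). Each child $u_1$ of $u$ shares the common $\Tree$-neighbor $u$ with $v$, and both $\Subtree[u_1]$ and $\Subtree[v]$ are leaf components of the refined super-tree over $\Roots[u]$; Lemma~\ref{lemma:mergeable} then yields $\Mergeable(\Roots[u],\Leaves[\cdot],u_1,v)=\text{true}$, so ``$u\not\to v$'' is not asserted. Conversely, taking the witness pair $(x,y)\in\Leaves[u]\times\Leaves[v]$ guaranteed by property~(3) of Claim~\ref{claim:goodpair} and letting $v_1$ be the unique child of $v$ with $y\in\Leaves[v_1]$, the pair $u,v_1$ are not $\Tree$-neighbors; although they share the common neighbor $v$, $\Subtree[u]$ fails to be a leaf component, so Lemma~\ref{lemma:not-mergeable} gives $\Mergeable(\Roots[v],\Leaves[\cdot],u,v_1)=\text{false}$ and ``$v\not\to u$'' is asserted---so $\Relationship$ returns ``$u$ is parent of $v$'' as required. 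Case~1 uses two applications of the same not-mergeable argument (or the ``$u$-is-a-leaf'' shortcut when $u\in\Vobs$, which makes $\Subtree[u]=\{u\}$); Cases~3--5 are symmetric variants of these; Case~6 uses only Lemma~\ref{lemma:mergeable}, since both children-checks fit the common-neighbor leaf-component configuration and no assertion is made, so the algorithm returns the default ``$u$ is parent of $v$''.

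The main delicate step is transferring the single witness $(x,y)$ from Claim~\ref{claim:goodpair} into a witness whose relevant coordinate lies in a specific $\Leaves[v_1]$ or $\Leaves[u_1]$. This uses the fact that $\{\Leaves[w'] : w' \text{ is a child of } w \text{ in } \Subtree[w]\}$ partitions $\Leaves[w]$ whenever $w$ is hidden, which follows from loop invariant~(1) together with the observation that any $\Tree$-child of a child $w'$ of $w$ must in fact lie inside $\Subtree[w']$ (otherwise $\Subtree[w]$ would pick up a spurious external edge not routed through $w$, violating the rooted-subtree property). The remaining bookkeeping amounts to identifying which of conditions~(2)--(3) of Lemma~\ref{lemma:not-mergeable} is operative---in practice, the ``non-leaf-component'' clause of~(2) together with non-$\Tree$-adjacency of the probed pair, both of which are immediate from the hypotheses of whichever case is under consideration.
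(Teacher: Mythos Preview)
Your proposal is correct and follows essentially the same route as the paper: a case analysis tracking which of ``$u\not\to v$'' and ``$v\not\to u$'' get asserted, driven by Lemmas~\ref{lemma:mergeable} and~\ref{lemma:not-mergeable} on the refined root sets $\Roots[u]$ and $\Roots[v]$; your ``key structural observation'' is exactly Lemma~\ref{lemma:leaf-components}.

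One correction is needed. Your final paragraph asserts that the ``non-leaf-component'' clause of condition~(2) in Lemma~\ref{lemma:not-mergeable} is the operative one in every application. In Case~1 this is false: when $u$ and $v$ share a common neighbor $h$ and both $\Subtree[u],\Subtree[v]$ are leaf components, the probed pair is $(u_i,v)$ with $u_i$ a child of $u$, and \emph{both} $\Subtree[u_i]$ and $\Subtree[v]$ are leaf components relative to $\Comps[u]$ (the first by your structural observation, the second because leaf-component status is preserved). So the second disjunct of condition~(2) is unavailable, and you must instead verify the first disjunct, that $u_i$ and $v$ share no common $\Tree$-neighbor. This holds because every $\Tree$-neighbor of $u_i$ lies inside $\Subtree[u]$ (as $\Subtree[u]$ is a leaf component rooted at $u$ and $u_i$ is interior to it), while the $\Tree$-neighbors of $v$ are $h\in\Vhid[\Comps]$ and nodes of $\Subtree[v]$, none of which lie in $\Subtree[u]$. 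The paper checks exactly this. Also note that in your Case~2 sketch you should allow for $v$ being a leaf of $\Tree$ (so that step~2 of $\Relationship$ asserts ``$v\not\to u$'' directly, with no child $v_1$ to select); you handled the analogous shortcut in Case~1 but omitted it here.
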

\begin{proof}
Fix the pair $(x,y) \in \Leaves[u] \times \Leaves[v]$ guaranteed by the
third property of Claim~\ref{claim:goodpair} such that
$\sigma_k(\wh\Sig_{x,y}) \geq \theta$.
Now we consider the possible relationships between $u$ and $v$.

Suppose $u$ and $v$ share a common neighbor $h \in \Vhid[\Comps]$ in
$\Tree$, and that both $\Subtree[u]$ and $\Subtree[v]$ are leaf components
relative to $\Comps$.
We need to show that the subroutine $\Relationship$ asserts both ``$u
\not\to v$'' and ``$v \not\to u$''.
To show that ``$u \not\to v$'' is asserted, we assume $u$ is not a leaf
(otherwise ``$u \not\to v$'' is immediately asserted and we're done), let
$\{u_1,\dotsc,u_q\}$ be the children of $u$ in $\Subtree[u]$, and take
$\Roots[u]$ as defined in $\Relationship$.
By the first loop invariant, the subtrees in $\Comps[u]$ are disjoint, and
the leaf sets $\{ \Leaves[r] : r \in \Roots[u] \}$ partition $\Vobs$.
In particular, $x \in \Leaves[u_i]$ for some $i \in \{1,\dotsc,q\}$.
Since $u_i$ and $v$ are not neighbors, and do not share a common neighbor.
Therefore, by Lemma~\ref{lemma:not-mergeable},
$\Mergeable(\Roots[u],\Leaves[\cdot],u_i,v) = \text{false}$, so ``$u
\not\to v$'' is asserted.
A similar argument implies that ``$v \not\to u$'' is asserted.
Since both ``$u \not\to v$'' and ``$v \not\to u$'' are asserted, the
subroutine returns ``siblings''.

Now instead suppose $u$ and $v$ are neighbors.
First, suppose $\Subtree[u]$ is a leaf component relative to $\Comps$.
We claim that if $v$ is not a leaf, then ``$v \not\to u$'' is not asserted.
Let $\{v_1,\dotsc,v_q\}$ be the children of $v$ in $\Subtree[v]$, and take
$\Roots[v] = \{ u, v_1, \dotsc, v_q \}$ as defined in $\Relationship$.
By the first loop invariant, the subtrees in $\Comps[v]$ are disjoint, and
the leaf sets $\{ \Leaves[r] : r \in \Roots[v] \}$ partition $\Vobs$.
By Lemma~\ref{lemma:leaf-components}, $\Subtree[u]$ and $\Subtree[v_i]$ are
leaf components relative to $\Comps[v]$ for each $i \in \{1,\dotsc,q\}$.
For each $i \in \{1,\dotsc,q\}$, $\{u,v_i\}$ share $v$ as a common
neighbor, and $\Subtree[u]$ and $\Subtree[v_i]$ are both leaf components
relative to $\Comps[v]$.
Therefore by Lemma~\ref{lemma:mergeable},
$\Mergeable(\Roots[v],\Leaves[\cdot],u,v_i) = \text{true}$ for all $i \in
\{1,\dotsc,q\}$, so ``$v \not\to u$'' is not asserted.

Suppose $\Subtree[u]$ is a leaf component relative to $\Comps$ but
$\Subtree[v]$ is not.
By Lemma~\ref{lemma:supertree}, $v$ is not a leaf in $\Tree$, so as argued
above, ``$v \not\to u$'' is not asserted.
It remains to show that ``$u \not\to v$'' is asserted.
Assume $u$ is not a leaf (or else $u \not\to v$ is immediately asserted and
we're done), let $\{u_1,\dotsc,u_q\}$ be the children of $u$ in
$\Subtree[u]$, and take $\Roots[u]$ as defined in $\Relationship$.
By the first loop invariant, the subtrees in $\Comps[u]$ are disjoint, and
the leaf sets $\{ \Leaves[r] : r \in \Roots[u] \}$ partition $\Vobs$.
In particular, $x \in \Leaves[u_i]$ for some $i \in \{1,\dotsc,q\}$.
By Lemma~\ref{lemma:leaf-components}, $\Subtree[v]$ is not a leaf component
relative to $\Comps[u]$.
Moreover, $u_i$ and $v$ are not neighbors.
Therefore by Lemma~\ref{lemma:not-mergeable},
$\Mergeable(\Roots[u],\Leaves[\cdot],u_i,v) = \text{false}$, so ``$u
\not\to v$'' is asserted.
Since ``$v \not\to u$'' is not asserted but ``$u \not\to v$'' is asserted,
the subroutine returns ``$v \to u$''.
An analogous argument shows that if $\Subtree[v]$ is a leaf component
relative to $\Comps$ but $\Subtree[u]$ is not, then the subroutine returns
``$u \to v$''.

Now suppose both $\Subtree[u]$ and $\Subtree[v]$ are leaf components
relative to $\Comps$.
By assumption, leaves in $\Tree$ are only adjacent to non-leaves, so it
cannot be that both $u$ and $v$ are leaves.
Therefore at least one of $u$ and $v$ is not a leaf in $\Tree$.
Without loss of generality, say $v$ is not a leaf in $\Tree$.
Then as argued above, ``$v \not\to u$'' is not asserted.
If $u$ is a leaf, then ``$u \not\to v$'' is asserted, so the subroutine
returns ``$v \to u$''.
If $u$ is not a leaf, then by symmetry, ``$u \not\to v$'' is not asserted.
Therefore the subroutine returns ``$u \to v$''.
\end{proof}
Claim~\ref{claim:relationship} implies that the remaining steps in the
while-loop after the call to $\Relationship$ preserve the two loop
invariants, simply by construction.
\end{proof}

There is one last lemma used in the proof of
Lemma~\ref{lemma:loop-invariant}.
\begin{lemma}[Leaf components] \label{lemma:leaf-components}
Suppose the invariants in Lemma~\ref{lemma:loop-invariant} are satisfied.
Then for each $u \in \Roots$ such that $u$ is not a leaf in $\Tree$, the
leaf components relative to the collection
\[ \Comps[u]
:= (\Comps \setminus \{ \Subtree[u] \})
\ \cup \
\{ \Subtree[v] : \text{$v$ is a child of $u$ in $\Subtree[u]$} \}
\]
are
\[
\{ \Subtree[r] : r \neq u \ \wedge \
\text{$\Subtree[r]$ is a leaf component relative to $\Comps$} \}
\ \cup \
\{ \Subtree[r] : \text{$r$ is a child of $u$ in $\Subtree[u]$} \}
.
\]
\end{lemma}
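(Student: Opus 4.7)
The plan is to directly analyze the structure of $\Supertree[\Comps[u]]$ by comparing it to $\Supertree[\Comps]$. The collection $\Comps[u]$ differs from $\Comps$ only in that $\Subtree[u]$ has been replaced by the disjoint rooted subtrees $\{\Subtree[v] : v \text{ is a child of } u \text{ in } \Subtree[u]\}$, with $u$ itself now lying in $\Vhid[\Comps[u]]$. I would first verify that the hypotheses of Lemma~\ref{lemma:supertree} hold for $\Comps[u]$: by the first loop invariant, $\Subtree[u]$ is obtained by joining each $\Subtree[v]$ to $u$ via the edge $\{u,v\}$, so the children's subtrees are disjoint subtrees of $\Tree$ whose leaf sets partition $\Leaves[u]$, and replacing $\Subtree[u]$ with them preserves disjointness and the partitioning of $\Vobs$.

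Next, I would show that each $\Subtree[v]$ (for $v$ a child of $u$ in $\Subtree[u]$) is a leaf component of $\Supertree[\Comps[u]]$. Because $\Subtree[u]$ is rooted at $u$ in $\Tree$, every edge of $\Tree$ leaving $\Subtree[u]$ is incident to $u$; in particular, no node of $\Subtree[v] \subseteq \Subtree[u]$ is adjacent in $\Tree$ to anything outside $\Subtree[u]$ except possibly via $u$. Since $\Subtree[v]$ is itself rooted at $v$, the only edge of $\Tree$ leaving $\Subtree[v]$ is $\{v,u\}$. Therefore $\Subtree[v]$ has exactly one neighbor in $\Supertree[\Comps[u]]$, namely $u$, and is a leaf component.

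For the converse direction, I would argue degree preservation for the surviving subtrees: for any $r \neq u$ with $\Subtree[r] \in \Comps$, its degree in $\Supertree[\Comps[u]]$ equals its degree in $\Supertree[\Comps]$. Since $\Tree$ is a tree, there is at most one edge between $\Subtree[r]$ and any other disjoint connected subgraph, so in particular there is at most one edge in $\Tree$ between $\Subtree[r]$ and $\Subtree[u]$. After splitting, that single edge becomes a single edge to exactly one of $u$ or some $\Subtree[v]$, while all the other neighbors of $\Subtree[r]$ in $\Supertree[\Comps]$ remain unchanged. Consequently $\Subtree[r]$ is a leaf component of $\Supertree[\Comps[u]]$ if and only if it was a leaf component of $\Supertree[\Comps]$.

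Finally, Lemma~\ref{lemma:supertree} applied to $\Comps[u]$ ensures that every leaf component of $\Supertree[\Comps[u]]$ lies in $\Comps[u]$, so in particular $u \in \Vhid[\Comps[u]]$ cannot be a leaf component. Combining the three previous observations gives exactly the decomposition in the lemma statement. The main subtlety, rather than an obstacle, is invoking the tree property correctly to see that (i) the boundary of each $\Subtree[v]$ in $\Tree$ is the single vertex $v$, and (ii) $\Subtree[r]$ had at most one incident edge to $\Subtree[u]$ in $\Tree$; both rely cleanly on the loop invariant that the $\Subtree[\cdot]$'s maintained by the algorithm are bona fide rooted subtrees of $\Tree$.
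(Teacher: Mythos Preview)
Your proposal is correct and follows essentially the same approach as the paper: both argue that $\Supertree[\Comps[u]]$ is obtained from $\Supertree[\Comps]$ by replacing the node $\Subtree[u]$ with $u$ and attaching each child subtree $\Subtree[v_i]$ as a degree-one neighbor of $u$, leaving all other degrees unchanged. You are in fact more explicit than the paper in verifying both inclusions of the claimed characterization (degree preservation for $\Subtree[r]$ with $r\neq u$, and ruling out $u$ via Lemma~\ref{lemma:supertree}), whereas the paper's proof states the structural change and only explicitly draws the conclusion that each $\Subtree[v_i]$ is a leaf component.
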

\begin{proof}
Pick any $u \in \Roots$ such that $u$ is not a leaf in $\Tree$.
Let $\{v_1,\dotsc,v_q\}$ be the children of $u$ in $\Subtree[u]$.
By the first loop invariant, each $v_i$ is the
root of a subtree $\Subtree[v_i]$.
This implies that the subtrees $\{ \Subtree[v_1], \dotsc, \Subtree[v_q] \}$
are disjoint and $\{ \Leaves[v_1], \dotsc, \Leaves[v_q] \}$ partition
$\Leaves[u]$.
Therefore $\Supertree[\Comps[u]]$ is the same as $\Supertree[\Comps]$
except with the following changes.
\begin{enumerate}
\item $\Subtree[u]$ is replaced with $u$.

\item For each $i$, $\Subtree[v_i]$ is added with the edge $\{u,v_i\}$.

\end{enumerate}
This means that each $\Subtree[v_i]$ has degree one in
$\Supertree[\Comps[u]]$ and therefore is a leaf component relative to
$\Comps[u]$.
\end{proof}

\end{document}